\newtheorem{thm}{Theorem}[section]
\newtheorem{lem}[thm]{Lemma}
\newtheorem{prop}[thm]{Proposition}
\newtheorem{ass}[thm]{Assumption}
\theoremstyle{definition}
\newtheorem{rem}[thm]{Remark}
\newtheorem*{prop*}{Proposition}
\theoremstyle{remark}
\numberwithin{equation}{section}
\title{Spectral Pruning for Recurrent Neural Networks}
\author[1,a]{\rm Takashi Furuya}
\author[2,b]{\rm Kazuma Suetake}
\author[3,c]{\rm Koichi Taniguchi}
\author[4,d]{\rm Hiroyuki Kusumoto}
\author[2,e]{\rm Ryuji Saiin}
\author[2,f]{\rm Tomohiro Daimon}
\affil[1]{{\small Department of Mathematics, Hokkaido University, Japan}}
\affil[a]{{\small Email: takashi.furuya0101@gmail.com}\vspace{3mm}}
\affil[2]{{\small AISIN SOFTWARE Co., Ltd., Japan}}
\affil[b]{{\small Email: kazuma.suetake@aisin-software.com}\vspace{3mm}}
\affil[3]{{\small Advanced Institute for Materials Research, 
Tohoku University, Japan}}
\affil[c]{{\small Email: koichi.taniguchi.b7@tohoku.ac.jp}\vspace{3mm}}
\affil[4]{{\small Graduate School of Mathematics, Nagoya University, Japan}}
\affil[d]{{\small Email: mackey3141@gmail.com}\vspace{3mm}}
\affil[e]{{\small Email: ryuji.saiin@aisin-software.com}\vspace{3mm}}
\affil[f]{{\small Email: tomohiro.daimon@aisin-software.com}\vspace{3mm}}
\begin{document}

\maketitle

\begin{abstract}
Recurrent neural networks (RNNs) are a class of neural networks used in sequential tasks. 
However, in general, RNNs have a large number of parameters and involve enormous computational costs by repeating the recurrent structures in many time steps. 
As a method to overcome this difficulty, RNN pruning has attracted increasing attention in recent years, and it brings us benefits in terms of the reduction of computational cost as the time step progresses.
However, most existing methods of RNN pruning are heuristic. 
The purpose of this paper is to study the theoretical scheme for RNN pruning method. 
We propose an appropriate pruning algorithm for RNNs inspired by “spectral pruning”, and provide the generalization error bounds for compressed RNNs. 
We also provide numerical experiments to demonstrate our theoretical results and show the effectiveness of our pruning method compared with existing methods.
\end{abstract}

\section{Introduction}\label{Introduction}
Recurrent neural networks (RNNs) are a class of neural networks used in sequential tasks. However, in general, RNNs have a large number of parameters and involve enormous computational costs by repeating the recurrent structures in many time steps. These make their application difficult in edge-computing devices. 
To overcome this difficulty, RNN compression has attracted increasing attention in recent years. 
It brings us more benefits in terms of the reduction of computational costs as the time step progresses, compared to deep neural networks (DNNs) without any recurrent structure. 
There are many RNN compression methods such as pruning [\cites{Narang, tang2015pruning, Zhang, Lobacheva, wang2019acceleration, Wen2020StructuredPO, lobacheva2020structured}, low rank factorization [\cites{Kliegl, Tjandra}, quantization [\cites{Alom, Liu}, distillation [\cites{Shi, Tang}, and sparse training [\cites{liu2021selfish, liu2021efficient, dodge2019rnn, wen2017learning}. 
This paper is devoted to the pruning for RNNs, and its purpose is to provide an RNN pruning method with the theoretical background. 
\par
Recently, Suzuki et al. [\cites{Suzuki} proposed a novel pruning method with the theoretical background, called {\it spectral pruning}, for DNNs such as the fully connected and convolutional neural network architectures. The idea of the proposed method is to select important nodes for each layer by minimizing the information losses (see (2) in [\cites{Suzuki}), which can be represented by the layerwise covariance matrix. The minimization only requires linear algebraic operations. Suzuki et al. [\cites{Suzuki} also evaluated generalization error bounds for networks compressed using spectral pruning (see Theorems 1 and 2 in [\cites{Suzuki}). It was shown that generalization error bounds are controlled by the {\it degrees of freedom}, which are defined based on the eigenvalues of the covariance matrix. Hence, the characteristics of the eigenvalue distribution have an influence on the error bounds. We can also observe that in the generalization error bounds, there is a bias-variance tradeoff corresponding to compressibility. Numerical experiments have also demonstrated the effectiveness of spectral pruning.
\par
In this paper, we extend the theoretical scheme of spectral pruning to RNNs. Our pruning algorithm involves the selection of hidden nodes by minimizing the information losses, which can be represented by the time mean of covariance matrices instead of the layerwise covariance matrix which appears in spectral pruning of DNNs. We emphasize that our information losses are derived from the generalization error bound. More precisely, we show that choosing compressed weight matrices which minimize the information losses reduces the generalization error bound we evaluated in Section \ref{The error bound for the general compressed RNN} (see sentences after Theorem \ref{generalization error bound}). We also remark that Suzuki et al. [\cites{Suzuki} has not clearly mentioned anything about how the information losses are derived from. As in DNNs [\cites{Suzuki}, we can provide the generalization error bounds for RNNs compressed with our pruning and interpret the degrees of freedom and the bias-variance tradeoff.
\par
We also provide numerical experiments to compare our method with existing methods. We observed that our method outperforms existing methods,
and gets benefits from over-parameterization [\cites{chang2020provable, zhang2021understanding} (see Sections \ref{Pixel-MNIST} and \ref{Language Modeling}). In particular, our method can compress models with small degradation (see Remark \ref{regularization parameter is zero}) when we employ IRNN, which is an RNN that uses the ReLU as the activation function and initializes weights as the identity matrix and biases to zero (see [\cites{Le}).
\par
The summary of our contributions is the following:
\begin{itemize}
    \item A pruning algorithm for RNNs (Section \ref{Pruning algorithm}) is proposed by the analysis of generalization error (Remark \ref{approx is bounded by infor} and Theorem \ref{generalization error bound for SP}).
    
    \item The generalization error bounds for RNNs compressed with our pruning algorithm are provided (Theorem \ref{generalization error bound for SP}).
\end{itemize}
\section{Related Works}\label{Related Works}
One of the popular compression methods for RNNs is pruning that removes redundant weights based on certain criteria.  For example, magnitude-based weight pruning [\cites{Narang, narang2017block, tang2015pruning} involves pruning trained weights that are less than the threshold value decided by the user. This method has to gradually repeat pruning and retraining weights to ensure that a certain accuracy is maintained. 
However, based on recent developments, the costly repetitions might not always be necessary. In one-shot pruning [\cites{Zhang, lee2018snip}, weights are pruned once prior to training from the spectrum of the recurrent Jacobian. Bayesian sparsification [\cites{Lobacheva, molchanov2017variational} induce sparse weight matrix by choosing the prior as log-uniform distribution, and weights are also once pruned if the variance of the posterior over weight is large. 
\par 
While the above methods are referred to as weight pruning, our spectral pruning is a structured pruning where redundant nodes are removed. The advantage of the structured pruning over the weight pruning is that it more simply reduces computational costs. The implementation advantages of structured pruning are illustrated in [\cites{wang2019acceleration}.
Although weight pruning from large networks to small networks is less likely to degrade accuracy, it usually requires an accelerator for addressing sparsification (see [\cites{Parashar}). 
The structured pruning methods discussed in [\cites{wang2019acceleration, Wen2020StructuredPO, lobacheva2020structured} induce sparse weight matrices in the training process, and prune weights close to zero, and does not repeat fine-tuning. 
In our pruning, weight matrices are trained by the usual way, and compressed weight matrices consist of the multiplication of the trained weight matrix and the reconstruction matrix, and no need to repeat  pruning and fine-tuning. The idea of the multiplication of the trained weight matrix and the reconstruction matrix is a similar idea to low rank factorization [\cites{Kliegl, Tjandra, prabhavalkar2016compression, grachev2019compression, denil2013predicting}. 
In particular, the work [\cites{denil2013predicting} is most related to spectral pruning, and it employs the reconstruction matrix replacing the empirical covariance matrix with kernel matrix (see Section 3.1 in [\cites{denil2013predicting}).
\par
In general, RNN pruning is more difficult than DNN pruning, because recurrent architectures are not robust to pruning, that is, even a little pruning causes accumulated errors and  total errors increase significantly for many time steps. Such a peculiar problem for recurrent feature is also observed in dropout (see Introduction in [\cites{Gal, Zaremba}). 
\par
Our motivation is to theoretically propose the RNN pruning algorithm. Inspired by [\cites{Suzuki}, we focus on the generalization error bound, and we provide the algorithm so that the generalization error bound becomes smaller. Thus, the derivation of our pruning method would be theoretical, while that of existing methods such as the magnitude-based pruning [\cites{Narang, narang2017block, tang2015pruning, wang2019acceleration, Wen2020StructuredPO} would be heuristic. For the study of the generalization error bounds for RNNs, we refer to [\cites{tu2019understanding, Chen, akpinar2019sample, joukovsky2021generalization}.
\section{Pruning Algorithm}\label{Pruning algorithm}
We propose a pruning algorithm for RNNs inspired by [\cites{Suzuki}. See Appendix~\ref{Review of the spectral pruning} for a review of spectral pruning for DNNs. Let $D=\{(X^{i}_{T}, Y^{i}_{T})\}_{i=1}^{n}$ be the training data with time series sequences  $X^{i}_{T}=(x^{i}_{t})_{t=1}^{T}$ and $Y^{i}_{T}=(y^{i}_{t})_{t=1}^{T}$, where $x^{i}_{t} \in \mathbb{R}^{d_x}$ is an input and $y^{i}_{t} \in \mathbb{R}^{d_y}$ is an output at time $t$. The training data are independently identically distributed. To train the appropriate relationship between input $X_T=(x_{t})_{t=1}^{T}$
and output $Y_T=(y_{t})_{t=1}^{T}$, we consider RNNs $f=(f_t)_{t=1}^{T}$ as
\[
f_{t}=W^{o}h_{t}+b^{o}, \quad h_{t}=\sigma(W^{h}h_{t-1}+W^{i}x_{t}+b^{hi}),
\]
for $t=1,\ldots,T$, where $\sigma:\mathbb{R} \to \mathbb{R}$ is an activation function, $h_{t} \in \mathbb{R}^{m}$ is the hidden state with the initial state $h_{0}=0$, $W^{o} \in \mathbb{R}^{d_y \times m}$, $W^{h} \in \mathbb{R}^{m \times m}$, and $W^{i} \in \mathbb{R}^{m \times d_{x}}$ are weight matrices, and $b^{o} \in \mathbb{R}^{d_y}$ and $b^{hi} \in \mathbb{R}^{m}$ are biases. Here, an element-wise activation operator is employed, i.e., we define $\sigma(x) := (\sigma(x_1),\ldots, \sigma(x_m))^{T}$ 
for $x = (x_1,\ldots, x_m) \in \mathbb{R}^{m}$.
\par
Let $\widehat{f}=(\widehat{f}_{t})_{t=1}^{T}$ be a trained RNN obtained from the training data $D$ with weight matrices $\widehat{W}^{o} \in \mathbb{R}^{d_y \times m}$, $\widehat{W}^{h} \in \mathbb{R}^{m \times m}$, and $\widehat{W}^{i} \in \mathbb{R}^{m \times d_x}$, and biases $\widehat{b}^{o} \in  \mathbb{R}^{d_y}$ and $\widehat{b}^{hi} \in  \mathbb{R}^{m}$, i.e., $\widehat{f}_{t}=\widehat{W}^{o}\widehat{h}_{t}+\widehat{b}^{o}$,  $\widehat{h}_{t}=\sigma(\widehat{W}^{h}\widehat{h}_{t-1}+\widehat{W}^{i}x_{t}+\widehat{b}^{hi})$ for $t=1,\ldots,T$. We denote the hidden state $\widehat{h}_{t}$ by
\[
\widehat{h}_{t} = \phi(x_t, \widehat{h}_{t-1}),
\]
as a function with inputs $x_t$ and $\widehat{h}_{t-1}$. Our aim is to compress the trained network $\widehat{f}$ to the smaller network $f^{\sharp}$ without loss of performance to the extent possible. 
\par
Let $J \subset [m]$ be an index set with $|J|=m^{\sharp}$, where $[m]:=\{1,\ldots,m \}$, and let $m^{\sharp} \in \mathbb{N}$ be the number of hidden nodes for a compressed RNN $f^{\sharp}$ with $m^{\sharp} \leq m$. 
We denote by $\phi_{J}(x_t, \widehat{h}_{t-1})=(\phi_{j}(x_t, \widehat{h}_{t-1}))_{j \in J}$ the subvector of $\phi(x_t, h_{t-1})$ corresponding to the index set $J$, where $\phi_{j}(x_t, \widehat{h}_{t-1})$ represents the $j$-th components of the vector $\phi(x_t, \widehat{h}_{t-1})$.
\par
{\bf (i) Input information loss.} 
The input information loss is defined by
\begin{equation}
L^{(A)}_{\tau}(J):= \min_{A \in \mathbb{R}^{m \times m^{\sharp} } } \big\{ \| \phi - A \phi_{J} \|^{2}_{n,T} + \| A \|^{2}_{\tau} \big\}, \label{input information loss}
\end{equation}
where $\| \cdot \|_{n,T}$ is the empirical $L^2$-norm with respect to $n$ and $t$, i.e., 
\[
\begin{split}
& \| \phi - A\phi_{J} \|^{2}_{n,T} \\
&
:=\frac{1}{nT} \sum_{i=1}^{n}\sum_{t=1}^{T} \big\| \phi(x^{i}_{t}, \widehat{h}^{i}_{t-1}) - A \phi_{J}(x^{i}_{t}, \widehat{h}^{i}_{t-1}) \big\|^{2}_{2},
\end{split}
\]
where $\| \cdot \|_{2}$ is the Euclidean norm, $\| A \|^{2}_{\tau}:=\mathrm{Tr}[AI_{\tau}A^{T}]$ for the regularization parameter $\tau \in \mathbb{R}^{m^{\sharp}}_{+}:=\{x \in \mathbb{R}^{m^{\sharp}_{l}} \,|\, x_{j}>0, \ j=1,\ldots,m^{\sharp}_{l} \}$, and $I_{\tau}:=\text{diag}(\tau)$. Here, $\widehat{\Sigma}_{I,I'} \in \mathbb{R}^{K \times H}$ denotes the submatrix of $\widehat{\Sigma}$ corresponding to the index sets $I, I' \subset [m]$ with $|I|=K$, $|I'|=H$, i.e., $\widehat{\Sigma}_{I,I'} = (\widehat{\Sigma}_{i,i'})_{i\in I,i'\in I'}$. Based on the linear regularization theory (see e.g., [\cites{gockenbach2016linear}), there exists a unique solution $\widehat{A}_{J} \in \mathbb{R}^{m \times m^{\sharp}}$ of the minimization problem of $\| \phi - A \phi_{J} \|^{2}_{n,T} + \| A \|^{2}_{\tau}$, which has the form
\begin{equation}
\widehat{A}_{J}=\widehat{\Sigma}_{[m],J}\big(\widehat{\Sigma}_{J,J}+I_{\tau} \big)^{-1}, \label{A_J}
\end{equation}
where $\widehat{\Sigma}$ is the (noncentered) empirical covariance matrix of the hidden state $\phi(x_t, \widehat{h}_{t-1})$ with respect to $n$ and $t$, i.e., 
\begin{equation}\label{mean-covariance}
\widehat{\Sigma}=\frac{1}{nT}\sum_{i=1}^{n}\sum_{t=1}^{T}\phi(x^{i}_t, \widehat{h}^{i}_{t-1})\phi(x^{i}_t, \widehat{h}^{i}_{t-1})^{T}.
\end{equation}
We term the unique solution $\widehat{A}_{J}$ as the {\it reconstruction matrix}. Here, we would like to emphasize that the mean of the covariance matrix with respect to time $t$ is employed in RNNs, while the layerwise covariance matrix is employed in DNNs (see Appendix \ref{Review of the spectral pruning}). By substituting the
explicit formula of the reconstruction matrix $\widehat{A}_{J}$ into (\ref{input information loss}), the input information loss is reformulated as:
\begin{equation}
L^{(A)}_{\tau}(J)=\mathrm{Tr}\Big[ \widehat{\Sigma} - \widehat{\Sigma}_{[m],J}\big( \widehat{\Sigma}_{J,J}+I_{\tau} \big)^{-1}\widehat{\Sigma}_{J,[m]} \Big]. \label{reformulate input infor}
\end{equation}
\par
{\bf (ii) Output information loss.} The hidden state of a RNN is forwardly propagated to the next hidden state or output, and hence, the two output information losses are defined by
\begin{equation}
L^{(B,o)}_{\tau}(J)
:=\sum_{j=1}^{d_y}\min_{\beta \in \mathbb{R}^{m^{\sharp}}} \Big\{ \big\| \widehat{W}^{o}_{j,:} \phi - \beta^{T} \phi_{J} \big\|^{2}_{n,T} + \big\| \beta^{T} \big\|^{2}_{\tau} \Big\}, \label{output information loss-o}
\end{equation}
\begin{equation}
L^{(B,h)}_{\tau}(J):= \sum_{j \in J} \min_{\beta \in \mathbb{R}^{m^{\sharp}}} \Big\{ \big\| \widehat{W}^{h}_{j,:} \phi - \beta^{T} \phi_{J} \big\|^{2}_{n,T} + \big\| \beta^{T} \big\|^{2}_{\tau} \Big\}, \label{output information loss-h}
\end{equation}
where $\widehat{W}^{o}_{j,:}$ and $\widehat{W}^{h}_{j,:}$ denote the $j$-th rows of the matrix $\widehat{W}^{h}$ and $\widehat{W}^{o}$, respectively. Then, the unique solutions of the minimization problems of $ \| \widehat{W}^{o}_{j,:} \phi - \beta^{T} \phi_{J} \|^{2}_{n,T}$ $+$ $\| \beta^{T} \|^{2}_{\tau}$ and $\| \widehat{W}^{h}_{j,:} \phi - \beta^{T} \phi_{J} \|^{2}_{n,T}$ $+$ $\| \beta^{T} \|^{2}_{\tau}$ are $\widehat{\beta}^{o}=(\widehat{W}^{o}_{j,:}\widehat{A}_{J})^{T}$ and $\widehat{\beta}^{h}_{j}=(\widehat{W}^{h}_{j,:}\widehat{A}_{J})^{T}$, respectively. By substituting them into (\ref{output information loss-o}) and (\ref{output information loss-h}), the output information losses are reformulated as
\begin{equation}
\begin{split}
&L^{(B,o)}_{\tau}(J) \\
&
 = \mathrm{Tr}\bigg[ \widehat{W}^{o} \Big(\widehat{\Sigma} - \widehat{\Sigma}_{[m],J}\big( \widehat{\Sigma}_{J,J}+I_{\tau} \big)^{-1}\widehat{\Sigma}_{J,[m]} \Big)\widehat{W}^{o^{T}} \bigg], \label{reformulate o output infor}
\end{split}
\end{equation}
\begin{equation}
\begin{split}
&L^{(B,h)}_{\tau}(J) \\
&
= \mathrm{Tr}\bigg[ \widehat{W}^{h}_{J, [m]} \Big(\widehat{\Sigma} - \widehat{\Sigma}_{[m],J}\big( \widehat{\Sigma}_{J,J}+I_{\tau} \big)^{-1}\widehat{\Sigma}_{J,[m]} \Big)\widehat{W}^{h^{T}}_{J, [m]} \bigg]. \label{reformulate h output infor}
\end{split}
\end{equation}
Here, we remark that the output information losses $L^{(B,o)}_{\tau}(J)$ and $L^{(B,h)}_{\tau}(J)$ are bounded above by the input information loss $L^{(A)}_{\tau}(J)$ (see Remark \ref{approx is bounded by infor}).
\par
{\bf (iii) Compressed RNNs.} We construct the compressed RNN $f^{\sharp}_{J}$ by $f^{\sharp}_{J,t}=W^{\sharp o}_{J}h^{\sharp}_{J,t}+b^{\sharp o}_{J}$ and $h^{\sharp}_{J,t}=\sigma(W^{\sharp h}_{J}h^{\sharp}_{J,t-1}+W^{\sharp i}_{J}x_{t}+b^{\sharp hi}_{J})$ for $t=1,\ldots,T$, where $W^{\sharp o}_{J}:=\widehat{W}^{o}\widehat{A}_{J}$, $W^{\sharp h}_{J}:=\widehat{W}^{h}_{J,[m]}\widehat{A}_{J}$, $W^{\sharp i}_{J}:=\widehat{W}^{i}_{J,[d_x]}$, $b^{\sharp hi}_{J}:=\widehat{b}^{hi}_{J}$, and $b^{\sharp o}_{J}:=\widehat{b}^{o}$.
\par
{\bf (iv) Optimization.} 
To select an appropriate index set $J$, we consider the following optimization problem that minimizes the convex combination of the input and two output information losses:
\begin{equation}
\min_{\scriptsize{\begin{array}{c}J \subset [m] \\ s.t.\ |J|=m^{\sharp} \end{array}}} \left\{\theta_1L^{(A)}_{\tau}(J) + \theta_2 L^{(B,o)}_{\tau}(J)+ \theta_3L^{(B,h)}_{\tau}(J) \right\}, \label{optimization-RNN}
\end{equation}
for $\theta_1, \theta_2, \theta_3 \in [0,1]$ with $\theta_1 + \theta_2 + \theta_3=1$, 
where $m^{\sharp}_{l} \in [m]$ is a prespecified number.
The optimal index $J^{\sharp}$ is obtained by the greedy algorithm. 
We term this method as {\it spectral pruning} (for a schematic diagram of spectral pruning, see Figure \ref{schematic diagram}).
The reason why information losses are employed in the objective will be theoretically explained later, when the error bounds in Remark \ref{approx is bounded by infor} and Theorem \ref{generalization error bound} are provided. We summarize our pruning algorithm in the following.
\par
\begin{algorithm}[t]             
\caption{Spectral pruning}         
\label{alg1}                          
\begin{algorithmic}[1]       
\REQUIRE  Data set $D=\{(x_{n}, y_{n})\}_{n=1}^{N} \subset \mathbb{R}^{d_x}\times \mathbb{R}^{d_y}$, Trained RNN $\widehat{f}=(\widehat{f}_{t})_{t=1}^{T}$ with $\widehat{f}_{t}=\widehat{W}^{o}\widehat{h}_{t}+\widehat{b}^{o}$,  $\widehat{h}_{t}=\sigma(\widehat{W}^{h}\widehat{h}_{t-1}+\widehat{W}^{i}x_{t}+\widehat{b}^{hi})$, Number of hidden nodes $m$ for trained RNN $\widehat{f}$, Number of hidden nodes $m^{\sharp} \leq m$ for returned compressed RNN, Regularization parameter $\tau \in \mathbb{R}^{m^{\sharp}}_{+}$, Coefficients $\theta_1, \theta_2, \theta_3 \in [0,1]$ with $\theta_1 + \theta_2 + \theta_3=1$.
\vspace{2mm}
\STATE Minimize $ \left\{\theta_1L^{(A)}_{\tau}(J) + \theta_2 L^{(B,o)}_{\tau}(J)+ \theta_3L^{(B,h)}_{\tau}(J) \right\}$ for index $J \subset [m]$ with $|J|=m^{\sharp}$ by the greedy algorithm where $L^{(A)}_{\tau}(J)$, $L^{(B,o)}_{\tau}(J)$, and $L^{(B,h)}_{\tau}(J)$ compute (\ref{reformulate input infor}), (\ref{reformulate o output infor}), and (\ref{reformulate h output infor}), respectively. 
\vspace{1mm}
\STATE Obtain optimal $J^{\sharp}$.
\vspace{1mm}
\STATE Compute $\widehat{A}_{J^{\sharp}}$ by (\ref{A_J}).
\vspace{1mm}
\STATE Set $W^{\sharp o}_{J^{\sharp}}:=\widehat{W}^{o}\widehat{A}_{J^{\sharp}}$, $W^{\sharp h}_{J^{\sharp}}:=\widehat{W}^{h}_{J^{\sharp},[m]}\widehat{A}_{J^{\sharp}}$, $W^{\sharp i}_{J^{\sharp}}:=\widehat{W}^{i}_{J^{\sharp},[d_x]}$, $b^{\sharp hi}_{J^{\sharp}}:=\widehat{b}^{hi}_{J^{\sharp}}$, $b^{\sharp o}_{J^{\sharp}}:=\widehat{b}^{o}$.
\vspace{1mm}
\RETURN Compressed RNN $f^{\sharp}_{J^{\sharp}}=(f^{\sharp}_{J^{\sharp}, t})_{t=1}^{T}$ with $f^{\sharp}_{J^{\sharp},t}=W^{\sharp o}_{J^{\sharp}}h^{\sharp}_{J^{\sharp},t}+b^{\sharp o}_{J^{\sharp}}$ and $h^{\sharp}_{J^{\sharp},t}=\sigma(W^{\sharp h}_{J^{\sharp}}h^{\sharp}_{J^{\sharp},t-1}+W^{\sharp i}_{J^{\sharp}}x_{t}+b^{\sharp hi}_{J^{\sharp}})$.
\end{algorithmic}
\end{algorithm}
\begin{figure}[h]
\begin{tabular}{c}
\begin{minipage}{0.45\hsize}
  \begin{center}
   \includegraphics[scale=0.45]{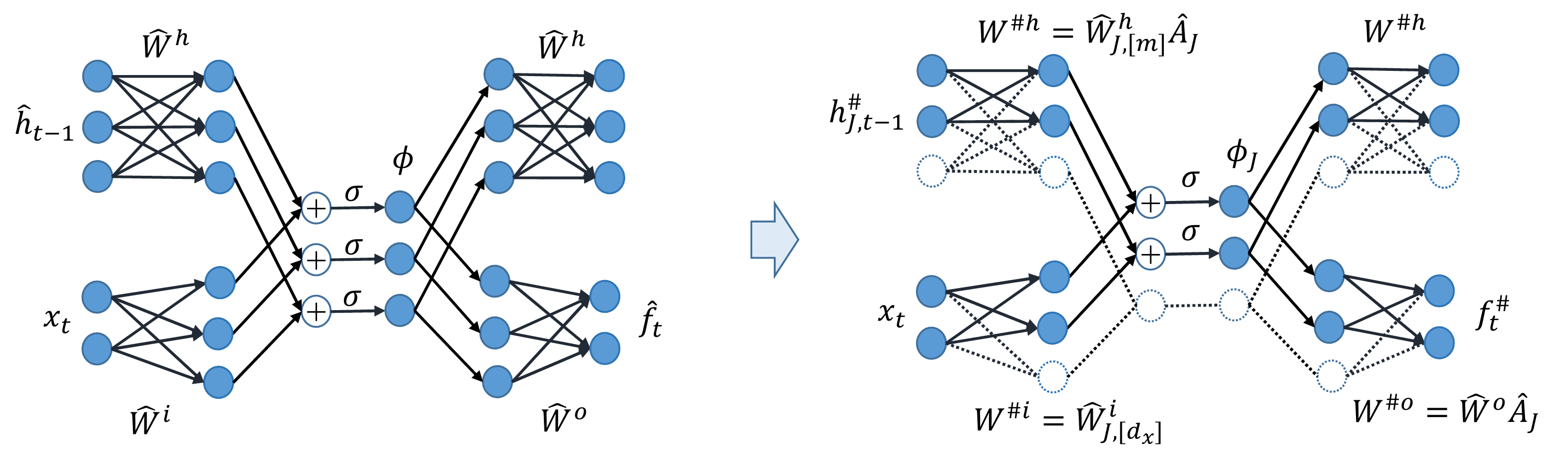}
  \end{center}
 \end{minipage}
\end{tabular}
\caption{Spectral pruning for RNN}
\label{schematic diagram}
\end{figure}
\begin{rem}
In the case of the regularization parameter $\tau=0$, spectral pruning can be applied, but the following point must be noted. In this case, the uniqueness of the minimization problem of $\left\| \phi - A \phi_{J} \right\|^{2}_{n,T}$ with respect to $A$ does not generally hold (i.e., there might be several reconstruction matrices). One of the solutions is $\widehat{A}_{J}=\widehat{\Sigma}_{[m],J}\widehat{\Sigma}_{J,J}^{\dag}$, which is the limit of (\ref{A_J}) as $\tau \to 0$, where $\widehat{\Sigma}_{J,J}^{\dag}$ is the pseudo-inverse of $\widehat{\Sigma}_{J,J}$. It should be noted that $\widehat{\Sigma}_{J,J}^{\dag}$ coincides with the usual inverse $\widehat{\Sigma}_{J,J}^{-1}$, when
$m^{\sharp}$ is smaller than or equal to the rank of the covariance matrix $\widehat{\Sigma}$.
\end{rem}
\begin{rem}\label{regularization parameter is zero}
We consider the case of the regularization parameter $\tau=0$ and $m^{\sharp} \ge m_{\mathrm{nzr}}$, where $m_{\mathrm{nzr}}$ denotes the number of non-zero rows of $\widehat{\Sigma}$. Here, we would like to remark on the relation between 
$m_{\mathrm{nzr}}$ and pruning. Let $J_{\mathrm{nzr}}$ be the index set such that $[m]\setminus J_{\mathrm{nzr}}$ corresponds to zero rows of $\widehat{\Sigma}$. Then, by the definition \eqref{mean-covariance} of $\widehat{\Sigma}$, we have for $i=1,\cdots,n$, $t=1,\cdots,T$, $v \in [m]\setminus J_{\mathrm{nzr}}$
\begin{equation*}
\phi_{v}(x_{t}^{i}, \widehat{h}^{i}_{t-1})=0, \label{non-zero rows}
\end{equation*}
which implies that $\widetilde{A}_{J_{\mathrm{nzr}}}=I_{[m],J_{\mathrm{nzr}}}$ is a trivial  solution of the minimization problem because $\| \phi - \widetilde{A}_{J_{\mathrm{nzr}}} \phi_{J_{\mathrm{nzr}}} \|^{2}_{n,T}=0$. Here, $I_{[m],J_{\mathrm{nzr}}}$ is 
the submatrix of the identity matrix corresponding to the index sets $[m]$ and $J_{\mathrm{nzr}}$.
If we choose $\widetilde{A}_{J_{\mathrm{nzr}}}=I_{[m],J_{\mathrm{nzr}}}$ as the reconstruction matrix, then the trivial compressed weights can be obtained by simply removing the columns corresponding to $[m]\setminus J_{\mathrm{nzr}}$, i.e., $W^{\sharp o}:=\widehat{W}^{o}\widetilde{A}_{J_{\mathrm{nzr}}}=\widehat{W}^{o}_{[m],J_{\mathrm{nzr}}}$ and $W^{\sharp h}:=\widehat{W}^{h}_{J_{\mathrm{nzr}},[m]}\widetilde{A}_{J_{\mathrm{nzr}}}=\widehat{W}^{h}_{J_{\mathrm{nzr}}, J_{\mathrm{nzr}}}$, and its network $f^{\sharp}_{J_{\mathrm{nzr}}}$ coincides with the trained network $\widehat{f}$ for training data, i.e., for $i=1,\cdots,n$, $t=1,\cdots,T$
\[
f^{\sharp}_{J_{\mathrm{nzr}},t}(X^{i}_{t})=\widehat{f}_{t}(X^{i}_{t})
\]
which means that the trained RNN is compressed to size $m^{\sharp}$ without degradation. On the other hand, in the case of $m^{\sharp}<m_{\mathrm{nzr}}$, $\widetilde{A}_{J}=I_{[m],J}$ is not a solution of the minimization problem for any choice of the index $J$, which means that the compressed network using $\widehat{A}_{J}=\widehat{\Sigma}_{[m],J}\widehat{\Sigma}_{J,J}^{\dag}$ is closer to the trained network than that using $\widetilde{A}_{J}=I_{[m],J}$. Therefore, spectral pruning essentially contributes to compression when $m^{\sharp}<m_{\mathrm{nzr}}$.
\end{rem}
\section{Generalization Error Bounds for Compressed RNNs}\label{Generalization error bounds}
In this section, we discuss the generalization error bounds for compressed RNNs. In Subsection~\ref{The error bound for the general compressed RNN}, the error bounds for general compressed RNNs are evaluated to explain the reason for deriving spectral pruning discussed in Section \ref{Pruning algorithm} in the error bound term. In Subsection \ref{The error bound for RNNs compressed by spectral pruning}, the error bounds for RNNs compressed with spectral pruning are evaluated.
\subsection{Error bound for general compressed RNNs}\label{The error bound for the general compressed RNN}
Let $(X^{i}_{T}, Y^{i}_{T})$ be the training data generated independently identically from the true distribution $P_T$, and let $f^{\sharp}$ be a general compressed RNN, and assume that it belongs to the following function space:
\[
\begin{split}
\mathcal{F}^{\sharp}_{T} &= \mathcal{F}^{\sharp}_{T}(R_o, R_h, R_i, R^{b}_{o},R^{b}_{hi})\\
&
:= \bigg\{f^{\sharp}\, \Big|\, f^{\sharp}(X_T)=(f^{\sharp}_{t}(X_t))_{t=1}^{T},\\
& f^{\sharp}_{t}(X_t)=(W^{\sharp o}\sigma(\cdot) +b^{\sharp o})\circ(W^{\sharp h}\sigma(\cdot)+W^{\sharp i}x_{t}+b^{\sharp hi} ) \circ \\
&  \cdots \circ(W^{\sharp h}\sigma(\cdot)+W^{\sharp i}x_{2}+b^{\sharp hi} ) \circ (W^{\sharp i}x_{1}+b^{\sharp hi} ) \\
&
\text{ for }X_T \in \mathrm{supp}(P_{X_T}),\ \big\|W^{\sharp o} \big\|_{F} \leq R_{o},\ \big\|W^{\sharp h} \big\|_{F} \leq R_{h}, \\
& \big\|W^{\sharp i} \big\|_{F} \leq R_{i}, \
\big\|b^{\sharp o} \big\|_{2} \leq R^{b}_{o},\ \big\|b^{\sharp hi} \big\|_{2} \leq R^{b}_{hi} \bigg\},
\end{split}
\]
where $P_{X_T}$ is the marginal distribution of $P_T$ with respect to $X_T$, and $R_{o}$, $R_{h}$, $R_{i}$, $R^{b}_{o}$, $R^{b}_{hi}$ are the upper bounds of the compressed weights $W^{\sharp o} \in \mathbb{R}^{d_y \times m^{\sharp}}$, $W^{\sharp h} \in \mathbb{R}^{m^{\sharp} \times m^{\sharp}}$, $W^{\sharp i} \in \mathbb{R}^{m^{\sharp} \times d_x}$, biases $b^{\sharp o} \in \mathbb{R}^{d_y}$, and $b^{\sharp hi} \in \mathbb{R}^{m^{\sharp}}$, respectively. Here, $\| \cdot \|_{F}$ denotes the Frobenius norm.
\begin{ass}\label{assmption1}
The following assumptions are made: {\bf (i)} The marginal distribution $P_{x_t}$ of $P_T$ with respect to $x_t$ is bounded, i.e., there exist a constant $R_x$ independent of $t$ such that $\|x_{t} \|_{2} \leq R_{x}$ for all $x_t \in \mathrm{supp}(P_{x_t})$ and $t=1,\ldots,T.$
{\bf (ii)} The activation function $\sigma:\mathbb{R} \to \mathbb{R}$ satisfies $\sigma(0)=0$ and $|\sigma(t) - \sigma(s)| \leq \rho_{\sigma} |t-s|$ for all $t, s \in \mathbb{R}.$

\end{ass}
Under these assumptions, we obtain the following approximation error bounds between the trained network $\widehat{f}$ and compressed networks $f^{\sharp}$.
\begin{prop}\label{approximation error bound}
Let Assumption \ref{assmption1} hold. Let $(X_{T}^{1}, Y_{T}^{1}),\ldots,(X_{T}^{n}, Y_{T}^{n})$ be sampled i.i.d. from the distribution $P_T$. Then, for all $f^{\sharp} \in \mathcal{F}^{\sharp}_{T}$ and $J \subset [m]$ with $|J|=m^{\sharp}$, we have
\begin{equation}\label{approximation error bound-1}
\begin{split}
&\big\|\widehat{f}-f^{\sharp}\big\|_{n,T}
 \le \big\| \widehat{W}^{o}\phi-W^{\sharp o}\phi_{J} \big\|_{n,T}\\
+ & \, \big\| \widehat{W}^{h}_{J,[m]}\phi-W^{\sharp h}\phi_{J} \big\|_{n,T} + \big\|\widehat{W}^{i}_{J,[d_x]} - W^{\sharp i}\big\|_{op}  \\
 + & \, \big\| \widehat{b}^{hi}_{J} - b^{\sharp hi} \big\|_{2} + \big\|\widehat{b}^{o}-b^{\sharp o}\big\|_{2}.
\end{split}
\end{equation}
\end{prop}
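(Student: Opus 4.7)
The strategy is a two-level add-and-subtract decomposition: first at the output layer, and then recursively inside the hidden recurrence. Throughout, define the auxiliary $m^\sharp$-dimensional trajectory $\widetilde{h}_t := \phi_J(x_t, \widehat{h}_{t-1})$, i.e.\ the $J$-components of the trained hidden state at time $t$, so that $\widetilde{h}_t$ lives in the same space as $h^\sharp_t$ and $\widetilde{h}_0 = h^\sharp_0 = 0$.

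\textbf{Step 1 (Output decomposition).} Inserting $\pm W^{\sharp o}\widetilde{h}_t$ into $\widehat{f}_t - f^\sharp_t$ gives
\begin{equation*}
\widehat{f}_t - f^\sharp_t \;=\; \bigl[\widehat{W}^{o}\phi - W^{\sharp o}\phi_J\bigr](x_t,\widehat{h}_{t-1}) \;+\; W^{\sharp o}\bigl(\widetilde{h}_t - h^\sharp_t\bigr) \;+\; (\widehat{b}^{o} - b^{\sharp o}).
\end{equation*}
Triangle inequality for $\|\cdot\|_{n,T}$ immediately yields the first and last RHS terms of \eqref{approximation error bound-1}; it remains to control $\|\widetilde{h}-h^\sharp\|_{n,T}$.

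\textbf{Step 2 (Hidden-state recursion).} Writing $\widetilde{h}_t$ and $h^\sharp_t$ through their defining $\sigma(\cdot)$, applying Lipschitzness (Assumption \ref{assmption1}(ii)) and inserting $\pm(W^{\sharp h}\widetilde{h}_{t-1}+W^{\sharp i}x_t)$ inside the argument, one obtains
\begin{equation*}
\begin{split}
\bigl\|\widetilde{h}_t - h^\sharp_t\bigr\|_2 \;\le\; \rho_\sigma\Bigl\{ & \bigl\|[\widehat{W}^{h}_{J,[m]}\phi - W^{\sharp h}\phi_J](x_{t-1},\widehat{h}_{t-2})\bigr\|_2 + \bigl\|W^{\sharp h}\bigr\|_{op}\bigl\|\widetilde{h}_{t-1} - h^\sharp_{t-1}\bigr\|_2 \\
& + \bigl\|\widehat{W}^{i}_{J,[d_x]} - W^{\sharp i}\bigr\|_{op}\|x_t\|_2 + \bigl\|\widehat{b}^{hi}_J - b^{\sharp hi}\bigr\|_2\Bigr\}.
\end{split}
\end{equation*}
This is precisely the recursion whose fixed contributions match the four remaining RHS terms of \eqref{approximation error bound-1}.

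\textbf{Step 3 (Unrolling and $L^2$-aggregation).} Unrolling the recursion down to $\widetilde{h}_0 - h^\sharp_0 = 0$, taking $\|\cdot\|_{n,T}$, and using Assumption \ref{assmption1}(i) to bound $\|x_t\|_2 \le R_x$, each piece contributes to exactly one term on the RHS: the ``$\widehat{W}^{h}_{J,[m]}\phi - W^{\sharp h}\phi_J$'' pieces assemble (after a harmless time-shift compatible with $\widehat{h}_0 = 0$) into the corresponding $\|\cdot\|_{n,T}$-norm, the ``$\widehat{W}^{i}_{J,[d_x]} - W^{\sharp i}$'' pieces factor out the operator norm, and the bias pieces are time-independent and pass through the empirical norm unchanged.

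\textbf{Main obstacle.} The real difficulty is the geometric accumulation in Step 3: unrolling $T$ times produces a factor $\sum_{k=0}^{T-1}(\rho_\sigma\|W^{\sharp h}\|_{op})^k$ together with a leading $\|W^{\sharp o}\|_{op}$ and factors of $R_x$. The clean form of \eqref{approximation error bound-1} should therefore be read with these $\rho_\sigma, R_x, \|W^{\sharp o}\|_{op}, \|W^{\sharp h}\|_{op}, T$-dependent multipliers absorbed into implicit constants (equivalently, under normalizations making each factor $\le 1$); carrying them explicitly only affects presentation, not the argument, and the interpretation used later in Theorem \ref{generalization error bound} is that minimizing the RHS information-loss type norms drives $\|\widehat{f}-f^\sharp\|_{n,T}$ down.
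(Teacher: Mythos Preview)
Your proposal is correct and follows essentially the same route as the paper's proof: the same output-layer add-and-subtract via $\phi_J(x_t,\widehat{h}_{t-1})$, the same Lipschitz-driven recursive inequality on the hidden-state discrepancy, and the same unrolling with geometric factors absorbed into the implicit $\lesssim$-constant. The only place where the paper is more explicit than your Step~3 is the passage from the unrolled pointwise bound to the $\|\cdot\|_{n,T}$-norm: it uses $(\sum_{k=1}^K a_k)^2 \le K\sum_{k=1}^K a_k^2$ together with $\sum_{t=1}^T(\sum_{l=1}^t a_l)^2 \le T^2\sum_{t=1}^T a_t^2$ to reassemble the time-shifted pieces into $\|\widehat{W}^h_{J,[m]}\phi - W^{\sharp h}\phi_J\|_{n,T}$, which is exactly the ``harmless time-shift'' you allude to and the source of the $T$-factor you flag in your obstacle paragraph.
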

Here, $\lesssim$ implies that the left-hand side in (\ref{approximation error bound-1}) is bounded above by the right-hand side times a constant independent of the trained weights and biases $\widehat{W}$, $\widehat{b}$ and compressed weights and biases $W^{\sharp}$, $b^{\sharp}$. The proof is given by direct computations. For the exact statement and proof, see Appendix \ref{proof of approximation}. 
\begin{rem}\label{approx is bounded by infor}
Let $f^{\sharp}_{J}$ be the network compressed using the reconstruction matrix (see (iii) in Section~\ref{Pruning algorithm}). By applying Proposition \ref{approximation error bound} as $f^{\sharp}=f^{\sharp}_{J}$, we obtain
\begin{equation}\label{approx<infor}
\begin{split}
& \big\|\widehat{f}-f^{\sharp}_{J}\big\|^{2}_{n,T} \lesssim  
\underbrace{\big\| \widehat{W}^{o}\phi-\widehat{W}^{o}\widehat{A}_{J}\phi_{J} \big\|^{2}_{n,T}+\big\|\widehat{W}^{o}\widehat{A}_{J}\big\|^{2}_{\tau} }_{=L^{(B,o)}_{\tau}(J)}\\
&
+ \underbrace{\big\| \widehat{W}^{h}_{J,[m]}\phi-\widehat{W}^{h}_{J,[m]}\widehat{A}_{J}\phi_{J} \big\|^{2}_{n,T}+\big\|\widehat{W}^{h}_{J,[m]}\widehat{A}_{J}\big\|^{2}_{\tau}}_{=L^{(B,h)}_{\tau}(J)}\\
& \le 
\left(\big\| \widehat{W}^{o} \big\|^{2}_{F}+\big\| \widehat{W}^{h}_{J,[m]}\big\|^{2}_{F}\right) \underbrace{\left(\big\|\phi-\widehat{A}_{J}\phi_{J} \big\|^{2}_{n,T}+\big\|\widehat{A}_{J}\big\|^{2}_{\tau}\right)}_{= L^{(A)}_{\tau}(J)}, 
\end{split}
\end{equation}
i.e., the approximation error is bounded above by the input information loss.
\end{rem}
\par
For the RNN $f=(f_{t})_{t=1}^{T}$, the training error with respect to the $j$-th component of the output is defined as
\[
\widehat{\Psi}_{j}(f):=\frac{1}{nT} \sum_{i=1}^{n} \sum_{t=1}^{T}\psi(y^{i}_{t,j}, f_{t}(X^{i}_{t})_{j}),
\]
where $X_{t}=(x_{t})_{t=1}^{t}$ and $\psi: \mathbb{R} \times \mathbb{R} \to \mathbb{R}_{+}$ is a loss function. The generalization error with respect to the $j$-th component of the output is defined as
\[
\Psi_{j}(f):=E\bigg[\frac{1}{T}\sum_{t=1}^{T} \psi(y_{t,j}, f_{t}(X_{t})_{j})\bigg],
\]
where the expectation is taken with respect to $(X_T, Y_T) \sim P_T$. 
\begin{ass}\label{assmption2}
The following assumptions are made: {\bf (i)} The loss function $\psi(y_{t,j}, 0)$ is bounded, i.e.,  there exists a constant $R_{y}$ such that $|\psi(y_{t,j}, 0)| \leq R_y$ for all $y_{t,j} \in \mathrm{supp}(P_{y_{t,j}})$, $t=1,\ldots,T$, $j=1,\ldots,d_y$.
{\bf (ii)} $\psi$ is $\rho_{\psi}$-Lipschitz continuous, i.e., 
$|\psi(y, f) - \psi(y, g)| \leq \rho_{\psi} |f-g|$ for all $y, f, g \in \mathbb{R}.$
\end{ass}
We obtain the following generalization error bound for $f^{\sharp} \in \mathcal{F}_{T}^{\sharp}(R_o, R_h, R_i, R^{b}_{o},R^{b}_{hi})$.
\begin{thm}\label{generalization error bound}
Let Assumptions \ref{assmption1} and \ref{assmption2} hold, and let $(X_{T}^{1}, Y_{T}^{1}),\ldots,(X_{T}^{n}, Y_{T}^{n})$ be sampled i.i.d. from the distribution $P_T$. Then, for any $\delta \geq \log2$, we have the following inequality with probability greater than $1-2e^{-\delta}$:
\begin{equation}\label{general error bound-1}
\begin{split}
 & \Psi_{j}(f^{\sharp})
\lesssim \widehat{\Psi}_{j}(\widehat{f}) + \Big\{
\big\| \widehat{W}^{o}\phi-W^{\sharp o}\phi_{J} \big\|_{n,T}\\
& + 
 \big\| \widehat{W}^{h}_{J,[m]}\phi-W^{\sharp h}\phi_{J}\big\|_{n,T} +\big\|\widehat{W}^{i}_{J,[d_x]} - W^{\sharp i}\big\|_{op} 
\\
& + 
\big\| \widehat{b}^{hi}_{J} - b^{\sharp hi} \big\|_{2} + \big\|\widehat{b}^{o}-b^{\sharp o}\big\|_{2} \Big\} +  \frac{1}{\sqrt{n}}(m^{\sharp})^{\frac{5}{4}} R^{1/2}_{\infty, T},
\end{split}
\end{equation}
for $j=1,\ldots,d_y$ and for all $J \subset [m]$ with $|J|=m^{\sharp}$, and $f^{\sharp} \in \mathcal{F}^{\sharp}_{T}$,
where $R_{\infty, t}$ is defined by
\[
R_{\infty, t}:=R_{o}\rho_{\sigma}(R_{i}R_{x}+R^{b}_{hi})\bigg(\sum_{l=1}^{t}(R_{h}\rho_{\sigma})^{l-1} \bigg)+R^{b}_{o}.
\]
\end{thm}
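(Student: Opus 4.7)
The plan is to decompose the excess generalization error in the standard way
\[
\Psi_j(f^\sharp) - \widehat{\Psi}_j(\widehat{f}) = \underbrace{\bigl[\Psi_j(f^\sharp) - \widehat{\Psi}_j(f^\sharp)\bigr]}_{\text{(I) generalization gap}} + \underbrace{\bigl[\widehat{\Psi}_j(f^\sharp) - \widehat{\Psi}_j(\widehat{f})\bigr]}_{\text{(II) empirical approximation gap}},
\]
and bound the two pieces separately. Term (II) is handled purely deterministically: by the $\rho_\psi$-Lipschitz property (Assumption \ref{assmption2}(ii)) the difference of empirical losses is controlled by $\|\widehat{f}-f^\sharp\|_{n,T}$, and Proposition \ref{approximation error bound} gives exactly the bracketed sum in the statement. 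So once (II) is written out, the five terms in braces of \eqref{general error bound-1} appear for free.

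For term (I), I would first derive a uniform sup-norm bound on elements of $\mathcal{F}_T^\sharp$: using $\sigma(0)=0$, $\rho_\sigma$-Lipschitzness and the Frobenius-norm constraints, a straightforward induction on $t$ gives $\|h^\sharp_t\|_2 \le \rho_\sigma(R_iR_x+R^b_{hi})\sum_{l=1}^{t}(R_h\rho_\sigma)^{l-1}$ and hence $\|f^\sharp_t\|_\infty \le R_{\infty,t}$. Combined with Assumption \ref{assmption2}(i), every loss $\psi(y_{t,j},f^\sharp_t(X_t)_j)$ is then bounded by $O(R_{\infty,T})$. Symmetrization plus McDiarmid's bounded-differences inequality applied to $\sup_{f^\sharp\in\mathcal F_T^\sharp}(\Psi_j-\widehat\Psi_j)$ reduces the problem to the Rademacher complexity of the loss class and delivers the $\sqrt{\delta/n}\,R_{\infty,T}$ tail (absorbed by the stated $(m^\sharp)^{5/4}R_{\infty,T}^{1/2}/\sqrt n$ up to the probability $1-2e^{-\delta}$). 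By Talagrand's contraction the Rademacher term is further bounded by $\rho_\psi$ times the Rademacher complexity of the coordinate projections of $\mathcal F_T^\sharp$ itself.

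The main obstacle is controlling that last Rademacher complexity for a class of RNNs of hidden width $m^\sharp$ unrolled $T$ times with shared weights $(W^{\sharp o},W^{\sharp h},W^{\sharp i},b^{\sharp o},b^{\sharp hi})$. I plan to proceed by Dudley's entropy integral, estimating the $L^2(P_n)$ covering number of $\mathcal F_T^\sharp$ in the parameters. Using the recursion for $h^\sharp_t$ together with the sup-norm bound above, perturbing the shared weights by $\varepsilon$ in operator norm changes $f^\sharp_t$ by $O(\varepsilon R_{\infty,T})$ uniformly, so an $\varepsilon$-cover in parameters gives an $O(\varepsilon R_{\infty,T})$-cover in function values. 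The effective parameter dimension is dominated by $W^{\sharp h}\in\mathbb R^{m^\sharp\times m^\sharp}$ and scales as $(m^\sharp)^2$; plugging the resulting $\log(1/\varepsilon)\cdot(m^\sharp)^2$ entropy into the Dudley integral, then balancing the $\sqrt{R_{\infty,T}}$-scale of the diameter against the entropy growth, yields the claimed $(m^\sharp)^{5/4}R_{\infty,T}^{1/2}/\sqrt n$ after the standard square-root interchange (the $5/4$ arising because only a fractional power of $m^\sharp$ survives the entropy-integral truncation, as in Theorem~1 of \cites{Suzuki}).

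Tracking the constants carefully, the exponential-in-$T$ blow-up coming from the Jacobian of the recurrent map is entirely absorbed into $R_{\infty,T}$; no further blow-up appears because the weights are shared across time, so the unrolling only affects the value bound and not the parameter count. The hardest bookkeeping step will be the RNN-specific perturbation estimate justifying the $O(\varepsilon R_{\infty,T})$ cover: one has to iterate the Lipschitz bound along the time axis and verify that the combined sensitivity to $W^{\sharp h}$, $W^{\sharp i}$ and $b^{\sharp hi}$ is controlled by $R_{\infty,T}$ rather than something worse. Everything else is standard empirical-process machinery and direct substitution from (II) above.
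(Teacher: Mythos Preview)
Your overall strategy is the same as the paper's: the same two-term decomposition, Proposition~\ref{approximation error bound} together with the $\rho_\psi$-Lipschitz assumption for term~(II), and for term~(I) exactly the chain you describe --- the recursive sup-norm bound $\|f_t^\sharp\|_2\le R_{\infty,t}$, symmetrization with a bounded-difference concentration tail (the paper quotes Theorem~3.4.5 of Gin\'e--Nickl), the Ledoux--Talagrand contraction to strip the loss, and Dudley's entropy integral combined with a parametric covering of the weights.

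The one point where your sketch goes wrong is the entropy bound and the origin of the exponents $(m^\sharp)^{5/4}$ and $R_{\infty,T}^{1/2}$. A standard parametric entropy $\log N\lesssim (m^\sharp)^2\log(1/\epsilon)$, inserted into Dudley, gives roughly $m^\sharp R_{\infty,T}/\sqrt n$ (up to logs), not the displayed rate; there is no ``entropy-integral truncation'' that turns a $\log(1/\epsilon)$ entropy into a $5/4$ power. What the paper actually does is use the covering bound $N(\{A:\|A\|_F\le R\},\epsilon,\|\cdot\|_F)\le\bigl(1+2\sqrt{\min(d_1,d_2)}\,R/\epsilon\bigr)^{d_1d_2}$ (Lemma~8 of \cite{Chen}) for each of the five parameter blocks and then applies $\log(1+x)\le x$, which yields the cruder entropy
\[
\log N(\mathcal F_{t,j}^\sharp,\epsilon,\|\cdot\|_S)\le \frac{10\,m^\sharp\sqrt n\,M_t}{\epsilon},
\]
where $M_t$ picks up a $(m^\sharp)^{3/2}$ term from the $W^{\sharp h}$ block (dimension $(m^\sharp)^2$ times the extra $\sqrt{m^\sharp}$, divided by the common factor $m^\sharp$). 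Integrating $\epsilon^{-1/2}$ up to the diameter $2R_{\infty,t}\sqrt n$ then produces the factor $R_{\infty,t}^{1/2}$, and $\sqrt{m^\sharp}\cdot M_t^{1/2}\sim(m^\sharp)^{5/4}$. So the fractional powers are an artifact of this particular $1/\epsilon$-type covering estimate, not of the $\log(1/\epsilon)$ entropy you proposed.
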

Here, $\lesssim$ implies that the left-hand side in (\ref{general error bound-1}) is bounded above by the right-hand side times a constant independent of the trained weights and biases $\widehat{W}$, $\widehat{b}$, compressed weights and biases $W^{\sharp}$, $b^{\sharp}$, compressed number $m^{\sharp}$, and the number of samples $n$. We remark that some omitted constants blow up as increasing $T$, but they can be controlled by increasing sampling number $n$ (see Theorem \ref{thm:C}). The idea behind the proof is that the generalization error is decomposed into the training, approximation, and estimation errors. The approximation and estimation errors are evaluated using Proposition \ref{approximation error bound} and the estimation of the {\it Rademacher complexity}, respectively. For the exact statement and proof, see Appendix \ref{proof of generalization}. 
\par
The second term in (\ref{general error bound-1}) is the approximation error bound between $\widehat{f}$ and $f^{\sharp}$ regarded as the {\it bias}, which is given by Proposition \ref{approximation error bound-1}, while the third term is the estimation error bound regarded as the {\it variance}. It can be observed that minimizing the terms $\| \widehat{W}^{o}\phi-W^{\sharp o}\phi_{J} \|_{n,T}$ and $\| \widehat{W}^{h}_{J,[m]}\phi-W^{\sharp h}\phi_{J} \|_{n,T}$ with respect to $W^{\sharp o}$ and $W^{\sharp h}$ is equivalent to the output information losses (\ref{output information loss-o}) and (\ref{output information loss-h}) with $\tau=0$, respectively, which means that (iii) in Section \ref{Pruning algorithm} with $\tau=0$ constructs the compressed RNN such that the bias term becomes smaller. Considering $\tau\neq0$ prevents the blow up of $\|W^{\sharp o}\|_{F}$ and $\|W^{\sharp h}\|_{F}$, which means that the regularization parameter $\tau$ plays an important role in preventing the blow up of the variance term because $R_{\infty, T}$ in the variance term includes the upper bounds $R_{o}$ and $R_{h}$ of $\|W^{\sharp o}\|_{F}$ and $\|W^{\sharp h}\|_{F}$. Therefore, (iii) with $\tau\neq0$ constructs the compressed RNN such that the generalization error bound becomes smaller. In addition, selecting an optimal $J$ for minimizing the information losses (see (iv) in Section \ref{Pruning algorithm}) further decreases the error bound. 
\subsection{Error bound for RNNs compressed with spectral pruning}\label{The error bound for RNNs compressed by spectral pruning}
Next, we evaluate the generalization error bounds for the RNN $f^{\sharp}_{J}$ compressed using the reconstruction matrix (see (iii) in Section \ref{Pruning algorithm}). We define the {\it degrees of freedom} $\widehat{N}(\lambda)$ by
\[
\widehat{N}(\lambda):=\mathrm{Tr}\big[ \widehat{\Sigma}( \widehat{\Sigma}+\lambda I )^{-1} \big]=\sum_{j=1}^{m}\frac{\widehat{\mu}_{j}}{\widehat{\mu}_{j}+\lambda},
\]
where $\widehat{\mu}_{j}$ is an eigenvalue of $\widehat{\Sigma}$. Throughout this subsection, the regularization parameter $\tau \in \mathbb{R}^{m^{\sharp}}_{+}$ is chosen as $\tau=\lambda m^{\sharp} \tau^{\prime}$, where $\lambda>0$ satisfies 
\begin{equation}
m^{\sharp} \geq 5 \widehat{N}(\lambda) \log(16\widehat{N}(\lambda)/\widetilde{\delta}), \label{Bach condition}
\end{equation}
for a prespecified $\widetilde{\delta} \in (0, 1/2)$. Here, $\tau^{\prime}=(\tau^{\prime}_{j})_{j \in J}\in \mathbb{R}^{m^{\sharp}}$ is the {\it leverage score} defined by for $ k \in [m]$ 
\begin{equation}\label{leverage}
\tau^{\prime}_{k}:=\frac{1}{\widehat{N}(\lambda)}\big[ \widehat{\Sigma}( \widehat{\Sigma}+\lambda I )^{-1} \big]_{k,k}=\frac{1}{\widehat{N}(\lambda)}\sum_{j=1}^{m}U^{2}_{k,j}\frac{\widehat{\mu}_{j}}{\widehat{\mu}_{j}+\lambda},
\end{equation}
where $U=(U_{k,j})_{k,j}$ is the orthogonal matrix that diagonalizes $\widehat{\Sigma}$, i.e., $\widehat{\Sigma}=U\text{diag}\left\{\widehat{\mu}_{1},\ldots, \widehat{\mu}_{m}\right\}U^{T}$. The leverage score includes the information of the eigenvalues and eigenvectors of $\widehat{\Sigma}$, and indicates that the large components correspond to the important nodes from the viewpoint of the spectral information of $\widehat{\Sigma}$. Let $q$ be the probability measure on $[m]$ defined by
\begin{equation}\label{distribution-q}
q(v):=\tau^{\prime}_{v}\quad \text{for} \ v \in [m].
\end{equation}
\begin{prop}\label{bound of input infor}
Let $v_1,\ldots,v_{m^{\sharp}}$ be sampled i.i.d. from the distribution $q$ in \eqref{distribution-q}, and $J=\{v_1,\ldots,v_{m^{\sharp}} \}$. Then, for any $\widetilde{\delta} \in (0,1/2)$ and $\lambda>0$ satisfying (\ref{Bach condition}), we have the following inequality with probability greater than $1-\widetilde{\delta}$:
\begin{equation}
L^{(A)}_{\tau}(J)\leq 4 \lambda. \label{input infor<lambda}
\end{equation}
\end{prop}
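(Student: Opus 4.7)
The plan is to follow the leverage-score Nyström approximation argument of Bach (2013).

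\emph{Step 1 (algebraic reduction).} Starting from \eqref{reformulate input infor} with $\tau=\lambda m^{\sharp}\tau'$, introduce the rescaling matrix $\widehat{S}\in\mathbb{R}^{m\times m^{\sharp}}$ whose $k$-th column has entry $1/\sqrt{m^{\sharp}\tau'_{v_{k}}}$ at row $v_{k}$ and zeros elsewhere. The regularizer $I_{\tau}=\lambda m^{\sharp}D_{\tau'}$ is designed precisely so that the diagonal scaling factor $\mathrm{diag}(\sqrt{m^{\sharp}\tau'_{v_{k}}})$ cancels inside the inverse; combined with the push-through identity $A(A^{T}A+\lambda I)^{-1}A^{T}=AA^{T}(AA^{T}+\lambda I)^{-1}$ applied to $A=\widehat{\Sigma}^{1/2}\widehat{S}$, one obtains the clean identity
\[
\widehat{\Sigma}-\widehat{\Sigma}_{[m],J}(\widehat{\Sigma}_{J,J}+I_{\tau})^{-1}\widehat{\Sigma}_{J,[m]}=\lambda\,\widehat{\Sigma}^{1/2}(\widehat{\Sigma}_{S}+\lambda I)^{-1}\widehat{\Sigma}^{1/2},
\]
where $\widehat{\Sigma}_{S}:=\widehat{\Sigma}^{1/2}\widehat{S}\widehat{S}^{T}\widehat{\Sigma}^{1/2}=\sum_{k=1}^{m^{\sharp}}\xi_{k}\xi_{k}^{T}$ with $\xi_{k}:=\widehat{\Sigma}^{1/2}e_{v_{k}}/\sqrt{m^{\sharp}\tau'_{v_{k}}}$. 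Taking traces gives $L^{(A)}_{\tau}(J)=\lambda\,\mathrm{Tr}[\widehat{\Sigma}(\widehat{\Sigma}_{S}+\lambda I)^{-1}]$, and $\mathbb{E}[\widehat{\Sigma}_{S}]=\widehat{\Sigma}$ because the sampling distribution is $q(v)=\tau'_{v}$. The problem reduces to comparing $\widehat{\Sigma}_{S}$ with $\widehat{\Sigma}$ spectrally.

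\emph{Step 2 (matrix concentration and conclusion).} Precondition by $(\widehat{\Sigma}+\lambda I)^{-1/2}$, setting $\eta_{k}:=(\widehat{\Sigma}+\lambda I)^{-1/2}\xi_{k}$. The defining identity \eqref{leverage} for the leverage score yields the uniform bound
\[
\|\eta_{k}\|_{2}^{2}=\frac{[\widehat{\Sigma}(\widehat{\Sigma}+\lambda I)^{-1}]_{v_{k},v_{k}}}{m^{\sharp}\tau'_{v_{k}}}=\frac{\widehat{N}(\lambda)}{m^{\sharp}},
\]
and the variance proxy $\sum_{k}\mathbb{E}[\eta_{k}\eta_{k}^{T}]=(\widehat{\Sigma}+\lambda I)^{-1/2}\widehat{\Sigma}(\widehat{\Sigma}+\lambda I)^{-1/2}$ has operator norm at most $1$ and trace $\widehat{N}(\lambda)$, so its intrinsic dimension is at most $\widehat{N}(\lambda)$. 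Applying the intrinsic-dimension matrix Bernstein inequality of Tropp to $\sum_{k}(\eta_{k}\eta_{k}^{T}-\mathbb{E}[\eta_{k}\eta_{k}^{T}])$ with deviation threshold $1/2$, the sample-size hypothesis \eqref{Bach condition} is calibrated exactly so that with probability at least $1-\widetilde{\delta}$,
\[
\bigl\|(\widehat{\Sigma}+\lambda I)^{-1/2}(\widehat{\Sigma}-\widehat{\Sigma}_{S})(\widehat{\Sigma}+\lambda I)^{-1/2}\bigr\|_{op}\leq\tfrac{1}{2}.
\]
Rearranging gives $\widehat{\Sigma}_{S}+\lambda I\succeq\tfrac12(\widehat{\Sigma}+\lambda I)$, hence $(\widehat{\Sigma}_{S}+\lambda I)^{-1}\preceq 2(\widehat{\Sigma}+\lambda I)^{-1}$; substituting into the trace identity of Step~1 and tracking the accumulated factors of two yields the claimed $L^{(A)}_{\tau}(J)\leq 4\lambda$.

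\emph{Main obstacle.} The delicate part is the Bernstein step: a naive matrix Bernstein bound introduces a $\log m$ tail factor, which would only give the weaker sample-size condition $m^{\sharp}\gtrsim\widehat{N}(\lambda)\log m$. The sharper $\log(16\widehat{N}(\lambda)/\widetilde{\delta})$ appearing in \eqref{Bach condition} forces the use of Tropp's intrinsic-dimension refinement of matrix Bernstein (or Bach's equivalent direct argument), together with careful tracking of the Bernstein constants so that the prefactor $5$ and the logarithmic inner constant $16$ are both matched.
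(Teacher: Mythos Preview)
Your approach differs from the paper's in presentation. The paper quotes the per-vector Bach bound (restated there as Proposition~\ref{Bach}: $\inf_\alpha\{\|z^T\phi-\alpha^T\phi_J\|_{n,T}^2+\lambda m^\sharp\|\alpha^T\|_{\tau'}^2\}\le 4\lambda\, z^T\widehat{\Sigma}(\widehat{\Sigma}+\lambda I)^{-1}z$) as a black box, applies it with $z=e_j$ for each coordinate, and sums over $j=1,\dots,m$. You instead open up Bach's Nystr\"om argument and work directly at the trace level via the push-through identity and the intrinsic-dimension matrix Bernstein inequality. Your Steps~1 and~2 are correct and are precisely how Bach's per-vector bound is established, so the two routes are really the same argument at different levels of packaging.

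There is, however, a genuine gap in your last sentence. From the operator inequality $(\widehat{\Sigma}_S+\lambda I)^{-1}\preceq 2(\widehat{\Sigma}+\lambda I)^{-1}$ and the trace identity of Step~1 you obtain
\[
L^{(A)}_\tau(J)=\lambda\,\mathrm{Tr}\bigl[\widehat{\Sigma}(\widehat{\Sigma}_S+\lambda I)^{-1}\bigr]\le 2\lambda\,\mathrm{Tr}\bigl[\widehat{\Sigma}(\widehat{\Sigma}+\lambda I)^{-1}\bigr]=2\lambda\,\widehat{N}(\lambda),
\]
not $4\lambda$; there is no ``accumulated factor of two'' that eliminates the $\widehat{N}(\lambda)$. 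The paper's own proof exhibits the identical slip in its final line, where it writes $4\lambda\sum_{j=1}^{m}e_j^T\widehat{\Sigma}(\widehat{\Sigma}+\lambda I)^{-1}e_j\le 4\lambda$ although that sum equals $\widehat{N}(\lambda)$, not $1$. So your argument (like the paper's) is sound up to $L^{(A)}_\tau(J)\le C\lambda\,\widehat{N}(\lambda)$, which is what the leverage-score Nystr\"om method actually delivers, but the passage to the absolute constant $4\lambda$ is unjustified in both.
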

The proof is given in Appendix \ref{proof of Bach}. In the proof, we essentially refer to previous work [\cites{Bach}. Combining (\ref{approx<infor}) and (\ref{input infor<lambda}), we conclude that
\begin{equation}
\big\|\widehat{f}-f^{\sharp}_{J}\big\|^{2}_{n,T} \lesssim \lambda. \label{approx<lambda}
\end{equation}
It can be observed that the approximation error bound (\ref{approx<lambda}) is controlled by the degrees of freedom. If the eigenvalues of $\widehat{\Sigma}$ rapidly decrease, then $\widehat{N}(\lambda)$ is a rapidly decreasing function as $\lambda$ is large. Therefore, in that case, we can choose a smaller $\lambda$ even when $m^{\sharp}$ is fixed. We will numerically study the relationship between the eigenvalue distribution and the input information loss in Section \ref{eigenvalue distributions and information losses}.
\par
We make the following additional assumption.
\begin{ass}\label{assmption3}
Assume that the upper bounds for the trained weights and biases are given by $\|\widehat{W}^{o} \|_{F} \leq \widehat{R}_{o}$, $\|\widehat{W}^{h} \|_{F} \leq \widehat{R}_{h}$, $\|\widehat{W}^{i} \|_{F} \leq \widehat{R}_{i}$, $\|\widehat{b}^{hi} \|_{2} \leq \widehat{R}^{b}_{hi}$, and $\|\widehat{b}^{o} \|_{2} \leq \widehat{R}^{b}_{o}$.
\end{ass}
We have the following generalization error bound.
\begin{thm}\label{generalization error bound for SP}
Let Assumptions \ref{assmption1}, \ref{assmption2}, and \ref{assmption3} hold, and let $(X_{T}^{1}, Y_{T}^{1}),\ldots,(X_{T}^{n}, Y_{T}^{n})$ and $v_1,\ldots,v_{m^{\sharp}}$ be sampled i.i.d. from the distributions $P_T$ and $q$ in \eqref{distribution-q}, respectively. Let $J=\{v_1,\ldots,v_{m^{\sharp}} \}$. Then, for any $\delta \geq \log2$ and  $\widetilde{\delta} \in (0,1/2)$, we have the following inequality with probability greater than $(1-2e^{-\delta})\widetilde{\delta}$:
\begin{equation}
\Psi_{j}(f^{\sharp}_{J})
\lesssim \widehat{\Psi}_{j}(\widehat{f})+\sqrt{\lambda}+\frac{1}{\sqrt{n}}(m^{\sharp})^{\frac{5}{4}},
\label{general error bound-2}
\end{equation}
for $j=1,\ldots,d_y$ and for all $\lambda>0$ satisfying \eqref{Bach condition}.
\end{thm}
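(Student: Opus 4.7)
The plan is to chain three already-established results: Theorem \ref{generalization error bound}, Remark \ref{approx is bounded by infor}, and Proposition \ref{bound of input infor}. First, I would specialize Theorem \ref{generalization error bound} to $f^{\sharp}=f^{\sharp}_{J}$, the network produced by spectral pruning (item (iii) of Section \ref{Pruning algorithm}). By construction $W^{\sharp i}_{J}=\widehat{W}^{i}_{J,[d_x]}$, $b^{\sharp hi}_{J}=\widehat{b}^{hi}_{J}$, and $b^{\sharp o}_{J}=\widehat{b}^{o}$, so three of the five ``bias'' terms in \eqref{general error bound-1} vanish identically, leaving only $\|\widehat{W}^{o}\phi-W^{\sharp o}_{J}\phi_{J}\|_{n,T}$ and $\|\widehat{W}^{h}_{J,[m]}\phi-W^{\sharp h}_{J}\phi_{J}\|_{n,T}$ to control.

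For these two remaining terms, I would invoke Remark \ref{approx is bounded by infor}: their squares are at most $L^{(B,o)}_{\tau}(J)$ and $L^{(B,h)}_{\tau}(J)$ respectively, and the chain in \eqref{approx<infor} further bounds both by $(\|\widehat{W}^{o}\|_{F}^{2}+\|\widehat{W}^{h}_{J,[m]}\|_{F}^{2})\,L^{(A)}_{\tau}(J)$. Under Assumption \ref{assmption3} the prefactor is a constant absorbed into $\lesssim$, so it remains to bound $L^{(A)}_{\tau}(J)$. Proposition \ref{bound of input infor} supplies exactly this: with probability at least $1-\widetilde{\delta}$ over $v_{1},\dots,v_{m^{\sharp}}$ one has $L^{(A)}_{\tau}(J)\le 4\lambda$. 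Taking square roots (Theorem \ref{generalization error bound} features unsquared bias terms) yields the $\sqrt{\lambda}$ contribution in \eqref{general error bound-2}.

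Next I would address the variance term $\tfrac{1}{\sqrt{n}}(m^{\sharp})^{5/4}R^{1/2}_{\infty,T}$. The radii $R_{o},R_{h},R_{i},R^{b}_{o},R^{b}_{hi}$ of the class $\mathcal{F}^{\sharp}_{T}$ containing $f^{\sharp}_{J}$ must be specified. Using $W^{\sharp o}_{J}=\widehat{W}^{o}\widehat{A}_{J}$ and $W^{\sharp h}_{J}=\widehat{W}^{h}_{J,[m]}\widehat{A}_{J}$ together with Assumption \ref{assmption3}, it suffices to bound $\|\widehat{A}_{J}\|_{op}$ by a constant that does not depend on the particular random index set $J$. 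The regularization ensures $\|\widehat{A}_{J}\|_{\tau}^{2}\le L^{(A)}_{\tau}(J)\le 4\lambda$ on the same event from Proposition \ref{bound of input infor}, and combined with $\tau=\lambda m^{\sharp}\tau'$ and the definition \eqref{leverage} of the leverage score $\tau'$, this yields a $J$-independent operator-norm bound on $\widehat{A}_{J}$. Thus $R_{\infty,T}$ is a finite constant, and the variance term reduces to $\tfrac{1}{\sqrt{n}}(m^{\sharp})^{5/4}$ with constants hidden in $\lesssim$. Intersecting the event from Theorem \ref{generalization error bound} (probability $\ge 1-2e^{-\delta}$ over the training data) with the independent event from Proposition \ref{bound of input infor} (over the $v_{i}$'s) gives the stated probability bound.

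The main obstacle is the variance step: making the bound on $\|\widehat{A}_{J}\|_{op}$, and hence on $R_{o}$ and $R_{h}$, uniform in $J$ and quantitatively controlled in $\lambda$ and $m^{\sharp}$ so that the $\sqrt{\lambda}$ bias and the $(m^{\sharp})^{5/4}/\sqrt{n}$ variance cleanly separate without hidden amplification. Tracking this requires using the leverage-score regularizer carefully together with spectral information on $\widehat{\Sigma}$; once done, the remaining steps are essentially bookkeeping on top of Theorem \ref{generalization error bound}, Remark \ref{approx is bounded by infor}, and Proposition \ref{bound of input infor}.
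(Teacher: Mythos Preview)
Your overall strategy matches the paper's: specialize Theorem~\ref{generalization error bound} to $f^{\sharp}=f^{\sharp}_{J}$, kill the three trivial bias terms, bound the two remaining ones via Proposition~\ref{bound of input infor}, and then deal with the variance term by showing $f^{\sharp}_{J}$ lies in a class $\mathcal{F}^{\sharp}_{T}$ with controlled radii. The bias part is fine.

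The gap is in the variance step, and it shows up in your probability accounting. From $\|\widehat{A}_{J}\|_{\tau}^{2}\le L^{(A)}_{\tau}(J)\le 4\lambda$ and $\tau=\lambda m^{\sharp}\tau'$ you get only $\|\widehat{A}_{J}\|_{\tau'}^{2}\le 4/m^{\sharp}$. To pass to the Frobenius (or operator) norm you need $\|\widehat{A}_{J}\|_{F}^{2}\le \big(\sum_{j\in J}(\tau'_{j})^{-1}\big)\,\|\widehat{A}_{J}\|_{\tau'}^{2}$, and the factor $\sum_{j\in J}(\tau'_{j})^{-1}$ is \emph{not} controlled by the leverage-score definition alone: individual $\tau'_{j}$ can be arbitrarily small, so this sum is genuinely random in $J$ and is not $J$-independent as you claim. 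The paper handles this with a second probabilistic input: since $E\big[\sum_{j\in J}(\tau'_{j})^{-1}\big]=mm^{\sharp}$, Markov's inequality gives
\[
P\Big[\textstyle\sum_{j\in J}(\tau'_{j})^{-1}<\dfrac{mm^{\sharp}}{1-2\widetilde{\delta}}\Big]\ge 2\widetilde{\delta},
\]
and on this event $\|W^{\sharp o}_{J}\|_{F}\le 2\widehat{R}_{o}\sqrt{m/(1-2\widetilde{\delta})}$ and $\|W^{\sharp h}_{J}\|_{F}\le 2\widehat{R}_{h}\sqrt{m/(1-2\widetilde{\delta})}$, which fixes the radii in $\mathcal{F}^{\sharp}_{T}$ and hence $R_{\infty,T}$.

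This missing event is exactly why the theorem's probability is $(1-2e^{-\delta})\widetilde{\delta}$ rather than $(1-2e^{-\delta})(1-\widetilde{\delta})$: one must intersect the Bach event from Proposition~\ref{bound of input infor} (probability $\ge 1-\widetilde{\delta}$) with the Markov event (probability $\ge 2\widetilde{\delta}$), yielding probability $\ge (1-\widetilde{\delta})+2\widetilde{\delta}-1=\widetilde{\delta}$ over the $v_{i}$'s, and then multiply by $1-2e^{-\delta}$ from the independent data event. Your final sentence, which intersects only with the Proposition~\ref{bound of input infor} event, would produce the wrong probability and signals that the norm bound has not actually been secured.
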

Here, $\lesssim$ implies that the left-hand side in (\ref{general error bound-2}) is bounded above by the right-hand side times a constant independent of $\lambda$, $m^{\sharp}$, and $n$. We remark that some omitted constants blow up as increasing $T$, but they can be controlled by increasing sampling number $n$ (see Theorem \ref{thm:E}). The proof is given by the combination of applying Theorem~\ref{generalization error bound} as $f^{\sharp}=f^{\sharp}_{J}$ and using Proposition \ref{bound of input infor}. For the exact statement and proof, see Appendix \ref{proof of generalization SP}. It can be observed that in (\ref{general error bound-2}), a bias-variance tradeoff relationship exists with respect to $m^{\sharp}$. When $m^{\sharp}$ is large, $\lambda$ can be chosen smaller in the condition (\ref{Bach condition}), which implies that the bias term (the second term in (\ref{general error bound-2})) becomes smaller, but the variance term (the third term in (\ref{general error bound-2})) becomes larger. In contrast, the bias becomes larger and the variance becomes smaller when $m^{\sharp}$ is small. 
Further remarks on Theorem \ref{generalization error bound for SP} are given in Appendix \ref{Remarks for}.
\section{Numerical Experiments} \label{Numerical Experiments}
In this section, numerical experiments are detailed to demonstrate our theoretical results and show the effectiveness of spectral pruning compared with existing methods.
In Sections \ref{eigenvalue distributions and information losses} and \ref{Pixel-MNIST}, we select the pixel-MNIST as our task and employ the IRNN, which is an RNN that uses the ReLU as the activation function and initializes weights as the identity matrix and biases to zero (see [\cites{Le}). 
In Section \ref{Language Modeling}, we select the PTB [\cites{marcus1993building} and employ the RNNLM whose RNN layer is orthodox Elman-type.
For RNN training details, see Appendix \ref{Detailed configurations for training, pruning and fine-tuning}. 
We choose parameters $\theta_1 = 1$, $\theta_2=\theta_3=0$ in (iv) of Section \ref{Pruning algorithm}, i.e., we minimize only the input information loss. 
This choice is not so problematic because the bound of output information loss  automatically becomes smaller with minimizing the input one (see Remark \ref{approx is bounded by infor}). 
We choose the regularization parameter $\tau=0$, where this choice regards $\widehat{f}$ as a well-trained network and gives priority to minimizing the approximation error between $\widehat{f}$ and $f^{\sharp}_{J}$ (see below Theorem\ref{generalization error bound}).
\subsection{Eigenvalue distributions and information losses}\label{eigenvalue distributions and information losses}
First, we numerically study the relationship between the eigenvalue distribution and the information losses. 
Figure \ref{eigenvalue distributions} shows the eigenvalue distribution of the covariance matrix $\widehat{\Sigma}$ with 128 hidden nodes, which are sorted in decreasing order. 
In this experiment, almost half of the eigenvalues are zero, which cannot be visualized in the figure. Figure \ref{Information losses} shows the input information loss $L^{(A)}_{0}(J)$ versus the compressed number $m^{\sharp}$. 
The information losses vanish when $m^{\sharp}>m_{\mathrm{nzr}}$ (see Remark~\ref{regularization parameter is zero}). 
The blue and pink curves correspond to MNIST\footnote{http://yann.lecun.com/exdb/mnist/} and FashionMNIST\footnote{https://github.com/zalandoresearch/fashion-mnist}, respectively. 
It can be observed that the eigenvalues for MNIST decrease more rapidly than those for FashionMNIST, and the information losses for MNIST decrease more rapidly than those for FashionMNIST. 
This phenomenon coincides with the interpretation on Proposition \ref{bound of input infor} (see the discussion below (\ref{approx<lambda})). 
\begin{figure}[htbp]
\hspace{-5mm}
\begin{tabular}{c}
\begin{minipage}{0.45\hsize}
\vspace{0mm}
  \begin{center}
   \includegraphics[scale=0.45]{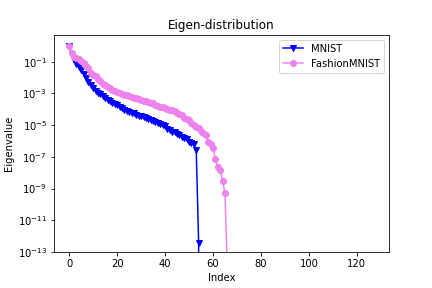}
  \end{center}
  \subcaption{Eigenvalue distribution for $\widehat{\Sigma}$}\label{eigenvalue distributions}
 \end{minipage}
\hspace{10mm}
 \begin{minipage}{0.45\hsize}
 \begin{center}
  \includegraphics[scale=0.45]{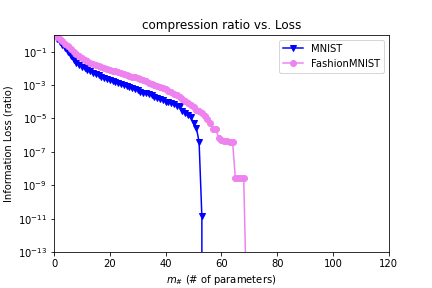}
 \end{center}
 \subcaption{Input information loss vs. $m^{\sharp}$}\label{Information losses}
 \end{minipage}
\end{tabular}
\caption{Relationship between the eigenvalue distribution and the input information loss}\label{eigenvalue and information}
\end{figure}
\subsection{Pixel-MNIST (IRNN)}\label{Pixel-MNIST}
\begin{table*}[h]
\caption{Pixel-MNIST (IRNN)}\label{acc Pixel-MNIST}
\centering
\begin{tabular}{@{}lc@{\hspace{-0.5mm}}c@{\hspace{-0.5mm}}c@{\hspace{-0.5mm}}c@{\hspace{-0.5mm}}c@{\hspace{-0.5mm}}c@{\hspace{-0.5mm}}c@{}}
\toprule
Method  &  Accuracy[\%] (std) & \begin{tabular}{c} Finetuned \\ Accuracy[\%](std) \end{tabular} & \begin{tabular}{c} \# input \\ -hidden\end{tabular} & \begin{tabular}{c} \# hidden \\ -hidden\end{tabular} & \begin{tabular}{c} \# hidden \\ -out\end{tabular} & total \\ 
\midrule
Baseline(128) & 96.80 (0.23) & - & 128 & 16384 & 1280 & 17792 \\ 
Baseline(42) & 93.35 (0.75) & - & 42 & 1764 & 420 & 2226 \\ 
\textbf{Spectral w/\ rec.(ours)}  & \textbf{92.61 (2.46)} & \textbf{97.08 (0.16)} & \textbf{42} & \textbf{1764} & \textbf{420} & \textbf{2226} \\
Spectral w/o rec.  & 83.60 (8.24)  & - & 42 & 1764 & 420 & 2226 \\
Random w/\ rec.    & 34.72 (32.47) & - & 42 & 1764 & 420 & 2226 \\ 
Random w/o rec.    & 23.13 (16.09) & - & 42 & 1764 & 420 & 2226 \\ 
Random Weight          & 10.35 (1.38)  & - & 128 & 1764 & 1280 & 3172 \\
Magnitude-based Weight & 11.06 (0.70)  & 94.41 (3.02) & 128 & 1764 & 1280 & 3172 \\
Column Sparsification  & 84.80 (7.29)  & - & 128 & 5376 & 1280 & 6784 \\
Low Rank Factorization & 9.65 (3.85)   & - & 128 & 10752 & 1280 & 12160 \\

\bottomrule
\end{tabular}
\end{table*}
\begin{table*}[h]
\caption{PTB (RNNLM)}\label{acc Language Modeling}
\centering
\begin{tabular}{@{}lc@{\hspace{-0.5mm}}c@{\hspace{-0.5mm}}c@{\hspace{-0.8mm}}c@{\hspace{-1.3mm}}c@{\hspace{0mm}}c@{\hspace{-0.1mm}}c@{}}
\toprule
Method  &  Perplexity (std) & \begin{tabular}{c}  Finetuned \\ Perplexity (std) \end{tabular} & \begin{tabular}{c} \# input \\ -hidden\end{tabular} & \begin{tabular}{c} \# hidden \\ -hidden\end{tabular} & \begin{tabular}{c} \# hidden \\ -out\end{tabular} & total \\ 
\midrule
Baseline(128) & 114.66 (0.35) & - & 1270016 & 16384 & 1270016 & 2556416 \\ 
Baseline(42) & 145.85 (0.74) & 132.46 (0.74) & 416724 & 1764 & 416724 & 835212 \\ 
\textbf{Spectral w/\ rec.(ours)}  & \textbf{207.63 (2.19)} & \textbf{124.26 (0.39)} & \textbf{416724} & \textbf{1764} & \textbf{416724} & \textbf{835212} \\
Spectral w/o rec.  & 433.99 (10.64) & - & 416724 & 1764 & 416724 & 835212 \\
Random w/\ rec.    & 243.76 (9.46)  & - & 416724 & 1764 & 416724 & 835212  \\ 
Random w/o rec.    & 492.06 (22.40) & - & 416724 & 1764 & 416724 & 835212  \\ 
Random Weight          & 203.41 (2.02)  & - & 1270016 & 1764 & 1270016 & 2541796 \\
Magnitude-based Weight & 168.57 (2.57)  & 115.65 (0.31) & 1270016 & 1764 & 1270016 & 2541796 \\
Magnitude-based Weight $\diamondsuit$ & 201.41 (3.60) & 126.20 (0.28) & 416724 & 1764 & 416724 & 835212 \\
Column Sparsification  & 128.98 (0.52)  & - & 1270016 & 5376 & 1270016 & 2545408 \\
Low Rank Factorization & 126.24 (1.79)  & - & 1270016 & 10752 & 1270016 & 2550784 \\
\bottomrule
\end{tabular}
\end{table*}
We compare spectral pruning with other pruning methods in the pixel-MNIST (IRNN). 
Table \ref{acc Pixel-MNIST} summarizes the accuracies and the number of weight parameters for different pruning methods.
We consider one-third compression in the hidden state, i.e., for the node pruning, 128 hidden nodes were compressed to 42 nodes, while for weight pruning, $128^2(=16384)$ hidden weights were compressed to $42^2(=1764)$ weights. 
\par
``Baseline(128)'' and ``Baseline(42)'' represent direct training (not pruning) with 128 and 42 hidden nodes, respectively. 
``Spectral w/ rec.(ours)''  represents spectral pruning with the reconstruction matrix (i.e., the compressed weight is chosen as $W^{\sharp h}=\widehat{W}^{h}_{J,[m]}\widehat{A}_{J}$ with the optimal $J$ with respect to (\ref{optimization-RNN})), while ``Spectral w/o rec.'' represents spectral pruning without the reconstruction matrix (i.e., $W^{\sharp h}=\widehat{W}^{h}_{J,J}$ with the optimal $J$ with respect to (\ref{optimization-RNN})), which idea is based on [\cites{luo2017thinet}.
``Random w/ rec.'' represents random node pruning with the reconstruction matrix (i.e., $W^{\sharp h}=\widehat{W}^{h}_{J,[m]}\widehat{A}_{J}$, where $J$ is randomly chosen), while ``Random w/o rec.'' represents random node pruning without the reconstruction matrix (i.e., $W^{\sharp h}=\widehat{W}^{h}_{J,J}$, where $J$ is randomly chosen).
``Random Weight'' represents random weight pruning. 
For the reason why we compare with random pruning, see the introduction of [\cites{Zhang}.
``Magnitude-based Weight'' represents magnitude-based weight pruning based on [\cites{Narang}.
``Column Sparsification'' represents the magnitude-based column sparsification during training based on [\cites{wang2019acceleration}. 
``Low Rank Factorization'' represents low rank factorization which truncates small singular values of trained weights based on [\cites{prabhavalkar2016compression}.
``Accuracy[\%](std)'' and ``Finetuned Accuracy[\%](std)'' represent their mean (standard deviation) of accuracy before and after fine-tuning, respectively.
``\# input-hidden'', ``\# hidden-hidden'', and ``\# hidden-out'' represent the number of input-to-hidden, hidden-to-hidden, and hidden-to-output weight parameters, respectively.
``total'' represents their sum. 
For detailed procedures of training, pruning, and fine-tuning, see Appendix \ref{Detailed configurations for training, pruning and fine-tuning}. 
\par
We demonstrate that spectral pruning significantly outperforms other pruning methods. 
The reason why spectral pruning can compress with small degradation is that the covariance matrix $\widehat{\Sigma}$ has a small number of non-zero rows (we observed around 50 non-zero rows). 
For the detail of non-zero rows, see Remark \ref{regularization parameter is zero}.
Our method with fine-tuning outperforms ``Baseline(42)'', which means that the spectral pruning gets benefits from over-parameterization [\cites{chang2020provable, zhang2021understanding}. 
Since the magnitude-based weight pruning is the method to require the fine-tuning (e.g., see [\cites{Narang}), we have also compared our method with the magnitude-based weight pruning with fine-tuning, and observed that our method outperforms the magnitude-based weight pruning as well. 
We also remark that our method with fine-tuning overcomes ``Baseline(128)''.
\subsection{PTB (RNNLM)}\label{Language Modeling}
We compare spectral pruning with other pruning methods in the PTB (RNNLM).
Table \ref{acc Language Modeling} summarizes the perplexity and the number of weight parameters for different pruning methods. 
As in Section \ref{Pixel-MNIST}, we consider one-third compression in the hidden state, and how to represent ``Method'' is the same as Table \ref{acc Pixel-MNIST} except for ``Magnitude-based Weight $\diamondsuit$'', which represents the magnitude-based weight pruning for not only hidden-to-hidden weights but also input-to-hidden and hidden-to-out weights so that the number of resultant weight parameters is the same as Spectral w/ rec.(ours).
\par
We demonstrate that our method with fine-tuning outperforms other pruning methods except for magnitude-based Weight pruning. 
Even though "Low Rank Factorization" retains large number of weight parameters, its perplexity is slightly worse than our method with fine-tuning.
On the other hands, our method with fine-tuning can not outperform ``Magnitude-based Weight'', but it can slightly do under the condition of the same number of weight parameters. 
We also remark that our method with fine-tuning overcomes ``Baseline(42)'', although it does not overcome ``Baseline(128)''.
\par
Therefore, we conclude that spectral pruning works well in Elman-RNN, especially in IRNN.
\section*{Future Work}
It would be interesting to extend our work to the long short-term memory (LSTM). The properties of LSTMs are different from those of RNNs in that LSTMs have the gated architectures including product operations, which might require more complicated analysis of the generalization error bounds as compared with RNNs. Hence, the investigation of spectral pruning for LSTMs is beyond the scope of this study and will be the focus of future work.
\subsubsection*{Acknowledgements}
The authors are grateful to Professor Taiji Suzuki for useful discussions and comments on our work. The first author was supported by Grant-in-Aid for JSPS Fellows (No.21J00119), Japan Society for the Promotion of Science. 
\bibliographystyle{plain}
\bibliography{Spectral_Pruning_for_Recurrent_Neural_Networks.bbl}
\newpage
\onecolumn
\appendix

\part*{Appendix}
\section{Review of Spectral Pruning for DNNs}\label{Review of the spectral pruning}
Let $D=\{(x^{i}, y^{i})\}_{i=1}^{n}$ be training data, where $x^{i} \in \mathbb{R}^{d_x}$ is an input and $y^{i} \in \mathbb{R}^{d_y}$ is an output. The training data are independently identically distributed. To train the appropriate relationship between input
and output, we consider DNNs $f$ as
\[
f(x)=(W^{(L)}\sigma(\cdot) +b^{(L)})\circ \cdots \circ(W^{(1)}x+b^{(1)} ),
\]
where $\sigma:\mathbb{R} \to \mathbb{R}$ is an activation function, $W^{(l)} \in \mathbb{R}^{m_{l+1} \times m_{l}}$ is a weight matrix, and $b^{(l)} \in \mathbb{R}^{m_{l+1}}$ is a bias. Let $\widehat{f}$ be a trained DNN obtained from the training data $D$, i.e.,
\[
\widehat{f}(x)=(\widehat{W}^{(L)}\sigma(\cdot) + \widehat{b}^{(L)})\circ \cdots \circ(\widehat{W}^{(1)}x+\widehat{b}^{(1)}).
\]
We denote the input with respect to $l$-th layer 
by
\[
\phi^{(l)}(x)=\sigma \circ (\widehat{W}^{(l-1)}\sigma(\cdot)+\widehat{b}^{(l-1)})\circ \cdots \circ(\widehat{W}^{(1)}x+\widehat{b}^{(1)}).
\]
\par
Let $J^{(l)} \subset [m_{l}]$ be an index set with $|J^{(l)}|=m^{\sharp}_{l}$, where $[m_{l}]:=\{1,\ldots,m_{l} \}$ and $m^{\sharp}_{l} \in \mathbb{N}$ is the number of nodes of the $l$-th layer of the compressed DNN $f^{\sharp}$ with $m^{\sharp}_{l} \leq m_{l}$. Let $\phi^{(l)}_{J^{(l)}}(x)=(\phi^{(l)}_{j}(x))_{j \in J^{(l)}}$ be a subvector of $\phi^{(l)}(x)$ corresponding to the index set $J^{(l)}$, where $\phi^{(l)}_{j}(x)$ is the $j$-th components of the vector $\phi^{(l)}(x)$.
\par
{\bf (i) Input information loss.} 
The input information loss is defined by
\begin{equation}
L^{(A,l)}_{\tau}(J^{(l)}):= \min_{A \in \mathbb{R}^{m_{l} \times m^{\sharp}_{l} } } \big\{ \big\| \phi^{(l)} - A \phi^{(l)}_{J^{(l)}} \big\|^{2}_{n} + \big\| A \big\|^{2}_{\tau} \big\}, \label{input information loss-DNN}
\end{equation}
where $\| \cdot \|_{n}$ is the empirical $L^2$-norm with respect to $n$, i.e., 
\begin{equation}\label{formula-A}
\big\| \phi^{(l)} - A\phi^{(l)}_{J^{(l)}} \big\|^{2}_{n}:=\frac{1}{n} \sum_{i=1}^{n} \big\| \phi^{(l)}(x^{i}) - A \phi^{(l)}_{J^{(l)}}(x^{i}) \big\|^{2}_{2},
\end{equation}
where $\| \cdot \|_{2}$ is the Euclidean norm and $\| A \|^{2}_{\tau}:=\mathrm{Tr}\, [AI_{\tau}A^{T} ]$ for a regularization parameter $\tau \in \mathbb{R}^{m^{\sharp}_{l}}_{+}$. Here, $\mathbb{R}^{m^{\sharp}_{l}}_{+}:=\big\{x \in \mathbb{R}^{m^{\sharp}_{l}}\, |\, x_{j}>0, \ j=1,\ldots,m^{\sharp}_{l} \big\}$ and $I_{\tau}:=\text{diag}(\tau)$. By the linear regularization theory, there exists a unique solution $\widehat{A}^{(l)}_{J^{(l)}} \in \mathbb{R}^{m_{l} \times m^{\sharp}_{l}}$ of the minimization problem of $\| \phi^{(l)} - A \phi^{(l)}_{J^{(l)}}\|^{2}_{n} + \| A\|^{2}_{\tau}$, and it has the form
\begin{equation}
\widehat{A}^{(l)}_{J^{(l)}}=\widehat{\Sigma}^{(l)}_{[m_{l}],J^{(l)}}\big(\widehat{\Sigma}^{(l)}_{J^{(l)},J^{(l)}}+I_{\tau} \big)^{-1}, \label{A^(l)_J} 
\end{equation}
where $\widehat{\Sigma}^{(l)}$ is the (noncentered) empirical covariance matrix of $\phi^{(l)}(x)$ with respect to $n$, i.e., 
\[
\widehat{\Sigma}^{(l)}=\frac{1}{n}\sum_{i=1}^{n}\phi^{(l)}(x^{i})\phi^{(l)}(x^{i})^{T},
\] 
and $\widehat{\Sigma}^{(l)}_{I,I'} = (\widehat{\Sigma}^{(l)}_{i,i'})_{i\in I,i'\in I'}\in \mathbb{R}^{K \times H}$ is the submatrix of $\widehat{\Sigma}^{(l)}$ corresponding to index sets $I, I' \subset [m]$ with $|I|=K$ and $|I'|=H$.
By substituting the
explicit formula \eqref{formula-A} of the reconstruction matrix $\widehat{A}^{(l)}_{J^{(l)}}$ into (\ref{input information loss-DNN}), the input information loss is reformulated as
\begin{equation}
L^{(A,l)}_{\tau}(J^{(l)})=\mathrm{Tr}\Big[ \widehat{\Sigma}^{(l)} - \widehat{\Sigma}^{(l)}_{[m_{l}],J^{(l)}}\big( \widehat{\Sigma}^{(l)}_{J^{(l)},J^{(l)}}+I_{\tau} \big)^{-1}\widehat{\Sigma}^{(l)}_{J^{(l)},[m_{l}]} \Big].
\end{equation}
\par
{\bf (ii) Output information loss.} For any matrix $Z^{(l)} \in \mathbb{R}^{m \times m_{l}}$ with an output size $m \in \mathbb{N}$, we define the output information loss by
\begin{equation}
L^{(B,l)}_{\tau}(J^{(l)}):=\sum_{j=1}^{m}\min_{\beta \in \mathbb{R}^{m_{l}^{\sharp}}} \big\{ \big\| Z^{(l)}_{j,:} \phi^{(l)} - \beta^{T} \phi^{(l)}_{J^{(l)}} \big\|^{2}_{n} + \big\| \beta^{T} \big\|^{2}_{\tau} \big\}, \label{output information loss-DNN}
\end{equation}
where $Z^{(l)}_{j,:}$ denotes the $j$-th row of the matrix $Z^{(l)}$. A typical situation is that $Z^{(l)}=\widehat{W}^{(l)}$. The minimization problem of $\| Z^{(l)}_{j,:} \phi^{(l)} - \beta^{T} \phi_{J^{(l)}}\|^{2}_{n}$ $+$ $\| \beta^{T} \|^{2}_{\tau}$
has the unique solution 
\[
\widehat{\beta}^{(l)}_{j}=(Z^{(l)}_{j,:}\widehat{A}^{(l)}_{J^{(l)}})^{T},
\]
and by substituting it into \eqref{output information loss-DNN}, the output information loss is reformulated as
\[
L^{(B,l)}_{\tau}(J^{(l)}) = \mathrm{Tr}\Big[ Z^{(l)} \big(\widehat{\Sigma}^{(l)} - \widehat{\Sigma}^{(l)}_{[m],J^{(l)}}\big( \widehat{\Sigma}^{(l)}_{J^{(l)},J^{(l)}}+I_{\tau} \big)^{-1}\widehat{\Sigma}^{(l)}_{J^{(l)},[m]} \big)Z^{(l)T} \Big].
\]
\par
{\bf (iii) Compressed DNN by the reconstruction matrix.} We construct the compressed DNN by 
\[
f^{\sharp}_{J^{(1:L)}}(x)=(W^{\sharp (L)}_{J^{(L)}}\sigma(\cdot) +b^{\sharp(L)})\circ \cdots \circ(W^{\sharp(1)}_{J^{(1)}}x+b^{\sharp(1)}),
\] 
where $J^{(1:L)}=J^{(1)}\cup \cdots \cup J^{(L)}$, and $b^{\sharp(l)} = \widehat{b}^{(l)}$ and $W^{\sharp(l)}_{J^{(l)}}$ is the compressed weight as 
the multiplication of the trained weight $\widehat{W}^{(l)}_{J^{(l+1)},[m_{l}]}$ and the reconstruction matrix $\widehat{A}_{J^{(l)}}$, i.e.,
\begin{equation}\label{compressed_weight}
W^{\sharp(l)}_{J^{(l)}}:=\widehat{W}^{(l)}_{J^{(l+1)},[m_{l}]}\widehat{A}_{J^{(l)}}.
\end{equation}
.
\par
{\bf (iv) Optimization.} 
To select an appropriate index set $J^{(l)}$, we consider the following optimization problem that minimizes a convex combination of input and output information losses, i.e.,
\[
\min_{J^{(l)} \subset [m_{l}]\ s.t.\ |J^{(l)}|=m^{\sharp}_{l}} \big\{ \theta L^{(A,l)}_{\tau}(J^{(l)})+ (1-\theta)L^{(B,l)}_{\tau}(J^{(l)}) \big\},
\]
for $\theta \in [0,1]$, 
where $m^{\sharp}_{l} \in [m]$ is a prespecified number. 
We adapt the optimal index $J^{\sharp (1:L)}$ in the algorithm. We term this method as {\it spectral pruning}.
\par
In [\cites{Suzuki}, the generalization error bounds for compressed DNNs with the spectral pruning have been studied (see Theorems 1 and 2 in [\cites{Suzuki}), and the parameters $\theta$, $\tau$, and $Z^{(l)}$ are chosen such that its error bound become smaller.
\section{Proof of Proposition \ref{approximation error bound}}\label{proof of approximation}
We restate Proposition \ref{approximation error bound} in an exact form as follows:

\begin{prop}\label{prop:B}
Suppose that Assumption \ref{assmption1} holds. Let $\{(X_{T}^{i}, Y_{T}^{i})\}_{i=1}^n$ be sampled i.i.d. from the distribution $P_T$. Then,
\begin{multline}\label{approximation error bound-1-Appendix}
\|\widehat{f}-f^{\sharp}\|_{n,T}
\leq \sqrt{3}\, \biggl\{ \big\| \widehat{W}^{o}\phi-W^{\sharp o}\phi_{J} \big\|_{n,T}
+ R_{o}\rho_{\sigma}\max\{1, (R_{h} \rho_{\sigma})^{T-2}\}T\big\| \widehat{W}^{h}_{J,[m]}\phi-W^{\sharp h}\phi_{J} \big\|_{n,T}\\
+ R_{o}\rho_{\sigma}\bigg(\sum_{t=1}^{T} (R_{h} \rho_{\sigma})^{t-1}\bigg) \big(R_{x}\big\|\widehat{W}^{i}_{J,[d_x]} - W^{\sharp i}\big\|_{op} + \| \widehat{b}^{hi}_{J} - b^{\sharp hi} \|_{2} \big) + \big\|\widehat{b}^{o}-b^{\sharp o}\big\|_{2} \biggr\},
\end{multline}
for all $f^{\sharp} \in \mathcal{F}^{\sharp}_{T}(R_o, R_h, R_i, R^{b}_{o},R^{b}_{hi})$ and $J \subset [m]$ with $|J|=m^{\sharp}$.
\end{prop}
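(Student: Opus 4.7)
The plan is to bound $\widehat{f}_t-f^{\sharp}_t$ at each time step, propagate the hidden-state discrepancy through the recurrence, and then average. Inserting and subtracting $W^{\sharp o}\widehat{h}_{t,J}$, where $\widehat{h}_{t,J}:=\phi_J(x_t,\widehat{h}_{t-1})$, yields the three-term decomposition
\[
\widehat{f}_t-f^{\sharp}_t=(\widehat{W}^{o}\phi-W^{\sharp o}\phi_{J})(x_t,\widehat{h}_{t-1})+W^{\sharp o}(\widehat{h}_{t,J}-h^{\sharp}_{t})+(\widehat{b}^{o}-b^{\sharp o}).
\]
Applying $(a+b+c)^2\le 3(a^2+b^2+c^2)$ pointwise, averaging over $(i,t)$, taking square roots, and using $\|W^{\sharp o}\|_{op}\le\|W^{\sharp o}\|_F\le R_o$ give
\[
\|\widehat{f}-f^{\sharp}\|_{n,T}\le\sqrt{3}\Bigl(\|\widehat{W}^{o}\phi-W^{\sharp o}\phi_J\|_{n,T}+R_o\sqrt{\tfrac{1}{nT}\sum_{i,t}\|\Delta^{i}_t\|_2^2}+\|\widehat{b}^{o}-b^{\sharp o}\|_2\Bigr),
\]
with $\Delta^{i}_t:=\widehat{h}^{i}_{t,J}-h^{\sharp,i}_{t}$. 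This already accounts for the $\sqrt{3}$ factor and two of the terms on the right-hand side of \eqref{approximation error bound-1-Appendix}; only the middle piece needs further work.

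Next, I would expand both hidden-state updates, use Lipschitzness of $\sigma$, and add and subtract $W^{\sharp h}\widehat{h}^{i}_{t-1,J}$ inside the activation to get, for $t\ge 2$,
\[
\|\Delta^{i}_t\|_2\le\rho_\sigma\bigl\|(\widehat{W}^{h}_{J,[m]}\phi-W^{\sharp h}\phi_J)(x^{i}_{t-1},\widehat{h}^{i}_{t-2})\bigr\|_2+\rho_\sigma R_h\|\Delta^{i}_{t-1}\|_2+C,
\]
together with $\|\Delta^{i}_1\|_2\le C$ (since $\widehat{h}^{i}_0=h^{\sharp,i}_0=0$), where $C:=\rho_\sigma\bigl(R_x\|\widehat{W}^{i}_{J,[d_x]}-W^{\sharp i}\|_{op}+\|\widehat{b}^{hi}_J-b^{\sharp hi}\|_2\bigr)$. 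Setting $G_t:=\sqrt{\tfrac{1}{n}\sum_i\|\Delta^{i}_t\|_2^2}$ and $\widetilde{E}_t:=\rho_\sigma\|\widehat{W}^{h}_{J,[m]}\phi-W^{\sharp h}\phi_J\|_{n,t-1}$ for $t\ge 2$ with $\widetilde{E}_1:=0$, the $L^2$-triangle inequality over the $n$ samples upgrades this to $G_t\le C+\widetilde{E}_t+rG_{t-1}$ with $r:=\rho_\sigma R_h$, and unrolling from $G_0=0$ gives $G_t\le CS_t+\sum_{k=2}^{t}r^{t-k}\widetilde{E}_k$, where $S_t:=\sum_{j=0}^{t-1}r^j$.

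Finally, I apply the $L^2$-triangle inequality in $t\in\{1,\ldots,T\}$ to $\sqrt{\tfrac{1}{T}\sum_t G_t^2}=\sqrt{\tfrac{1}{nT}\sum_{i,t}\|\Delta^{i}_t\|_2^2}$. The $CS_t$ piece is bounded uniformly by $CS_T$, and after multiplication by $R_o$ produces the term $R_o\rho_\sigma\sum_{t=1}^{T}(R_h\rho_\sigma)^{t-1}(R_x\|\widehat{W}^{i}_{J,[d_x]}-W^{\sharp i}\|_{op}+\|\widehat{b}^{hi}_J-b^{\sharp hi}\|_2)$ advertised in \eqref{approximation error bound-1-Appendix}. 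For the convolutional piece, I first bound $r^{t-k}\le M:=\max\{1,r^{T-2}\}$ uniformly in $t,k$, reducing the inner sum to the $t$-independent quantity $M\sum_{k=2}^{T}\widetilde{E}_k$. Cauchy--Schwarz then gives $\sum_{k=2}^{T}\widetilde{E}_k\le\sqrt{T}\sqrt{\sum_{k=2}^{T}\widetilde{E}_k^2}$, while the definition of $\|\cdot\|_{n,T}$ directly yields $\sum_{k=2}^{T}\widetilde{E}_k^2\le\rho_\sigma^2 T\|\widehat{W}^{h}_{J,[m]}\phi-W^{\sharp h}\phi_J\|_{n,T}^2$, so the convolutional piece is at most $MT\rho_\sigma\|\widehat{W}^{h}_{J,[m]}\phi-W^{\sharp h}\phi_J\|_{n,T}$; multiplying by $R_o$ recovers the coefficient $R_o\rho_\sigma\max\{1,(R_h\rho_\sigma)^{T-2}\}T$.

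The main obstacle is this last step: obtaining a coefficient of $T$ rather than $T^{3/2}$ in front of the $B$-type term. The trick is to pull the worst-case exponential factor $M$ out of the discrete convolution first (which is cheap because $r^{t-k}\le M$ uniformly in $t,k$), converting the outer $L^2$-in-$t$ average into an $L^1$-in-$k$ norm of $\widetilde{E}$; only then does Cauchy--Schwarz trade $L^1$ for $L^2$ at a cost of exactly one $\sqrt{T}$, which combines with the $\sqrt{T}$ already sitting inside $\sum_k\widetilde{E}_k^2\le\rho_\sigma^2 TB^2$ to give the clean factor $T$. Swapping the order (Cauchy--Schwarz before extracting $M$) wastes a $\sqrt{T}$. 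The remaining arguments are routine: repeated triangle inequalities in the empirical norms, Lipschitz continuity of $\sigma$, and the uniform norm bounds supplied by $f^{\sharp}\in\mathcal{F}^{\sharp}_T$ together with Assumption \ref{assmption1}.
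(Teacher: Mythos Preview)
Your proposal is correct and follows essentially the same approach as the paper: the three-term decomposition at the output, the recursive inequality for the hidden-state discrepancy coming from Lipschitzness of $\sigma$ and the norm bound $\|W^{\sharp h}\|_{op}\le R_h$, unrolling the recursion, extracting the uniform factor $\max\{1,(R_h\rho_\sigma)^{T-2}\}$, and then the $\ell^1$-to-$\ell^2$ step (your Cauchy--Schwarz is exactly the paper's inequality $\sum_{t=1}^{T}(\sum_{l=1}^{t}a_l)^2\le T^2\sum_{t=1}^{T}a_t^2$). The only cosmetic difference is ordering: the paper keeps everything pointwise in the sample $i$ until the very end and only then averages, whereas you average over $i$ right after the first decomposition and run the recursion on the sample-$L^2$ quantities $G_t$; both routes land on the same bound with the same constants. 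One small notational caveat: your $\|\cdot\|_{n,t-1}$ is not the paper's $\|\cdot\|_{n,T}$ norm but the sample-$L^2$ norm at the fixed time $t-1$; the subsequent computation makes this clear, but it is worth stating explicitly.
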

\begin{proof}
Let $\widehat{f}=(\widehat{f}_{t})_{t=1}^{T}$ be a trained RNN and $f^{\sharp} \in \mathcal{F}^{\sharp}_{T}(R_o, R_h, R_i, R^{b}_{o},R^{b}_{hi})$. 
Let us define functions $\phi$ and $\phi^{\sharp}$ by
\[
\phi(x, h):=\sigma(\widehat{W}^{h}h+\widehat{W}^{i}x+\widehat{b}^{hi})\quad \text{for} \ x \in \mathbb{R}^{d_x},\ h \in \mathbb{R}^{m},
\]
\begin{equation}\label{B-phi-sharp}
\phi^{\sharp}(x, h^{\sharp}):=\sigma(W^{\sharp h}h^{\sharp}+W^{\sharp i}x+b^{\sharp hi})\quad \text{for} \ x \in \mathbb{R}^{d_x},\ h^\sharp \in \mathbb{R}^{m^{\sharp}},
\end{equation}
and denote the hidden states by
\begin{equation}\label{B-hiddens}
\widehat{h}_{t}:=\phi(x_t, \widehat{h}_{t-1}), \ \ h^{\sharp}_{t}:=\phi^{\sharp}(x_t, h^{\sharp}_{t-1})\quad \text{for} \ t=1,2,\cdots, T.
\end{equation}
If a training data $X^i_T = (x^{i}_{t})_{t=1}^T$ is used as input, we denote its hidden states by
\[
\widehat{h}^{i}_{t}:=\phi(x^{i}_t, \widehat{h}^{i}_{t-1}), \ \ h^{\sharp i}_{t}:=\phi^{\sharp}(x^{i}_t, h^{\sharp i}_{t-1}),
\]
and its outputs at time $t$ by 
\[
\widehat{f}_{t}(X^{i}_{t})=\widehat{W}^{o} \phi(x^{i}_{t}, \widehat{h}^{i}_{t-1})+\widehat{b}^{o},
\quad 
f^{\sharp}_{t}(X^{i}_{t})=W^{\sharp o}\phi^{\sharp}(x^{i}_{t}, h^{\sharp i}_{t-1})+b^{\sharp o},
\] 
for $t=1,2,\ldots,T$. 
Then, we have
\begin{equation}\label{B-1}
\begin{split}
\big\| \widehat{f}_{t}(X^{i}_{t}) - f^{\sharp}_{t}(X^{i}_{t}) \big\|_{2}
& \le 
\big\| \widehat{W}^{o} \phi(x^{i}_{t}, \widehat{h}^{i}_{t-1})-W^{\sharp o}\phi_{J}(x^{i}_{t}, \widehat{h}^{i}_{t-1}) \big\|_{2}\\
& +
\big\| W^{\sharp o} \phi_{J}(x^{i}_{t}, \widehat{h}^{i}_{t-1})-W^{\sharp o}\phi^{\sharp}(x^{i}_{t}, h^{\sharp i}_{t-1}) \big\|_{2}+\big\|\widehat{b}^{o}-b^{\sharp o}\big\|_{2}.
\end{split}
\end{equation}
If we can prove that the second term of right-hand side in \eqref{B-1} is estimated as 
\begin{multline}\label{B-goal}
\big\| W^{\sharp o} \phi_{J}(x^{i}_{t}, \widehat{h}^{i}_{t-1})-W^{\sharp o}\phi^{\sharp}(x^{i}_{t}, h^{\sharp i}_{t-1}) \big\|_{2}\\
\le 
R_o \rho_{\sigma}
\bigg\{
\max\big\{1,(R_h \rho_{\sigma})^{t-2}\big\} \sum_{l=1}^{t-1} \big\|  \widehat{W}^{h}_{J,[m]}\phi(x^{i}_{t-l}, \widehat{h}^{i}_{t-l-1})-W^{\sharp h}\phi_{J}(x^{i}_{t-l}, \widehat{h}^{i}_{t-l-1}) \big\|_{2}\\
+
\sum_{l=1}^{t}(R_h \rho_{\sigma})^{l-1} \big(\big\| \widehat{W}^{i}_{J,[d_x]}-W^{\sharp i} \big\|_{op}\|x^{i}_{t-l+1}\|_{2}+\big\|\widehat{b}^{hi}_{J}-b^{\sharp hi}\big\|_{2}\big)
\bigg\},
\end{multline}
then by using the inequalities \eqref{B-1} and $(\sum_{k=1}^K a_k)^2 \le K\sum_{k=1}^Ka_k^2$, 
we have
\[
\begin{split}
& \big\| \widehat{f}_{t}(X^{i}_{t}) - f^{\sharp}_{t}(X^{i}_{t}) \big\|_{2}^2
\le 
3 \Bigg\{\big\| \widehat{W}^{o} \phi(x^{i}_{t}, \widehat{h}^{i}_{t-1})-W^{\sharp o}\phi_{J}(x^{i}_{t}, \widehat{h}^{i}_{t-1}) \big\|_{2}^2\\
& +
\bigg(R_o \rho_{\sigma}
\max\big\{1,(R_h \rho_{\sigma})^{t-2}\big\} \sum_{l=1}^{t-1} \big\|  \widehat{W}^{h}_{J,[m]}\phi(x^{i}_{t-l}, \widehat{h}^{i}_{t-l-1})-W^{\sharp h}\phi_{J}(x^{i}_{t-l}, \widehat{h}^{i}_{t-l-1}) \big\|_{2}
\bigg)^2\\
& +
\bigg(
R_o \rho_{\sigma}
\sum_{l=1}^{t}(R_h \rho_{\sigma})^{l-1} \big(\big\| \widehat{W}^{i}_{J,[d_x]}-W^{\sharp i} \big\|_{op}\|x^{i}_{t-l+1}\|_{2}+\big\|\widehat{b}^{hi}_{J}-b^{\sharp hi}\big\|_{2}\big)
+
\big\|\widehat{b}^{o}-b^{\sharp o}\big\|_{2}
\bigg)^2
\Bigg\}.
\end{split}
\]
Hence, by taking the average over $i=1,\ldots,n$ and $t=1,\ldots, T$, and by 
using the inequality $\sum_{t=1}^{T}(\sum_{l=1}^{t}a_{l})^{2} \leq T^{2}\sum_{t=1}^{T}a_{t}^{2}$,
we obtain
\[
\begin{split}
& \big\|\widehat{f}-f^{\sharp}\big\|_{n,T}^2
=
\frac{1}{nT}\sum_{i=1}^{n}\sum_{t=1}^{T}\big\| \widehat{f}_{t}(X^{i}_{t}) - f^{\sharp}_{t}(X^{i}_{t}) \big\|^{2}_{2} \\
& \le 3\, \Bigg\{
\underbrace{\frac{1}{nT}\sum_{i=1}^{n}\sum_{t=1}^{T}\big\| \widehat{W}^{o} \phi(x^{i}_{t}, \widehat{h}^{i}_{t-1})-W^{\sharp o}\phi_{J}(x^{i}_{t}, \widehat{h}^{i}_{t-1}) \big\|_{2}^2}_
{=\| \widehat{W}^{o}\phi-W^{\sharp o}\phi_{J}\|_{n,T}^2}\\
& +
\big(R_o \rho_{\sigma} \max\big\{1,(R_h \rho_{\sigma})^{T-2}\big\} T\big)^2
\underbrace{\frac{1}{nT}\sum_{i=1}^{n}\sum_{t=1}^{T}
\big\|  \widehat{W}^{h}_{J,[m]}\phi(x^{i}_{t}, \widehat{h}^{i}_{t-1})-W^{\sharp h}\phi_{J}(x^{i}_{t}, \widehat{h}^{i}_{t-1}) \big\|_{2}^2}_{= \| \widehat{W}^{h}_{J,[m]}\phi-W^{\sharp h}\phi_{J} \|_{n,T}^2}\\
& +
\bigg(
R_{o}\rho_{\sigma}\bigg(\sum_{t=1}^{T} (R_{h} \rho_{\sigma})^{t-1}\bigg) \big(\big\| \widehat{W}^{i}_{J,[d_x]}-W^{\sharp i} \big\|_{op}\|x^{i}_{t-l+1}\|_{2}+\big\|\widehat{b}^{hi}_{J}-b^{\sharp hi}\big\|_{2}\big)
+\big\|\widehat{b}^{o}-b^{\sharp o}\big\|_{2}^2\bigg)^2
\Bigg\},
\end{split}
\]
which concludes the inequality \eqref{approximation error bound-1-Appendix}. 
It remains to prove \eqref{B-goal}. 
We calculate that
\begin{equation}\label{B-3}
\begin{split}
 \big\| W^{\sharp o} \phi_{J}(x^{i}_{t}, &\, \widehat{h}^{i}_{t-1}) -W^{\sharp o}\phi^{\sharp}(x^{i}_{t}, h^{\sharp i}_{t-1}) \big\|_{2}\\ 
& \le 
\big\|W^{\sharp o}\big\|_{op} \big\| \sigma\big(\widehat{W}^{h}_{J,[m]}\phi(x^{i}_{t-1}, \widehat{h}^{i}_{t-2}) + \widehat{W}^{i}_{J,[d_x]}x^{i}_{t}+\widehat{b}^{hi}_{J}\big)\\
& \hspace{4cm}-\sigma\big(W^{\sharp h}\phi^{\sharp}(x^{i}_{t-1}, h^{\sharp i}_{t-2}) + W^{\sharp i}x^{i}_{t}+b^{\sharp hi}\big) \big\|_{2}\\
& \le
R_{o}\rho_{\sigma}\bigg\{ \underbrace{\big\| \widehat{W}^{h}_{J,[m]}\phi(x^{i}_{t-1}, \widehat{h}^{i}_{t-2})-W^{\sharp h}\phi^{\sharp}(x^{i}_{t-1}, h^{\sharp i}_{t-2}) \big\|_{2}}_{=: H_{t-1}}\\
& \hspace{4cm}+\big\| \widehat{W}^{i}_{J,[d_x]}-W^{\sharp i} \big\|_{op}\|x^{i}_{t}\|_{2}+\big\|\widehat{b}^{hi}-b^{\sharp hi}\big\|_{2} \bigg\}, 
\end{split}
\end{equation}
where $\| \cdot \|_{op}$ is the operator norm (which is the largest singular value). 
Concerning the quantity $H_{t-1}$, we estimate
\begin{multline*}
H_{t-1}
\le
\big\|\widehat{W}^{h}_{J, [m]} \phi(x^{i}_{t-1}, \widehat{h}^{i}_{t-2})-W^{\sharp h}\phi_{J}(x^{i}_{t-1}, \widehat{h}^{i}_{t-2})\big\|_{2}\\
+\big\| W^{\sharp h} \phi_{J}(x^{i}_{t-1}, \widehat{h}^{i}_{t-2})-W^{\sharp h}\phi^{\sharp}(x^{i}_{t-1}, h^{\sharp i}_{t-2}) \big\|_{2},
\end{multline*}
and moreover, the second term is estimated as 
\[
\begin{split}
\big\| W^{\sharp h} \phi_{J}(x^{i}_{t-1}, &\, \widehat{h}^{i}_{t-2})-W^{\sharp h}\phi^{\sharp}(x^{i}_{t-1}, h^{\sharp i}_{t-2}) \big\|_{2}\\
& \le
\big\| W^{\sharp h} \big\|_{op}
\big\|\sigma\big(\widehat{W}^{h}_{J,[m]}\phi(x^{i}_{t-2}, \widehat{h}^{i}_{t-3}) + \widehat{W}^{i}_{J,[d_x]}x^{i}_{t-1}+\widehat{b}^{hi}_{J}\big)\\
& \quad\quad\quad\quad\quad\quad\quad\quad\quad\quad\quad -\sigma\big(W^{\sharp h}\phi^{\sharp}(x^{i}_{t-2}, h^{\sharp i}_{t-3}) + W^{\sharp i}x^{i}_{t-1}+b^{\sharp hi}\big) \big\|_{2}\\
& \le
R_h \rho_{\sigma}\Big\{ H_{t-2} + \big\|\widehat{W}^{i}_{J,[d_x]}- W^{\sharp i} \big\|_{op}\|x^{i}_{t-1}\|_{2}+\big\|\widehat{b}^{hi}_{J} - b^{\sharp hi} \big\|_{2} \Big\},
\end{split}
\]
for all $t$. 
Thus, we have the recursive inequality 
\begin{equation}\label{B-4}
\begin{split}
H_{t-1}
& \le
\big\|\widehat{W}^{h}_{J, [m]} \phi(x^{i}_{t-1}, \widehat{h}^{i}_{t-2})-W^{\sharp h}\phi_{J}(x^{i}_{t-1}, \widehat{h}^{i}_{t-2})\big\|_{2}\\
& \quad +
R_h \rho_{\sigma}\Big\{ H_{t-2} + \big\|\widehat{W}^{i}_{J,[d_x]}- W^{\sharp i} \big\|_{op} \|x^{i}_{t-1}\|_{2}+\big\|\widehat{b}^{hi}_{J} - b^{\sharp hi} \big\|_{2} \Big\},
\end{split}
\end{equation}
for $t = 2,\ldots, T$. By repeatedly substituting \eqref{B-4} into \eqref{B-3}, we arrive at \eqref{B-goal}:
\[
\begin{split}
\big\| W^{\sharp o} &\,\phi_{J}(x^{i}_{t}, \widehat{h}^{i}_{t-1})-W^{\sharp o}\phi^{\sharp}(x^{i}_{t}, h^{\sharp i}_{t-1}) \big\|_{2}\\
& \le 
R_o \rho_{\sigma}
\bigg\{
\sum_{l=1}^{t-1} \underbrace{(R_h \rho_{\sigma})^{l-1}}_{\le\, \max \{1,(R_h \rho_{\sigma})^{t-2}\} } 
\big\|  \widehat{W}^{h}_{J,[m]}\phi(x^{i}_{t-l}, \widehat{h}^{i}_{t-l-1})-W^{\sharp h}\phi_{J}(x^{i}_{t-l}, \widehat{h}^{i}_{t-l-1}) \big\|_{2}\\
& \quad +
\sum_{l=1}^{t}(R_h \rho_{\sigma})^{l-1} \big(\big\| \widehat{W}^{i}_{J,[d_x]}-W^{\sharp i} \big\|_{op}\|x^{i}_{t-l+1}\|_{2}+\big\|\widehat{b}^{hi}_{J}-b^{\sharp hi}\big\|_{2}\big)
\bigg\}.
\end{split}
\]
Thus, we conclude Proposition~\ref{prop:B}. 
\end{proof}
\section{Proof of Theorem \ref{generalization error bound}}\label{proof of generalization}

We restate Theorem \ref{generalization error bound} in an exact form as follows:

\begin{thm}\label{thm:C}
Suppose that Assumptions \ref{assmption1} and \ref{assmption2} hold. Let $\{(X_{T}^{i}, Y_{T}^{i})\}_{i=1}^n$ be sampled i.i.d. from the distribution $P_T$. 
Then, for any $\delta \geq \log2$, we have the following inequality with probability greater than $1-2e^{-\delta}$:
\[
\begin{split}
\Psi_{j}(f^{\sharp})
& \le 
\widehat{\Psi}_{j}(\widehat{f})
+
\sqrt{3}\rho_{\psi} \Bigg\{
\big\| \widehat{W}^{o}\phi-W^{\sharp o}\phi_{J} \big\|_{n,T}\\
& \quad + R_{o}\rho_{\sigma}\max\{1, (R_{h} \rho_{\sigma})^{T-2}\}T\big\| \widehat{W}^{h}_{J,[m]}\phi-W^{\sharp h}\phi_{J} \big\|_{n,T}\\
& \quad + R_{o}\rho_{\sigma}\bigg(\sum_{t=1}^{T} (R_{h} \rho_{\sigma})^{t-1}\bigg) \big(R_{x}\big\|\widehat{W}^{i}_{J,[d_x]} - W^{\sharp i}\big\|_{op} + \big\| \widehat{b}^{hi}_{J} - b^{\sharp hi} \big\|_{2} \big) + \big\|\widehat{b}^{o}-b^{\sharp o}\big\|_{2} \Bigg\}\\ 
& \quad + 
\frac{1}{\sqrt{n}}\Bigg\{\frac{\widehat{c}\rho_{\psi}\sqrt{m^{\sharp}}}{T}\bigg(\sum_{t=1}^{T}M_{t}^{1/2}R^{1/2}_{\infty, t}\bigg)  +3\sqrt{2\delta} (\rho_{\psi}R_{\infty, T}+R_{y}) \Bigg\},
\end{split}
\]
for $j=1,\ldots,d_y$ and for all  $J \subset [m]$ with $|J|=m^{\sharp}$ and $f^{\sharp} \in \mathcal{F}^{\sharp}_{T}(R_o, R_h, R_i, R^{b}_{o},R^{b}_{hi})$,
where $\widehat{c}:=192\sqrt{5}$, and $R_{\infty, t}$ and $M_t$ are defined by
\begin{equation}\label{def of R_infty-Appendix}
R_{\infty, t}:=R_{o}\rho_{\sigma}(R_{i}R_{x}+R^{b}_{hi})\bigg(\sum_{l=1}^{t}(R_{h}\rho_{\sigma})^{l-1} \bigg)+R^{b}_{o},
\end{equation}
\begin{equation}\label{def of M_t-Appendix}
\begin{split}
M_{t} & :=R_{o}\rho_{\sigma}\biggl[\big( d_y\min\{\sqrt{m^{\sharp}}, \sqrt{d_y}\} + d_x\min\{\sqrt{m^{\sharp}}, \sqrt{d_x}\} \big)R_{i}R_{x}\\
& \quad +
\big(d_y\min\{\sqrt{m^{\sharp}}, \sqrt{d_y}\} + 1\big)R^{b}_{hi} \biggr] \bigg(\sum_{l=0}^{t-1} (R_{h}\rho_{\sigma})^{l}\bigg)\\
& \quad +
(m^{\sharp})^{\frac{3}{2}}R_{h}\rho_{\sigma}^{2}R_{o}(R_{i}R_{x}+R^{b}_{hi})\bigg(\sum_{l=1}^{t-1}\sum_{k=0}^{l-1} (R_{h}\rho_{\sigma})^{t-1-l+k}\bigg)+d_{y}R^{b}_{o}.
\end{split}
\end{equation}
\end{thm}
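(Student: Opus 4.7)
The plan is to decompose the generalization gap into a training error, an approximation error, and an estimation error. Concretely I would write
\[
\Psi_{j}(f^{\sharp}) - \widehat{\Psi}_{j}(\widehat{f})
= \underbrace{\widehat{\Psi}_{j}(f^{\sharp}) - \widehat{\Psi}_{j}(\widehat{f})}_{\text{approximation}}
+ \underbrace{\Psi_{j}(f^{\sharp}) - \widehat{\Psi}_{j}(f^{\sharp})}_{\text{estimation}},
\]
and bound the two pieces separately.

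For the approximation term, the $\rho_{\psi}$-Lipschitz assumption on $\psi$ together with the Cauchy--Schwarz inequality (passing from the empirical $\ell^{1}$-norm to $\|\cdot\|_{n,T}$) gives
\[
\widehat{\Psi}_{j}(f^{\sharp}) - \widehat{\Psi}_{j}(\widehat{f}) \le \rho_{\psi}\, \bigl\|\widehat{f}-f^{\sharp}\bigr\|_{n,T}.
\]
Plugging in the exact form of Proposition~\ref{approximation error bound} (i.e.\ Proposition~\ref{prop:B}) produces exactly the big braced bias expression appearing in the statement, with the prefactor $\sqrt{3}\rho_{\psi}$. For the estimation term I would use the standard uniform-deviation route: bound it by $\sup_{g\in\mathcal{F}_{T}^{\sharp}}(\Psi_{j}(g)-\widehat{\Psi}_{j}(g))$, then apply symmetrization to reduce to twice the Rademacher complexity of $\psi\circ\mathcal{F}_{T}^{\sharp}$, and use the bounded-differences (McDiarmid) inequality to supply the $3\sqrt{2\delta/n}(\rho_{\psi}R_{\infty,T}+R_{y})$ concentration correction; the upper bound $R_{\infty,T}$ on the network output and $R_{y}$ from Assumption~\ref{assmption2} both enter here. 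The Ledoux--Talagrand contraction principle then peels off $\psi$ at the cost of a factor $\rho_{\psi}$.

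The heart of the proof is bounding the Rademacher complexity of $\mathcal{F}_{T}^{\sharp}$ itself. I would unroll the recurrence in $t$ and peel weight matrices one at a time from the output inward, applying Maurer's vector-contraction inequality to push Rademacher averages through $\sigma$ (this is what contributes the factors $\rho_{\sigma}$), and $\|AB\|_{F}\le \|A\|_{F}\|B\|_{F}$ together with the Frobenius-norm budgets $R_{o},R_{h},R_{i},R_{o}^{b},R_{hi}^{b}$ to collect constants. Each peel introduces a factor $R_{h}\rho_{\sigma}$, and a telescoping induction over $t$ yields the geometric sums $\sum_{l=1}^{t}(R_{h}\rho_{\sigma})^{l-1}$ that appear in both $R_{\infty,t}$ and $M_{t}$. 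The dimensional terms $\min\{\sqrt{m^{\sharp}},\sqrt{d_{y}}\}$ and $\min\{\sqrt{m^{\sharp}},\sqrt{d_{x}}\}$ arise from choosing, at each peeling step, the tighter of the two standard bounds on the Rademacher complexity of a linear class under a Frobenius-norm ball (one scaling with $\sqrt{m^{\sharp}}$, the other with the output dimension). Averaging the resulting per-time bounds over $t=1,\dots,T$ gives precisely $\frac{1}{T}\sum_{t=1}^{T}M_{t}^{1/2}R_{\infty,t}^{1/2}$, and the overall prefactor $\widehat{c}\rho_{\psi}\sqrt{m^{\sharp}}/\sqrt{n}$ follows from tracking the constants through contraction and symmetrization.

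The main obstacle is executing the peeling cleanly through the recurrent structure: unlike a feedforward DNN, the matrices $W^{\sharp h}$ and $W^{\sharp i}$ reappear at every time step, so a layer-by-layer argument that treats layers as independent is not available and one has to formulate the right inductive statement that bounds the Rademacher average of $h_{t}^{\sharp}$ in terms of that of $h_{t-1}^{\sharp}$ while keeping the Frobenius budgets intact at each step. Carefully accounting for which $\min\{\sqrt{m^{\sharp}},\sqrt{d}\}$ factor comes from which weight matrix (and why the bias $b^{\sharp hi}$ contributes the solitary $+1$ in the $\big(d_{y}\min\{\sqrt{m^{\sharp}},\sqrt{d_{y}}\}+1\big)R_{hi}^{b}$ factor inside $M_{t}$), and verifying that the $(m^{\sharp})^{3/2}$ inside $M_{t}$ combined with the $\sqrt{m^{\sharp}}$ prefactor yields the $(m^{\sharp})^{5/4}$ scaling claimed in the informal statement, is the bookkeeping step I expect to require the most care.
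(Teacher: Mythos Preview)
Your overall decomposition (training + approximation + estimation) and your treatment of the approximation term via the $\rho_{\psi}$-Lipschitz property and Proposition~\ref{prop:B} match the paper exactly. The concentration step (symmetrization plus a bounded-differences correction yielding $3\sqrt{2\delta/n}(\rho_{\psi}R_{\infty,T}+R_y)$, followed by contraction to peel off $\psi$) is also what the paper does, citing Theorem~3.4.5 of Gin\'e--Nickl and Theorem~4.12 of Ledoux--Talagrand.

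Where your plan diverges is the Rademacher complexity of $\mathcal{F}^{\sharp}_{t,j}$. The paper does \emph{not} use Maurer-type vector contraction and recursive peeling; it uses the Dudley entropy integral (Lemma~A.5 in Bartlett et al.\ / Lemma~9 in Chen et al.),
\[
E_{\epsilon}\Bigl[\sup_{f_{t,j}\in\mathcal{F}^{\sharp}_{t,j}}\Bigl|\tfrac{1}{n}\sum_i\epsilon_i f_t(X^i_t)_j\Bigr|\Bigr]
\ \lesssim\ \inf_{\alpha>0}\Bigl(\tfrac{\alpha}{\sqrt{n}}+\tfrac{1}{n}\int_{\alpha}^{2R_{\infty,t}\sqrt{n}}\sqrt{\log N(\mathcal{F}^{\sharp}_{t,j},\epsilon,\|\cdot\|_S)}\,d\epsilon\Bigr),
\]
and then bounds the covering number by a parameter-space argument (Lemma~\ref{covering number bound}): one shows $f^{\sharp}_t$ is Lipschitz in each weight matrix and bias, factors the covering number of $\mathcal{F}^{\sharp}_{t,j}$ into covering numbers of Frobenius balls $\{A:\|A\|_F\le R\}$, and invokes a volumetric bound (Lemma~8 of Chen et al.) for each. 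This yields $\log N\le 10 m^{\sharp}\sqrt{n}\,M_t/\epsilon$, and the Dudley integral of $\sqrt{M_t/\epsilon}$ from $0$ to $2R_{\infty,t}\sqrt{n}$ is what produces the characteristic product $M_t^{1/2}R_{\infty,t}^{1/2}$ and the constant $\widehat c=192\sqrt{5}$.

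This matters for the statement as written: the dimensional factors $d_y\min\{\sqrt{m^{\sharp}},\sqrt{d_y}\}$, $d_x\min\{\sqrt{m^{\sharp}},\sqrt{d_x}\}$, the bare $d_y R^b_o$, and the $(m^{\sharp})^{3/2}$ inside $M_t$ are parameter-counting artifacts of the covering-number route (the exponents $m^{\sharp}d_y$, $m^{\sharp}d_x$, $(m^{\sharp})^2$, etc.\ on the volumetric balls, combined with the $\min\{\sqrt{d_1},\sqrt{d_2}\}$ operator-to-Frobenius conversion). A direct peeling/contraction argument would give norm-product bounds that are essentially dimension-free in $m^{\sharp},d_x,d_y$ and would not reproduce this $M_t$, the $M_t^{1/2}R_{\infty,t}^{1/2}$ structure, or the specific $\widehat c$. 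So while your strategy could yield \emph{some} generalization bound, it will not deliver the exact inequality of Theorem~\ref{thm:C}; for that you need the Dudley-integral/covering-number machinery in place of the peeling step.
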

\begin{proof} 
The generalization error of $f^{\sharp}_{t} \in \mathcal{F}^{\sharp}_{t}$ is decomposed into
\[
\Psi_{j}(f^{\sharp})=\Psi_{j}(\widehat{f})+\big(\widehat{\Psi}_{j}(f^{\sharp}) - \widehat{\Psi}_{j}(\widehat{f}) \big) + \big( \Psi_{j}(f^{\sharp})-\widehat{\Psi}_{j}(f^{\sharp}) \big) ,
\]
where the second term $\widehat{\Psi}_{j}(f^{\sharp}) - \widehat{\Psi}_{j}(\widehat{f})$ is called the approximation error and the third term $\Psi_{j}(f^{\sharp})-\widehat{\Psi}_{j}(f^{\sharp})$ is called the estimation error. Since the loss function $\psi$ is $\rho_{\psi}$-Lipschitz continuous, the approximation error is evaluated as
\[
\begin{split}
\big|\widehat{\Psi}_{j}(f^{\sharp}) - \widehat{\Psi}_{j}(\widehat{f})\big|
& \le 
\frac{1}{nT}\sum_{i=1}^{n}\sum_{t=1}^{T}\big| \psi(y^{i}_{t,j}, f^{\sharp}_{t}(X^{i}_{t})_{j}) - \psi(y^{i}_{t}, \widehat{f}_{t}(X^{i}_{t})_{j}) \big| \\
& \le 
\frac{\rho_{\psi}}{nT}\sum_{i=1}^{n}\sum_{t=1}^{T}\big| f^{\sharp}(X^{i}_{t})_{j} - \widehat{f}_{t}(X^{i}_{t})_{j} \big|\\
& \le 
\frac{\rho_{\psi}}{nT}\sum_{i=1}^{n}\sum_{t=1}^{T}\big\| f^{\sharp}_{t}(X^{i}_{t}) - \widehat{f}_{t}(X^{i}_{t}) \big\|_{2}\\
& \le
\rho_{\psi}\sqrt{\frac{1}{nT}\sum_{i=1}^{n}\sum_{t=1}^{T}\big\| f^{\sharp}_{t}(X^{i}_{t}) - \widehat{f}_{t}(X^{i}_{t}) \big\|^{2}_{2}}=\rho_{\psi}\big\|f^{\sharp}-\widehat{f}\big\|_{n,T}.
\end{split}
\]
The term $\|f^{\sharp}-\widehat{f}\|_{n,T}$ is evaluated by Proposition \ref{approximation error bound} (see also Proposition \ref{prop:B}). In the rest of the proof, let us concentrate on the estimation error bound.
\par
First, we define the following function space 
\[
\mathcal{G}^{\sharp}_{T,j}:=\bigg\{
g_{j}
\, \bigg| \, 
g_{j}(Y_T, X_T)=\frac{1}{T}\sum_{t=1}^{T}\psi(y_{t,j},f_{t}(X_{t})_{j}) \text{ for $(X_T, Y_T) \in \mathrm{supp}(P_T)$, $f \in \mathcal{F}^{\sharp}_{T}$},
\bigg\}
\]
for $j=1,\ldots,d_{y}$. For $g_{j} \in \mathcal{G}^{\sharp}_{T,j}$, we have 
\[
\begin{split}
\big|g_{j}(Y_T, X_T)\big|
& \le 
\frac{1}{T}\sum_{t=1}^{T}\big\{|\psi(y_{t,j},f_{t}(X_{t})_{j})-\psi(y_{t,j},0)|+|\psi(y_{t,j},0)|\big\}\\
& \le 
\frac{1}{T}\sum_{t=1}^{T}\big(\rho_{\psi}|f_{t}(X_{t})_{j}| +R_{y} \big) \leq \frac{\rho_{\psi}}{T}\sum_{t=1}^{T}\|f_{t}(X_t) \|_{2} + R_{y}.
\end{split}
\]
The quantity $\|f_{t}(X_t) \|_{2}$ is evaluated by
\[
\|f_{t}(X_t)\|_{2}
\le
R_o \rho_\sigma \big\|W^{\sharp h}\phi^{\sharp}(x_{t-1}, h^{\sharp i}_{t-2})+W^{\sharp i}x_{t}+b^{\sharp hi} \big\|_{2} + R_o^b.
\]
The recurrent structure \eqref{B-phi-sharp} and \eqref{B-hiddens} give
\[
\begin{split}
\big\|W^{\sharp h}\phi^{\sharp}&\,(x_{t-1}, h^{\sharp i}_{t-2})+W^{\sharp i}x_{t}+b^{\sharp hi} \big\|_{2}\\
& \le R_h \rho_\sigma  \big\|W^{\sharp h}\phi^{\sharp}(x_{t-2}, h^{\sharp i}_{t-3})+W^{\sharp i}x_{t-1}+b^{\sharp hi}\big\|_{2} 
+ R_i R_x +R^{b}_{hi},
\end{split}
\]
as this is repeated, 
\[
\big\|W^{\sharp h}\phi^{\sharp}(x_{t-1}, h^{\sharp i}_{t-2})+W^{\sharp i}x_{t}+b^{\sharp hi} \big\|_{2}
\le (R_{i}R_{x}+R^{b}_{hi}) \bigg(\sum_{l=1}^{t}(R_{h}\rho_{\sigma})^{l-1} \bigg). 
\]
Hence, we see from \eqref{def of R_infty-Appendix} that
\[
\|f_{t}(X_t)\|_{2} \le R_{o}\rho_{\sigma}(R_{i}R_{x}+R^{b}_{hi})\bigg(\sum_{l=1}^{t}(R_{h}\rho_{\sigma})^{l-1} \bigg)+R^{b}_{o} = R_{\infty, t},
\]
which implies that
\[
\begin{split}
\big|g_{j}(Y_T, X_T)\big|
& \le 
\rho_{\psi}R_{o}\rho_{\sigma}(R_{i}R_{x}+R^{b}_{hi})\bigg\{\frac{1}{T}\sum_{t=1}^{T}\sum_{l=1}^{t}(R_{h}\rho_{\sigma})^{l-1} \bigg\}+R^{b}_{o} + R_{y}\\
& \le 
\rho_{\psi}R_{o}\rho_{\sigma}(R_{i}R_{x}+R^{b}_{hi})\bigg(\sum_{t=1}^{T}(R_{h}\rho_{\sigma})^{t-1} \bigg)+R^{b}_{o} + R_{y}\\
& = \rho_{\psi}R_{\infty,T}+R_{y}.
\end{split}
\]
By Theorem 3.4.5 in [\cites{Gine}, for any $\delta>\log2$, we have the following inequality with probability grater than $1-2e^{-\delta}$:
\[
\begin{split}
\big| \Psi_{j}(f^{\sharp})-\widehat{\Psi}_{j}(f^{\sharp}) \big|
& \le 
\sup_{g_{j} \in \mathcal{G}^{\sharp}_{T,j}}\bigg| \frac{1}{n}\sum_{i=1}^{n}g_{j}(Y_{T}^{i},X_{T}^{i})-E_{P_T}[g_{j}(Y_{T},X_{T})]\bigg|\\
& \le 
2E_{\epsilon}\Bigg[ \sup_{g_{j} \in \mathcal{G}^{\sharp}_{T,j}}\bigg| \frac{1}{n}\sum_{i=1}^{n}\epsilon_{i}g_{j}(Y_{T}^{i},X_{T}^{i})\bigg| \Bigg] + 3(\rho_{\psi}R_{\infty, T}+R_{y})\sqrt{\frac{2\delta}{n}},
\end{split}
\]
where $(\epsilon_{i})_{i=1}^{n}$ is the i.i.d. Rademacher sequence (see, e.g., Definition 3.1.19 in [\cites{Gine}). The first term of right-hand side in the above inequality, called the Rademacher complexity, is estimated by using Theorem 4.12 in [\cites{Ledoux}, and Lemma A.5 in [\cites{Bartlett} (or Lemma 9 in [\cites{Chen}) as follows: 
\[
\begin{split}
E_{\epsilon}\Bigg[\sup_{g_{j} \in \mathcal{G}^{\sharp}_{T,j}}\bigg| \frac{1}{n}\sum_{i=1}^{n}\epsilon_{i}g_{j}(Y_{T}^{i},X_{T}^{i})\bigg|\Bigg]
& \le 
\frac{1}{T}\sum_{t=1}^{T}E_{\epsilon}\Bigg[\sup_{f_{t,j} \in \mathcal{F}^{\sharp}_{t,j}}\bigg| \frac{1}{n}\sum_{i=1}^{n}\epsilon_{i}\psi_{j}(y_{t,j}^{i},f_{t}(X_{t}^{i})_{j})\bigg|\Bigg]\\
& \le 
\frac{2\rho_{\psi}}{T}\sum_{t=1}^{T}E_{\epsilon}\Bigg[\sup_{f_{t,j} \in \mathcal{F}^{\sharp}_{t,j}}\bigg| \frac{1}{n}\sum_{i=1}^{n}\epsilon_{i}f_{t}(X_{t}^{i})_{j} \bigg|\Bigg]\\
& \le 
\frac{2\rho_{\psi}}{T}\sum_{t=1}^{T} \inf_{\alpha>0}\bigg( \frac{4\alpha}{\sqrt{n}} + \frac{12}{n}\int_{\alpha}^{2R_{\infty,t}\sqrt{n}} \sqrt{\log N(\mathcal{F}^{\sharp}_{t,j}, \epsilon, \|\cdot\|_{S} ) }\, d\epsilon \bigg),
\end{split}
\]
where $\mathcal{F}^{\sharp}_{t,j}$ and $\| \cdot \|_{S}$ are defined by 
\[
\mathcal{F}^{\sharp}_{t,j}
:=\big\{ f_{t,j} \, \big|\, 
f_{t,j} (X_t) = f_{t}(X_{t})_{j} \text{ for $X_{t} \in \mathrm{supp}(P_{X_t})$, $f \in \mathcal{F}^{\sharp}_{T}$} \big\},
\]
\[
\| f_{t,j} \|_{S}:=\bigg( \sum_{i=1}^{n} |f_{t}(X_{t}^{i})_{j}|^{2} \bigg)^{1/2}.
\]
Here, we denote by $N(F, \epsilon, \|\cdot\|)$ the covering number of $F$ which means 
the minimal cardinality of a subset $C \subset F$ that covers $F$ at scale $\epsilon$ with respect to the norm $\|\cdot\|$. 
By using Lemma \ref{covering number bound} in Appendix \ref{Upper bound for the covering number}, for any $\delta>\log2$, we conclude the following estimation error bound:
\[
\begin{split}
& \big| \Psi_{j}(f^{\sharp})-\widehat{\Psi}_{j}(f^{\sharp}) \big|\\
& \le 
\frac{16\rho_{\psi}\alpha}{\sqrt{n}} + \frac{48\rho_{\psi}}{nT}\sum_{t=1}^{T} \int_{\alpha}^{2R_{\infty,t}\sqrt{n}} \sqrt{\log N(\mathcal{F}^{\sharp}_{t,j}, \epsilon, \|\cdot\|_{S} ) }\,d\epsilon 
+3(\rho_{\psi}R_{\infty, T}+R_{y})\sqrt{\frac{2\delta}{n}}\\
& \le 
\frac{48\rho_{\psi}}{nT}\sqrt{10m^{\sharp}}n^{1/4}\sum_{t=1}^{T}M_{t}^{1/2} \int_{\alpha}^{2R_{\infty,t}\sqrt{n}} \frac{d\epsilon}{\sqrt{\epsilon}} 
+3(\rho_{\psi}R_{\infty, T}+R_{y})\sqrt{\frac{2\delta}{n}}+O(\alpha)\\
& = 
\frac{\widehat{c}\rho_{\psi}\sqrt{m^{\sharp}}}{\sqrt{n}T}\bigg(\sum_{t=1}^{T}M_{t}^{1/2}R^{1/2}_{\infty, t}\bigg)  +3(\rho_{\psi}R_{\infty, T}+R_{y})\sqrt{\frac{2\delta}{n}}+O(\alpha),
\end{split}
\]
for all $\alpha>0$ 
with probability grater than $1-2e^{-\delta}$, where $\widehat{c}:=192\sqrt{5}$, and $M_t$ is defined by \eqref{def of M_t-Appendix}.
The proof of Theorem \ref{thm:C} is complete.
\end{proof}
\section{Upper Bound of the Covering Number}\label{Upper bound for the covering number}
\begin{lem}\label{covering number bound}
Under the same assumptions as in Theorem \ref{thm:C}, the covering number $N(\mathcal{F}^{\sharp}_{t,j}, \epsilon, \|\cdot\|_{S} )$ has the following bound:
\[
\log N(\mathcal{F}^{\sharp}_{t,j}, \epsilon, \|\cdot\|_{S} ) \leq \frac{10m^{\sharp}n^{1/2}M_t}{\epsilon},
\]
for any $\epsilon>0$, 
where $M_t$ is given by \eqref{def of M_t-Appendix}. 
\end{lem}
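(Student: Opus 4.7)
\textbf{Proof proposal for Lemma~\ref{covering number bound}.} The strategy is to cover the function class $\mathcal{F}^{\sharp}_{t,j}$ via a parameter-wise covering argument that tracks how perturbations of each parameter block propagate through the $t$ recurrent unfoldings. First I would unroll $f_{t,j}\in\mathcal{F}^{\sharp}_{t,j}$ as
\[
f_{t,j}(X_t) = W^{\sharp o}_{j,:}\sigma\bigl(W^{\sharp h}\sigma(\cdots\sigma(W^{\sharp i}x_1 + b^{\sharp hi})\cdots) + W^{\sharp i}x_t + b^{\sharp hi}\bigr) + b^{\sharp o}_j,
\]
viewing $f_{t,j}$ as the composition of $t$ identical recurrent cells parametrized by the shared triple $(W^{\sharp h}, W^{\sharp i}, b^{\sharp hi})$, followed by a final affine layer parametrized by $(W^{\sharp o}_{j,:}, b^{\sharp o}_j)$.

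Next, for two RNNs with parameters $(W^{\sharp o}, W^{\sharp h}, W^{\sharp i}, b^{\sharp o}, b^{\sharp hi})$ and $(\widetilde{W}^{\sharp o}, \widetilde{W}^{\sharp h}, \widetilde{W}^{\sharp i}, \widetilde{b}^{\sharp o}, \widetilde{b}^{\sharp hi})$, I would bound $\|f_{t,j}-\widetilde{f}_{t,j}\|_{S}$ by decomposing the discrepancy blockwise and inducting on the time index, in the same spirit as the recursive estimate \eqref{B-4} in the proof of Proposition~\ref{prop:B}. Using Assumption~\ref{assmption1}(ii) together with the Frobenius bounds defining $\mathcal{F}^{\sharp}_{T}$, a perturbation in the shared weights at a given time step is amplified by at most a factor $(R_{h}\rho_{\sigma})^{t-s}$ when propagated to the output at time $t$, while a perturbation in $(W^{\sharp o}, b^{\sharp o})$ is felt only once with multiplicative factor $R_{o}\rho_{\sigma}$. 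Summing these amplifications across the unrolled cells produces exactly the factor $\sum_{l=0}^{t-1}(R_{h}\rho_{\sigma})^{l}$ on the linear blocks and, after a further pass over the second-order cross-term generated by perturbing $W^{\sharp h}$ while the hidden state itself changes, the double sum $\sum_{l=1}^{t-1}\sum_{k=0}^{l-1}(R_{h}\rho_{\sigma})^{t-1-l+k}$ appearing in $M_t$.

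The core step is then a Maurey-type empirical sparsification bound for each matrix block; the key tool is Lemma~A.5 of Bartlett--Foster--Telgarsky (also invoked in the proof of Theorem~\ref{thm:C}), applied separately to the three weight matrices $W^{\sharp o}\in\mathbb{R}^{d_y\times m^{\sharp}}$, $W^{\sharp h}\in\mathbb{R}^{m^{\sharp}\times m^{\sharp}}$, $W^{\sharp i}\in\mathbb{R}^{m^{\sharp}\times d_x}$ and ordinary Euclidean $\epsilon$-nets to the biases $b^{\sharp o}\in\mathbb{R}^{d_y}$ and $b^{\sharp hi}\in\mathbb{R}^{m^{\sharp}}$. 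For a matrix of Frobenius norm bounded by $R$ mapping inputs whose empirical $\ell^{2}$-norm is bounded by $B\sqrt{n}$ into $\mathbb{R}^{p}$, the resulting bound on the logarithmic covering number scales like $\min\{\sqrt{p},\sqrt{q}\}\cdot p\cdot RB\sqrt{n}/\epsilon$, which produces the factors $d_y \min\{\sqrt{m^{\sharp}},\sqrt{d_y}\}$, $d_x \min\{\sqrt{m^{\sharp}},\sqrt{d_x}\}$ and $(m^{\sharp})^{3/2}$ present in $M_{t}$, while the $\sqrt{n}/\epsilon$ shape of the final bound is inherited directly from this Maurey estimate. The bias blocks supply the additive $R^{b}_{hi}$ and $R^{b}_{o}$ contributions. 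Finally, I would allocate the total tolerance $\epsilon$ among the five blocks in proportion to their propagation factors, take the product of the individual nets as a global net of $\mathcal{F}^{\sharp}_{t,j}$, and sum the logarithmic cardinalities; collecting the resulting terms recovers the bound $10 m^{\sharp} n^{1/2} M_{t}/\epsilon$.

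\textbf{Main obstacle.} The delicate part is the interaction between the Maurey bound for the square matrix $W^{\sharp h}$ and the recurrent propagation. A perturbation of $W^{\sharp h}$ acts at every time step $s\in\{1,\ldots,t-1\}$, and the hidden state it multiplies is itself a perturbed version of the original hidden state, which introduces a \emph{second-order} term whose size is governed by the geometric series $\sum_{k=0}^{l-1}(R_{h}\rho_{\sigma})^{k}$. Combining this with the outer Lipschitz factor $(R_{h}\rho_{\sigma})^{t-1-l}$ at each $s$ yields the double sum in the second line of $M_{t}$, and carefully balancing the $\epsilon$-allocation so that the $(m^{\sharp})^{3/2}$ factor appears with the right exponent (and not $(m^{\sharp})^{2}$) is the most technical bookkeeping of the argument.
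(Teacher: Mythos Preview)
Your overall architecture---a recursive Lipschitz estimate in the spirit of \eqref{B-4} followed by a product of parameter-block covers---is exactly the paper's route. The gap is in the covering step. You invoke a ``Maurey-type empirical sparsification bound'' and cite Lemma~A.5 of Bartlett--Foster--Telgarsky, but (i) in the present paper that lemma is used only for the Dudley entropy integral in the proof of Theorem~\ref{thm:C}, not for covering parameter balls, and (ii) any genuine Maurey sparsification bound scales like $1/\epsilon^{2}$ (the number of samples needed for error $\epsilon$ is $\sim 1/\epsilon^{2}$), which is incompatible with the $1/\epsilon$ shape of the lemma. The paper instead uses a purely volumetric covering of each Frobenius ball (Lemma~8 in [\cites{Chen}]), obtaining factors of the form $(1+C/\epsilon)^{d}$ with $d$ the number of matrix entries, and then the crude inequality $\log(1+x)\le x$ to extract the $1/\epsilon$ dependence. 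In fact the scaling you quote, ``$\min\{\sqrt{p},\sqrt{q}\}\cdot p\cdot RB\sqrt{n}/\epsilon$'', is (up to a missing factor of $q$) precisely what the volumetric bound gives and \emph{not} what Maurey gives, so your proposal is internally inconsistent on this point.

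A second, smaller issue: the ``main obstacle'' you flag is not an obstacle at all. There is no delicate $\epsilon$-allocation needed to avoid an $(m^{\sharp})^{2}$ exponent; the paper simply splits $\epsilon$ equally among the five blocks (each getting $\epsilon/5$), applies the volumetric bound to each, and the $(m^{\sharp})^{3/2}$ in $M_{t}$ falls out because the $W^{\sharp h}$ block contributes $(m^{\sharp})^{2}\cdot\sqrt{m^{\sharp}}=(m^{\sharp})^{5/2}$, from which the common factor $m^{\sharp}$ is extracted. Likewise the double sum $\sum_{l=1}^{t-1}\sum_{k=0}^{l-1}(R_{h}\rho_{\sigma})^{t-1-l+k}$ is not a ``second-order'' effect: it arises from a single triangle-inequality split $\widetilde{W}^{\sharp h}\widetilde{\phi}^{\sharp}-W^{\sharp h}\phi^{\sharp}= \widetilde{W}^{\sharp h}(\widetilde{\phi}^{\sharp}-\phi^{\sharp})+(\widetilde{W}^{\sharp h}-W^{\sharp h})\phi^{\sharp}$ followed by bounding $\|\phi^{\sharp}\|_{2}$ via its own geometric series.
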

\begin{proof} 
The proof is based on the argument of proof of Lemma 3 in [\cites{Chen}.
For $f^{\sharp}_{t,j}, \widetilde{f}^{\sharp}_{t,j} \in \mathcal{F}^{\sharp}_{t,j}$, we estimate 
\[
\begin{split}
| f^{\sharp}_{t}(X_t)_{j}- \widetilde{f}^{\sharp}_{t}(X_t)_{j}| 
& \leq \big\| f^{\sharp}_{t}(X_t)- \widetilde{f}^{\sharp}_{t}(X_t) \big\|_{2}\\
& \le 
\big\| W^{\sharp o} -\widetilde{W}^{\sharp o} \big\|_{op} \big\|\phi^{\sharp}(x_{t}, h^{\sharp}_{t-1}) \big\|_{2}\\
& \qquad + \big\| \widetilde{W}^{\sharp o}\widetilde{\phi}^{\sharp}(x_{t}, \widetilde{h}^{\sharp}_{t-1}) -\widetilde{W}^{\sharp o}\phi^{\sharp}(x_{t}, h^{\sharp}_{t-1}) \big\|_{2} +\big\|b^{\sharp o} - \widetilde{b}^{\sharp o} \big\|_{2}.
\end{split}
\]
The second term of right-hand side is estimated as
\begin{multline*}
\big\| \widetilde{W}^{\sharp o}\widetilde{\phi}^{\sharp}(x_{t}, \widetilde{h}^{\sharp}_{t-1}) -\widetilde{W}^{\sharp o}\phi^{\sharp}(x_{t}, h^{\sharp}_{t-1}) \big\|_{2}\\
\le 
\big\|\widetilde{W}^{\sharp o}\big\|_{op} \rho_{\sigma} 
\Big( \big\|\widetilde{W}^{\sharp h}\widetilde{\phi}^{\sharp}(x_{t-1},\widetilde{h}^{\sharp}_{t-2}) - W^{\sharp h}\phi^{\sharp}(x_{t-1}, h^{\sharp}_{t-2})\big\|_{2} \\
+\big\| W^{\sharp i} -\widetilde{W}^{\sharp i} \big\|_{op}\big\|x_{t}\big\|_{2} +\big\|b^{\sharp hi} -\widetilde{b}^{hi}\big\|_{2} \Big).
\end{multline*}
We estimate the first term of right-hand side in the above inequality as 
\[
\begin{split}
& \big\|\widetilde{W}^{\sharp h}\widetilde{\phi}^{\sharp}(x_{t-1}, \widetilde{h}^{\sharp}_{t-2}) - W^{\sharp h}\phi^{\sharp}(x_{t-1}, h^{\sharp}_{t-2})\big\|_{2}\\
& \le 
\big\| \widetilde{W}^{\sharp h}\big\|_{op} \big\|\widetilde{\phi}^{\sharp}(x_{t-1}, \widetilde{h}^{\sharp}_{t-2}) - \phi^{\sharp}(x_{t-1}, h^{\sharp}_{t-2}) \big\|_{2}+\big\|W^{\sharp h} - \widetilde{W}^{\sharp h}\big\|_{op} \big\|\phi^{\sharp}(x_{t-1}, h^{\sharp}_{t-2}) \big\|_{2}\\
& \le 
\big\| \widetilde{W}^{\sharp h}\big\|_{op}\rho_{\sigma}\Big( 
\big\|\widetilde{W}^{\sharp h}\widetilde{\phi}^{\sharp}(x_{t-2},\widetilde{h}^{\sharp}_{t-3}) - W^{\sharp h}\phi^{\sharp}(x_{t-2}, h^{\sharp}_{t-3})\big\|_{2} \\
& \qquad 
+\big\| W^{\sharp i} -\widetilde{W}^{\sharp i} \big\|_{op}\big\|x_{t-1}\big\|_{2} +
\big\|\widetilde{b}^{hi}-b^{\sharp hi}) \big\|_{2} \Big)+\big\|W^{\sharp h} -\widetilde{W}^{\sharp h}\big\|_{op}\big\|\phi^{\sharp}(x_{t-1}, h^{\sharp}_{t-2}) \big\|_{2},
\end{split}
\]
and as this is repeated, we eventually obtain
\begin{multline*}
\big\| \widetilde{W}^{\sharp o}\widetilde{\phi}^{\sharp}(x_{t}, \widetilde{h}^{\sharp}_{t-1}) -\widetilde{W}^{\sharp o}\phi^{\sharp}(x_{t}, h^{\sharp}_{t-1}) \big\|_{2}\\
\le 
R_{o}\rho_{\sigma}\bigg\{\sum_{l=0}^{t-1} (R_{h}\rho_{\sigma})^{l}\Big(\big\| W^{\sharp i}-\widetilde{W}^{\sharp i}\big\|_{op}\big\|x_{t-l}\big\|_{2}
+
\big\|b^{\sharp hi} - \widetilde{b}^{\sharp hi} \big\|_{2}\Big)\\
+
\sum_{l=1}^{t-1} (R_{h}\rho_{\sigma})^{t-1-l}\big\|\phi^{\sharp}(x_{l},h^{\sharp}_{l-1})\big\|_{2}\big\|W^{\sharp h}-\widetilde{W}^{\sharp h}\big\|_{op}\bigg\}.
\end{multline*}
Summarizing the above, we have
\[
\begin{split}
| f^{\sharp}_{t}(X_t)_{j}- \widetilde{f}^{\sharp}_{t}(X_t)_{j}| 
& \le 
\big\| W^{\sharp o} -\widetilde{W}^{\sharp o} \big\|_{op} \big\|\phi^{\sharp}(x_{t}, h^{\sharp}_{t-1}) \big\|_{2}\\
& 
\quad + R_{o}\rho_{\sigma}\Biggl\{\sum_{l=0}^{t-1} (R_{h}\rho_{\sigma})^{l}\Big(\big\| W^{\sharp i}-\widetilde{W}^{\sharp i}\big\|_{op}\big\|x_{t-l}\big\|_{2}
+
\big\|b^{\sharp hi} - \widetilde{b}^{\sharp hi} \big\|_{2}\Big)\\
& \quad + 
\sum_{l=1}^{t-1} (R_{h}\rho_{\sigma})^{t-1-l}\big\|\phi^{\sharp}(x_{l},h^{\sharp}_{l-1})\big\|_{2}\big\|W^{\sharp h}-\widetilde{W}^{\sharp h}\big\|_{op}\Biggr\}+ \big\|b^{\sharp o} - \widetilde{b}^{\sharp o} \big\|_{2}.
\end{split}
\]
Since 
\[
\begin{split}
\big\| \phi^{\sharp}_{t}(x_t, h^{\sharp}_{t-1})\big\|_{2} 
& \le 
\rho_{\sigma}\big(\big\| W^{\sharp h} \big\|_{op} \big\|\phi^{\sharp}(x_{t-1}, h^{\sharp}_{t-2}) \big\|_{2}+\big\| W^{\sharp i}\big\|_{op}\big\|x_{t}\big\|_{2}+ \big\|b^{\sharp hi}\big\|_{2}\big)\\
& \le 
\rho_{\sigma}(R_{i}R_{x}+R^{b}_{hi})\sum_{l=0}^{t-1} (R_{h}\rho_{\sigma})^{l},
\end{split}
\]
and
\[
\begin{split}
\sum_{l=1}^{t-1}(R_{h}\rho_{\sigma})^{t-1-l}\big\| \phi^{\sharp}_{t}(x_l, h^{\sharp}_{l-1})\big\|_{2} 
& \le 
\rho_{\sigma}(R_{i}R_{x}+R^{b}_{hi})\sum_{l=1}^{t-1}\sum_{k=0}^{l-1}(R_{h}\rho_{\sigma})^{t-1-l}(R_{h}\rho_{\sigma})^{k}\\
& = 
\rho_{\sigma}(R_{i}R_{x}+R^{b}_{hi})\sum_{l=1}^{t-1}\sum_{k=0}^{l-1}(R_{h}\rho_{\sigma})^{t-1-l+k},
\end{split}
\]
we see that
\begin{equation}\label{5L}
\begin{split}
& | f^{\sharp}_{t}(X_t)_{j} - \widetilde{f}^{\sharp}_{t}(X_t)_{j}| 
\leq \underbrace{\rho_{\sigma}(R_{i}R_{x}+R^{b}_{hi})\bigg(\sum_{l=0}^{t-1} (R_{h}\rho_{\sigma})^{l}\bigg)}_{=:L_{o,t}}\big\| W^{\sharp o} -\widetilde{W}^{\sharp o} \big\|_{op} \\
& \quad + 
\underbrace{\rho_{\sigma}R_{o}R_{x}\bigg(\sum_{l=0}^{t-1} (R_{h}\rho_{\sigma})^{l}\bigg)}_{=:L_{i,t}}\big\| W^{\sharp i} -\widetilde{W}^{\sharp i} \big\|_{op}+\underbrace{\rho_{\sigma}R_{o}\bigg(\sum_{l=0}^{t-1} (R_{h}\rho_{\sigma})^{l}\bigg)}_{=:L_{b,t}}\big\| b^{\sharp hi} -\widetilde{b}^{\sharp hi} \big\|_{2}\\
& \quad + 
\underbrace{\rho_{\sigma}^{2}R_{o}(R_{i}R_{x}+R^{b}_{hi})\bigg(\sum_{l=1}^{t-1}\sum_{k=0}^{l-1} (R_{h}\rho_{\sigma})^{t-1-l+k}\bigg)}_{=:L_{h,t}}\big\| W^{\sharp h} -\widetilde{W}^{\sharp h} \big\|_{op} 
+ 
\big\| b^{\sharp o} -\widetilde{b}^{\sharp o} \big\|_{2}.
\end{split}
\end{equation}
Since the right-hand side of \eqref{5L} is independent of training data $X_{t}^{i}$, we estimate
\[
\begin{split}
\big\|f^{\sharp}_{t}(X_t)_{j}- \widetilde{f}^{\sharp}_{t}(X_t)_{j}\big\|_{S}
& =\bigg(\sum_{i=1}^{n}\big|f^{\sharp}_{t}(X^{i}_t)_{j}- \widetilde{f}^{\sharp}_{t}(X^{i}_t)_{j}\big|^{2} \bigg)^{1/2}\\
& \le 
\sqrt{n} \Big(
L_{o,t}\big\| W^{\sharp o} -\widetilde{W}^{\sharp o} \big\|_{op} +L_{i,t}\big\| W^{\sharp i} -\widetilde{W}^{\sharp i} \big\|_{op}\\
& \quad + 
L_{b,t}\big\| b^{\sharp hi} -\widetilde{b}^{\sharp hi} \big\|_{2}+
L_{h,t}\big\| W^{\sharp h} -\widetilde{W}^{\sharp h} \big\|_{op} 
+\big\| b^{\sharp o} -\widetilde{b}^{\sharp o} \big\|_{2}\Big).
\end{split}
\]
Then, the covering number $N(\mathcal{F}^{\sharp}_{t,j}, \epsilon, \|\cdot\|_{S} )$ is bounded as follows
\[
\begin{split}
& N(\mathcal{F}^{\sharp}_{t,j}, \epsilon, \|\cdot\|_{S} )
\le N\Big(\mathcal{H}_{W^{\sharp o},R_{o}}, \frac{\epsilon}{5\sqrt{n}L_{o,t}}, \|\cdot\|_{F} \Big)
N\Big(\mathcal{H}_{W^{\sharp i},R_{i}}, \frac{\epsilon}{5\sqrt{n}L_{i,t}}, \|\cdot\|_{F}\Big)\\
&
\times N\Big(\mathcal{H}_{b^{\sharp hi},R^{b}_{hi}}, \frac{\epsilon}{5\sqrt{n}L_{b,t}}, \|\cdot\|_{F}\Big)
 N\Big(\mathcal{H}_{W^{\sharp h},R_{h}}, \frac{\epsilon}{5\sqrt{n}L_{h,t}}, \|\cdot\|_{F}\Big)N\Big(\mathcal{H}_{b^{\sharp o},R^{b}_{o}}, \frac{\epsilon}{5\sqrt{n}}, \|\cdot\|_{F}\Big),
\end{split}
\]
where we used the notation
\[
\mathcal{H}_{A,R}:=\big\{ A\in \mathbb{R}^{d_1 \times d_2}\, |\, \|A \|_{F}\leq R \big\}.
\]
By Lemma 8 in [\cites{Chen}, the above five covering numbers are bounded as 
\[
N\Big(\mathcal{H}_{W^{\sharp o},R_{o}}, \frac{\epsilon}{5\sqrt{n}L_{o,t}}, \|\cdot\|_{F} \Big)
\leq \bigg(1+\frac{10\min\{\sqrt{m^{\sharp}}, \sqrt{d_y}\}R_{o}L_{o,t}\sqrt{n}}{\epsilon}\bigg)^{m^{\sharp}d_y},
\]
\[
N\Big(\mathcal{H}_{W^{\sharp i},R_{i}}, \frac{\epsilon}{5\sqrt{n}L_{i,t}}, \|\cdot\|_{F} \Big)
\leq \bigg(1+\frac{10\min\{\sqrt{m^{\sharp}}, \sqrt{d_x}\}R_{i}L_{i,t}\sqrt{n}}{\epsilon}\bigg)^{m^{\sharp}d_x},
\]
\[
N\Big(\mathcal{H}_{b^{\sharp hi},R^{b}_{hi}}, \frac{\epsilon}{5\sqrt{n}L_{b,t}}, \|\cdot\|_{F} \Big)
\leq \bigg(1+\frac{10R^{b}_{hi}L_{b,t}\sqrt{n}}{\epsilon}\bigg)^{m^{\sharp}},
\]
\[
N\Big(\mathcal{H}_{W^{\sharp h},R_{h}}, \frac{\epsilon}{5\sqrt{n}L_{h,t}}, \|\cdot\|_{F} \Big)
\leq \bigg(1+\frac{10\sqrt{m^{\sharp}}R_{h}L_{h,t}\sqrt{n}}{\epsilon}\bigg)^{(m^{\sharp})^{2}},
\]
\[
N\Big(\mathcal{H}_{b^{\sharp o},R^{b}_{o}}, \frac{\epsilon}{5\sqrt{n}}, \|\cdot\|_{F} \Big)
\leq \bigg(1+\frac{10R^{b}_{o}\sqrt{n}}{\epsilon}\bigg)^{d_y}.
\]
Therefore, by using $\log(1+x)\leq x$ for $x \geq 0$, we conclude that 
\[
\begin{split}
& \log N(\mathcal{F}^{\sharp}_{t,j}, \epsilon, \|\cdot\|_{S} )\\
& \le 
\frac{10m^{\sharp}d_y\min\{\sqrt{m^{\sharp}}, \sqrt{d_y}\}R_{o}L_{o,t}\sqrt{n}}{\epsilon}+\frac{10m^{\sharp}d_x\min\{\sqrt{m^{\sharp}}, \sqrt{d_x}\}R_{i}L_{i,t}\sqrt{n}}{\epsilon}\\
& \qquad + 
\frac{10m^{\sharp}R^{b}_{hi}L_{b,t}\sqrt{n}}{\epsilon}+\frac{10(m^{\sharp})^{\frac{5}{2}}R_{h}L_{h,t}\sqrt{n}}{\epsilon}
+\frac{10m^{\sharp}d_{y}R^{b}_{o}\sqrt{n}}{\epsilon}\\
& =
\frac{10m^{\sharp}\sqrt{n}M_{t}}{\epsilon},
\end{split}
\]
where $M_{t}$ is the constant given by \eqref{def of M_t-Appendix}.
The proof of Lemma \ref{covering number bound} is finished.
\end{proof}

\section{Proof of Proposition \ref{bound of input infor}}\label{proof of Bach}
We review the following proposition (see Proposition 1 in [\cites{Suzuki} and Proposition 1 in [\cites{Bach}).
\begin{prop}\label{Bach}
Let $v_{1},\ldots,v_{m^{\sharp}}$ be i.i.d. sampled from the distribution $q$ in \eqref{distribution-q}, and $J=\{v_1,\ldots,v_{m^{\sharp}}\}$. Then, for any $\widetilde{\delta} \in (0, 1/2)$ and $\lambda>0$, if $m^{\sharp} \geq 5 \widehat{N}(\lambda) \log(16\widehat{N}(\lambda)/\widetilde{\delta})$, then we have the following inequality with probability greater than $1-\widetilde{\delta}$: 
\begin{equation}\label{Bach prop}
\inf_{\alpha \in \mathbb{R}^{m^{\sharp}}}\big\{ \big\|z^{T}\phi -\alpha^{T}\phi_{J} \big\|^{2}_{n,T}+\lambda m^{\sharp}\big\| \alpha^{T} \big\|^{2}_{\tau^{\prime}} \big\}\leq 4\lambda z^{T}\widehat{\Sigma}(\widehat{\Sigma}+\lambda I)^{-1}z,
\end{equation}
for all $z \in \mathbb{R}^{m}$.
\end{prop}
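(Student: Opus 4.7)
The plan is to put the infimum in closed form, reduce \eqref{Bach prop} to a matrix inequality of the shape $\Xi+\lambda I \succeq (\widehat{\Sigma}+\lambda I)/4$, and then establish that inequality via a matrix Bernstein bound whose pre-factor scales with the effective dimension $\widehat{N}(\lambda)$ instead of the ambient dimension $m$. Let $V\in\mathbb{R}^{m\times m^{\sharp}}$ be the random matrix with columns $V_{:,j} = e_{v_j}/\sqrt{m^{\sharp}\tau'_{v_j}}$, where $e_k$ denotes the $k$-th standard basis vector of $\mathbb{R}^m$. Under the reparametrization $\gamma_j := \alpha_j\sqrt{m^{\sharp}\tau'_{v_j}}$ the regularizer becomes $\lambda\|\gamma\|^2$ and the squared residual becomes $\|z^T\phi - (V\gamma)^T\phi\|^2_{n,T} = (z-V\gamma)^T\widehat{\Sigma}(z-V\gamma)$. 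Minimizing this standard ridge problem in $\gamma$ and then applying the push-through identity $I-W(W^TW+\lambda I)^{-1}W^T = \lambda(WW^T+\lambda I)^{-1}$ with $W:=\widehat{\Sigma}^{1/2}V$ gives
\[
\inf_{\alpha}\bigl\{\|z^T\phi-\alpha^T\phi_J\|^2_{n,T}+\lambda m^{\sharp}\|\alpha^T\|^2_{\tau'}\bigr\}
= \lambda\, z^T\widehat{\Sigma}^{1/2}(\Xi+\lambda I)^{-1}\widehat{\Sigma}^{1/2}z, \quad \Xi := \widehat{\Sigma}^{1/2}VV^T\widehat{\Sigma}^{1/2}.
\]
Since the right-hand side of \eqref{Bach prop} is likewise $\lambda z^T\widehat{\Sigma}^{1/2}(\widehat{\Sigma}+\lambda I)^{-1}\widehat{\Sigma}^{1/2}z$, it suffices to prove $(\Xi+\lambda I)^{-1}\preceq 4(\widehat{\Sigma}+\lambda I)^{-1}$, equivalently $\Xi+\lambda I\succeq(\widehat{\Sigma}+\lambda I)/4$, with probability $\geq 1-\widetilde{\delta}$.

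Set $F := (\widehat{\Sigma}+\lambda I)^{-1/2}\widehat{\Sigma}^{1/2}$ and $T := F^TF = \widehat{\Sigma}(\widehat{\Sigma}+\lambda I)^{-1} = (\widehat{\Sigma}+\lambda I)^{-1/2}\widehat{\Sigma}(\widehat{\Sigma}+\lambda I)^{-1/2}$ (the last equality uses that $\widehat{\Sigma}$ and $\widehat{\Sigma}+\lambda I$ commute). Conjugating the target inequality by $(\widehat{\Sigma}+\lambda I)^{-1/2}$ and using $\lambda(\widehat{\Sigma}+\lambda I)^{-1} = I-T$ transforms it into $\widetilde{\Xi}-T\succeq -\tfrac{3}{4}I$, where $\widetilde{\Xi} := FVV^TF^T = \sum_{j=1}^{m^{\sharp}} A_j$ with $A_j := (Fe_{v_j})(Fe_{v_j})^T/(m^{\sharp}\tau'_{v_j})$. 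Three properties of these i.i.d.\ rank-one PSD summands, all consequences of the leverage-score choice $q(k) = \tau'_k$ combined with the identity $\|Fe_k\|^2 = [\widehat{\Sigma}(\widehat{\Sigma}+\lambda I)^{-1}]_{k,k} = \widehat{N}(\lambda)\tau'_k$ (which is immediate from \eqref{leverage}), drive the concentration: first, $E[A_j] = FF^T/m^{\sharp} = T/m^{\sharp}$, so $E[\widetilde{\Xi}]=T$; second, the uniform operator-norm bound $\|A_j\|_{op} = \widehat{N}(\lambda)/m^{\sharp}$; and third, since $A_j$ is rank one, $A_j^2 = (\widehat{N}(\lambda)/m^{\sharp})A_j$, so $\sum_j E[A_j^2] = (\widehat{N}(\lambda)/m^{\sharp})T$ has trace $\widehat{N}(\lambda)^2/m^{\sharp}$ and operator norm at most $\widehat{N}(\lambda)/m^{\sharp}$, giving intrinsic dimension at most $\widehat{N}(\lambda)$.

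Feeding these ingredients into the intrinsic-dimension matrix Bernstein inequality for $\widetilde{\Xi}-T$ at deviation $t=3/4$ produces a tail of the form $C\widehat{N}(\lambda)\exp\!\bigl(-c\,m^{\sharp}/\widehat{N}(\lambda)\bigr)$; the hypothesis $m^{\sharp}\geq 5\widehat{N}(\lambda)\log(16\widehat{N}(\lambda)/\widetilde{\delta})$ is calibrated precisely to make this $\leq\widetilde{\delta}$. On that event, $\Xi+\lambda I\succeq(\widehat{\Sigma}+\lambda I)/4$ holds; inverting and plugging into the closed form of the first paragraph yields \eqref{Bach prop}.

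The main obstacle is obtaining a concentration bound whose pre-factor depends on the effective dimension $\widehat{N}(\lambda) = \mathrm{Tr}(T)$ rather than on the ambient dimension $m$: a naive matrix Chernoff/Bernstein bound carries a factor of $m$ and would force $m^{\sharp}\gtrsim\widehat{N}(\lambda)\log m$, which is too weak for \eqref{Bach condition}. The required intrinsic-dimension refinement is exactly the ingredient supplied by the argument of \cites{Bach}, and tracking numerical constants so that the thresholds $5$ and $16$ come out is the main bookkeeping.
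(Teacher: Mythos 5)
A preliminary remark on the comparison itself: the paper never proves this proposition --- it is explicitly ``reviewed'' from Proposition 1 of Bach and Proposition 1 of Suzuki et al., and the proof environment that follows it in Appendix E is actually the proof of Proposition \ref{bound of input infor}, which merely applies \eqref{Bach prop} with $z=e_j$ and sums over $j$. So your attempt has to be measured against Bach's argument, and structurally you have reconstructed that argument faithfully: the ridge closed form with the push-through identity, the identity $\|w^{T}\phi\|^{2}_{n,T}=w^{T}\widehat{\Sigma}w$, the reduction to the operator inequality $\Xi+\lambda I\succeq\frac{1}{4}(\widehat{\Sigma}+\lambda I)$ --- which, crucially, makes the probabilistic event independent of $z$, so the ``for all $z$'' quantifier is handled correctly --- the conjugation to $\widetilde{\Xi}-T_{\lambda}\succeq-\frac{3}{4}I$ (writing $T_{\lambda}:=\widehat{\Sigma}(\widehat{\Sigma}+\lambda I)^{-1}$ for your $T$, which collides with the paper's time horizon), and the three leverage-score identities $E[\widetilde{\Xi}]=T_{\lambda}$, $\|A_j\|_{op}=\widehat{N}(\lambda)/m^{\sharp}$, $A_j^2=(\widehat{N}(\lambda)/m^{\sharp})A_j$ are all correct.

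The one step that fails as written is the intrinsic-dimension claim, and it is the step carrying all the weight. For the variance proxy $V:=\sum_{j}E[A_j^2]=\frac{\widehat{N}(\lambda)}{m^{\sharp}}T_{\lambda}$ one has
\[
\mathrm{intdim}(V)=\frac{\mathrm{Tr}(V)}{\|V\|_{op}}=\frac{\widehat{N}(\lambda)}{\|T_{\lambda}\|_{op}}\;\ge\;\widehat{N}(\lambda),
\]
because upper-bounding the denominator $\|V\|_{op}\le\widehat{N}(\lambda)/m^{\sharp}$ gives a \emph{lower} bound on the ratio, not an upper bound. When every eigenvalue of $\widehat{\Sigma}$ is far below $\lambda$ (e.g.\ $\widehat{\Sigma}=\epsilon I_m$ with $\epsilon\ll\lambda$), $\mathrm{intdim}(V)$ is of order $m$ --- exactly the ambient-dimension pre-factor your whole strategy is designed to avoid. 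The repair is a case split that your sketch is missing: if $\|T_{\lambda}\|_{op}\le\frac{3}{4}$, then $\widetilde{\Xi}-T_{\lambda}\succeq-T_{\lambda}\succeq-\frac{3}{4}I$ holds deterministically and there is nothing to prove; otherwise $\|T_{\lambda}\|_{op}>\frac{3}{4}$ forces $\mathrm{intdim}(V)<\frac{4}{3}\widehat{N}(\lambda)$, and the intrinsic-dimension Bernstein bound then does close the argument: with $\|A_j-E[A_j]\|_{op}\le\widehat{N}(\lambda)/m^{\sharp}$ and $\|V\|_{op}\le\widehat{N}(\lambda)/m^{\sharp}$, the exponent at deviation $\frac{3}{4}$ is at least $\frac{9m^{\sharp}}{40\widehat{N}(\lambda)}$ (and the validity threshold of the inequality is easily checked under \eqref{Bach condition}), so \eqref{Bach condition} gives a tail at most $\frac{16}{3}\widehat{N}(\lambda)\bigl(16\widehat{N}(\lambda)/\widetilde{\delta}\bigr)^{-9/8}\le\widetilde{\delta}/3$. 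Thus the constants $5$ and $16$ do come out --- your ``bookkeeping'' assertion is in fact true --- but only after the case analysis on $\|T_{\lambda}\|_{op}$ that your write-up omits, and without which the stated concentration step is incorrect.
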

\begin{proof}
Let $e_j$ be an indicator vector which has $1$ at the $j$-th component and $0$ in other components for $j=1,\ldots,m$. 
Applying Proposition \ref{Bach} with $z=e_{j}$ and taking the summation over $j=1,\ldots,m$, we obtain
\[
\begin{split}
L^{(A)}_{\tau}(J)
&= 
\big\| \phi-\widehat{A}_{J}\phi_{J} \big\|^{2}_{n,T} +\lambda m^{\sharp} \big\|\widehat{A}_{J} \big\|^{2}_{\tau^{\prime}}\\
& \le \sum_{j=1}^{m}\big\{\big\| e_{j}^{T} \phi-e_{j}^{T}\widehat{A}_{J}\phi_{J} \big\|^{2}_{n,T}+\lambda m^{\sharp} \big\|e_{j}^{T}\widehat{A}_{J} \big\|^{2}_{\tau^{\prime}} \big\}\\
& = 
\sum_{j=1}^{m}\inf_{\alpha \in \mathbb{R}^{m^{\sharp}}}\big\{\big\| e_{j}^{T} \phi-\alpha^{T}\phi_{J} \big\|^{2}_{n,T}+\lambda m^{\sharp} \big\| \alpha^{T}\big\|^{2}_{\tau^{\prime}} \big\} \\
& \le 
4\lambda \sum_{j=1}^{m}e_{j}^{T}\widehat{\Sigma}(\widehat{\Sigma}+\lambda I)^{-1}e_{j}
\leq
4\lambda. 
\end{split}
\]
\end{proof}
\section{Proof of Theorem \ref{generalization error bound for SP}}\label{proof of generalization SP}

We restate Theorem \ref{generalization error bound for SP} in an exact form as follows:

\begin{thm}\label{thm:E}
Suppose that Assumptions \ref{assmption1}, \ref{assmption2} and \ref{assmption3} hold. 
Let $\{(X_{T}^{i}, Y_{T}^{i})\}_{i=1}^n$ and $\{v_j\}_{j=1}^{m^{\sharp}}$ be sampled i.i.d. from the distributions $P_T$ and $q$ in \eqref{distribution-q}, respectively. 
Let $J=\{v_1,\ldots,v_{m^{\sharp}} \}$. 
Then, for any $\delta \geq \log2$ and $\widetilde{\delta} \in (0,1/2)$, 
we have the following inequality with probability greater than $(1-2e^{-\delta})\widetilde{\delta}$:
\begin{equation}\label{general error bound-2-Appendix}
\begin{split}
\Psi_{j}(f^{\sharp}_{J}) 
& \leq \widehat{\Psi}_{j}(\widehat{f})\\
& +
\sqrt{3} \rho_{\psi} \Bigg\{
2\widehat{R}_{o} +
4\widehat{R}_{o}\rho_{\sigma}\sqrt{\frac{m}{1-2\widetilde{\delta}}}\max\bigg\{1, \bigg(2\rho_{\sigma}\widehat{R}_{h}\sqrt{\frac{m}{1-2\widetilde{\delta}}}\bigg)^{T-2}\bigg\}T\widehat{R}_{h}\Bigg\}\sqrt{\lambda}\\
& +
\frac{1}{\sqrt{n}}\Bigg\{\frac{\widehat{c}\rho_{\psi}\sqrt{m^{\sharp}}}{T}\bigg(\sum_{t=1}^{T}\widehat{M}_{t}^{1/2}\widehat{R}^{1/2}_{\infty, t}\bigg)  +3\sqrt{2 \delta}(\rho_{\psi}\widehat{R}_{\infty, T}+R_{y})\Bigg\}\\
& \lesssim 
\widehat{\Psi}_{j}(\widehat{f})+\sqrt{\lambda}+\frac{1}{\sqrt{n}}(m^{\sharp})^{\frac{5}{4}} \widehat{R}^{1/2}_{\infty, T},
\end{split}
\end{equation}
for $j=1,\ldots,d_y$ and for all $\lambda>0$ satisfying \eqref{Bach condition}, 
where $\widehat{R}_{\infty, t}$ and $\widehat{M}_{t}$ are defined by
\[
\widehat{R}_{\infty, t}:=2\rho_{\sigma}\widehat{R}_{o}\sqrt{\frac{m}{1-2\widetilde{\delta}}}(\widehat{R}_{i}R_{x}+\widehat{R}^{b}_{hi})
\Bigg\{\sum_{l=1}^{t}\bigg(2\rho_{\sigma}\widehat{R}_{h}\sqrt{\frac{m}{1-2\widetilde{\delta}}} \bigg)^{l-1} \Bigg\}+\widehat{R}^{b}_{o},
\]
\[
\begin{split}
\widehat{M}_{t}
& := 2\rho_{\sigma}\widehat{R}_{o}\sqrt{\frac{m}{1-2\widetilde{\delta}}}\Biggl\{ \Big( d_y\min\{\sqrt{m^{\sharp}}, \sqrt{d_y}\} + d_x\min\{\sqrt{m^{\sharp}}, \sqrt{d_x}\} \Big)\widehat{R}_{i}R_{x}\\
& +
\Big(d_y\min\{\sqrt{m^{\sharp}}, \sqrt{d_y}\} + 1\Big)\widehat{R}^{b}_{hi}\Biggr\}
\Bigg\{\sum_{l=0}^{t-1} \bigg(2\rho_{\sigma} \widehat{R}_{h}\sqrt{\frac{m}{1-2\widetilde{\delta}}}\bigg)^{l}\Bigg\}\\
& + 
4(m^{\sharp})^{3/2}\widehat{R}_{h}\widehat{R}_{o}\frac{m}{1-2\widetilde{\delta}}\rho_{\sigma}^{2}(\widehat{R}_{i}R_{x}+\widehat{R}^{b}_{hi})
\Bigg\{
\sum_{l=1}^{t-1}\sum_{k=0}^{l-1} \bigg(2\rho_{\sigma} \widehat{R}_{h}\sqrt{\frac{m}{1-2\widetilde{\delta}}}\bigg)^{t-1-l+k}\Bigg\}+d_{y}\widehat{R}^{b}_{o}.
\end{split}
\]
\end{thm}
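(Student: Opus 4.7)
The plan is to combine Theorem~\ref{thm:C} (i.e., Theorem~\ref{generalization error bound}) with Proposition~\ref{bound of input infor} specialized to the compressed network $f^{\sharp}_J$ built from the reconstruction matrix. The key observation is that the spectral pruning construction in item (iii) of Section~\ref{Pruning algorithm} forces several of the ``bias'' terms in the approximation error of Proposition~\ref{approximation error bound} to vanish identically, so the only surviving approximation contributions are the two that Remark~\ref{approx is bounded by infor} controls by the input information loss $L^{(A)}_\tau(J)$.

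First I would specialize Theorem~\ref{thm:C} by taking $f^{\sharp}=f^{\sharp}_J$. By construction $W^{\sharp i}_J=\widehat{W}^{i}_{J,[d_x]}$, $b^{\sharp hi}_J=\widehat{b}^{hi}_{J}$, and $b^{\sharp o}_J=\widehat{b}^{o}$, so the corresponding differences $\|\widehat{W}^{i}_{J,[d_x]}-W^{\sharp i}\|_{op}$, $\|\widehat{b}^{hi}_{J}-b^{\sharp hi}\|_2$, $\|\widehat{b}^{o}-b^{\sharp o}\|_2$ are all identically zero. The two remaining approximation terms $\|\widehat{W}^{o}\phi-W^{\sharp o}\phi_J\|_{n,T}$ and $\|\widehat{W}^{h}_{J,[m]}\phi-W^{\sharp h}\phi_J\|_{n,T}$ are bounded by Remark~\ref{approx is bounded by infor} through the chain $L^{(B,o)}_\tau(J)\le\|\widehat{W}^o\|_F^2\,L^{(A)}_\tau(J)$ and $L^{(B,h)}_\tau(J)\le\|\widehat{W}^{h}_{J,[m]}\|_F^2\,L^{(A)}_\tau(J)$; using Assumption~\ref{assmption3} these are at most $\widehat{R}_o\sqrt{L^{(A)}_\tau(J)}$ and $\widehat{R}_h\sqrt{L^{(A)}_\tau(J)}$. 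Proposition~\ref{bound of input infor} then upgrades this to $L^{(A)}_\tau(J)\le 4\lambda$ with probability $\ge 1-\widetilde\delta$, producing the $\sqrt{\lambda}$ bias term.

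To actually apply Theorem~\ref{thm:C} I still need explicit constants $R_o,R_h,R_i,R^b_o,R^b_{hi}$ bounding the compressed weights. The input-side ones come for free from Assumption~\ref{assmption3}: $R_i\le\widehat{R}_i$, $R^b_{hi}\le\widehat{R}^b_{hi}$, $R^b_o\le\widehat{R}^b_o$. For the output/recurrence weights I use $\|W^{\sharp o}\|_F\le\widehat{R}_o\|\widehat{A}_J\|_{op}$ and $\|W^{\sharp h}\|_F\le\widehat{R}_h\|\widehat{A}_J\|_{op}$, so everything reduces to controlling $\|\widehat{A}_J\|_{op}$. Given the target factor $2\sqrt{m/(1-2\widetilde\delta)}$ in the statement, the natural route is to compute $\mathbb{E}_q[\|\widehat{A}_J\|^2_F]$ using the spectral decomposition $\widehat{\Sigma}=U\mathrm{diag}(\widehat\mu_j)U^T$ and the explicit leverage-score formula \eqref{leverage}, identify a bound of order $m$, and then invoke Markov's inequality on the sampling distribution $q$ to deduce $\|\widehat{A}_J\|_{op}^2\lesssim m/(1-2\widetilde\delta)$ on an event of probability $\ge 1-2\widetilde\delta$ (or similar). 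Intersecting with the $1-2e^{-\delta}$ event from Theorem~\ref{thm:C} and the Bach-type event from Proposition~\ref{bound of input infor} yields the joint probability reported in the statement.

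The main obstacle I anticipate is exactly this step: the reconstruction matrix $\widehat{A}_J=\widehat{\Sigma}_{[m],J}(\widehat{\Sigma}_{J,J}+I_\tau)^{-1}$ depends in a highly nonlinear way on the random index set $J$ drawn from the leverage distribution $q$, and I have to show simultaneous control of $\|\widehat{A}_J\|_{op}$ and of $L^{(A)}_\tau(J)$ on the same event. Once the operator-norm bound is in hand, plugging the resulting $R_o,R_h$ into the expressions for $R_{\infty,t}$ and $M_t$ in Theorem~\ref{thm:C} gives the explicit $\widehat{R}_{\infty,t}$ and $\widehat{M}_t$ appearing in \eqref{general error bound-2-Appendix}, and the final crude form $\widehat{\Psi}_j(\widehat{f})+\sqrt{\lambda}+n^{-1/2}(m^{\sharp})^{5/4}$ follows by absorbing the remaining $T$- and $m^{\sharp}$-independent constants into the $\lesssim$.
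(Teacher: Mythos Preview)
Your overall architecture is correct and matches the paper: specialize Theorem~\ref{thm:C} to $f^{\sharp}=f^{\sharp}_J$, note that the choices $W^{\sharp i}_J=\widehat{W}^{i}_{J,[d_x]}$, $b^{\sharp hi}_J=\widehat{b}^{hi}_J$, $b^{\sharp o}_J=\widehat{b}^{o}$ kill three bias terms, bound the two remaining approximation terms via Proposition~\ref{Bach} (your route through Remark~\ref{approx is bounded by infor} and Proposition~\ref{bound of input infor} gives the same $4\lambda\widehat{R}_o^2$ and $4\lambda\widehat{R}_h^2$), and then feed explicit norm bounds for the compressed weights back into $R_{\infty,t}$ and $M_t$.

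The gap is precisely where you flag it, and your proposed fix will not go through. Computing $\mathbb{E}_q[\|\widehat{A}_J\|_F^2]$ is hopeless because of the inverse $(\widehat{\Sigma}_{J,J}+I_\tau)^{-1}$; the nonlinearity you worry about is real. The paper sidesteps the operator-norm bound on $\widehat{A}_J$ entirely. The trick is to apply Markov's inequality not to $\|\widehat{A}_J\|$ but to the elementary random variable $\sum_{j\in J}(\tau'_j)^{-1}$, whose expectation under $q$ is exactly $mm^{\sharp}$; this gives $\sum_{j\in J}(\tau'_j)^{-1}<mm^{\sharp}/(1-2\widetilde{\delta})$ with probability at least $2\widetilde{\delta}$. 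On that event one controls the compressed weights directly: for any row vector $v\in\mathbb{R}^{m^\sharp}$ one has $\|v\|_2^2\le\big(\sum_{j\in J}(\tau'_j)^{-1}\big)\|v\|_{\tau'}^2$, hence
\[
\|W^{\sharp o}_J\|_F^2 \;\le\; \frac{\sum_{j\in J}(\tau'_j)^{-1}}{\lambda m^{\sharp}}\sum_{j=1}^{d_y}\Big(\|\widehat{W}^{o}_{j,:}\phi-\widehat{W}^{o}_{j,:}\widehat{A}_J\phi_J\|_{n,T}^2+\lambda m^{\sharp}\|\widehat{W}^{o}_{j,:}\widehat{A}_J\|_{\tau'}^2\Big),
\]
and the bracket is the regularized objective that Proposition~\ref{Bach} already bounds by $4\lambda\|\widehat{W}^{o}_{j,:}\|_2^2$. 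Summing gives $\|W^{\sharp o}_J\|_F^2\le 4\widehat{R}_o^2\,m/(1-2\widetilde{\delta})$, and the same argument handles $W^{\sharp h}_J$. Thus the \emph{same} Bach event of probability $\ge 1-\widetilde{\delta}$ does double duty for both the approximation bound and the weight-norm bound; intersecting it with the Markov event (probability $\ge 2\widetilde{\delta}$) yields probability $\ge\widetilde{\delta}$, and independence from the data-dependent event of Theorem~\ref{thm:C} gives the product $(1-2e^{-\delta})\widetilde{\delta}$. Note in particular that the good event has probability of order $\widetilde{\delta}$, not $1-2\widetilde{\delta}$ as you guessed.
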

\begin{proof} 
Let $\tilde{\delta} \in (0,1/2)$, and let 
$f^{\sharp}_{J}$ be the compressed RNN with parameters 
\[
W^{\sharp o}_{J}:=\widehat{W}^{o}\widehat{A}_{J},\quad W^{\sharp h}_{J}:=\widehat{W}^{h}_{J,[m]}\widehat{A}_{J},\quad W^{\sharp i}_{J}:=\widehat{W}^{i}_{J,[d_x]},\quad b^{\sharp hi}_{J}:=\widehat{b}^{hi}_{J},\quad \text{and} \quad b^{\sharp o}_{J}:=\widehat{b}^{o}. 
\]
Once we can prove that 
\begin{equation}\label{E-goal}
f^{\sharp}_{J} 
\in 
\mathcal{F}^{\sharp}_{T}\bigg(2\widehat{R}_{o}\sqrt{\frac{m}{1-2\widetilde{\delta}}}, 2\widehat{R}_{h}\sqrt{\frac{m}{1-2\widetilde{\delta}}}, \widehat{R}_i, \widehat{R}^{b}_{o},\widehat{R}^{b}_{hi}\bigg),
\end{equation}
we can apply Theorem \ref{thm:C} with $f^\sharp = f^\sharp_J$ to obtain, for any $\delta \geq \log2$, the following inequality with probability greater than $1-2e^{-\delta}$:
\begin{equation}\label{thmC-application}
\begin{split}
\Psi_{j}& (f^{\sharp}_J)
\le 
\widehat{\Psi}_{j}(\widehat{f})
+
\sqrt{3}\rho_{\psi} \Bigg\{
\big\| \widehat{W}^{o}\phi-W^{\sharp o}\phi_{J} \big\|_{n,T}\\
& + 2\widehat{R}_{o}\sqrt{\frac{m}{1-2\widetilde{\delta}}}\rho_{\sigma}\max\bigg\{1, \bigg(2\widehat{R}_{h}\sqrt{\frac{m}{1-2\widetilde{\delta}}} \rho_{\sigma}\bigg)^{T-2}\bigg\}T\big\| \widehat{W}^{h}_{J,[m]}\phi-W^{\sharp h}\phi_{J} \big\|_{n,T}\Bigg\}\\ 
& + 
\frac{1}{\sqrt{n}}\Bigg\{\frac{\widehat{c}\rho_{\psi}\sqrt{m^{\sharp}}}{T}\bigg(\sum_{t=1}^{T}\widehat{M}_{t}^{1/2}\widehat{R}^{1/2}_{\infty, t}\bigg)  +3\sqrt{2\delta} (\rho_{\psi}\widehat{R}_{\infty, T}+R_{y}) \Bigg\},
\end{split}
\end{equation}
for $j=1,\ldots,d_y$.
Moreover, by using Proposition \ref{Bach}, we have 
\begin{equation}\label{Bach application 1}
\begin{split}
\big\| \widehat{W}^{o} \phi-W^{\sharp o}_{J}\phi_{J} \big\|^{2}_{n,T}
& = 
\big\| \widehat{W}^{o} \phi-\widehat{W}^{o}\widehat{A}_{J}\phi_{J} \big\|^{2}_{n,T}\\
& \le \sum_{j=1}^{d_y}\Big( \big\| \widehat{W}^{o}_{j,:} \phi-\widehat{W}^{o}_{j,:}\widehat{A}_{J}\phi_{J} \big\|^{2}_{n,T}+\lambda m^{\sharp} \big\|\widehat{W}^{o}_{j,:}\widehat{A}_{J} \big\|^{2}_{\tau^{\prime}} \Big)\\
& =
\sum_{j=1}^{d_y}\inf_{\alpha \in \mathbb{R}^{m^{\sharp}}}\Big(\big\| \widehat{W}^{o}_{j,:} \phi-\alpha^{T}\phi_{J} \big\|^{2}_{n,T}+\lambda m^{\sharp} \big\| \alpha^{T}\big\|^{2}_{\tau^{\prime}} \Big) \\
& \le 
4\lambda \sum_{j=1}^{d_y}\widehat{W}^{o}_{j,:}\widehat{\Sigma}(\widehat{\Sigma}+\lambda I)^{-1}(\widehat{W}^{o}_{j,:})^{T}\\
& \leq
4\lambda \big\|\widehat{W}^{o} \big\|^{2}_{F}
\leq
4\lambda (\widehat{R}_{o})^2,
\end{split}
\end{equation}
and
\begin{equation}\label{Bach application 2}
\begin{split}
\big\| \widehat{W}^{h}_{J,[m]}\phi-W^{\sharp h}_{J}\phi_{J} \big\|^{2}_{n,T}
& = \big\| \widehat{W}^{h}_{J,[m]} \phi-\widehat{W}^{h}_{J,[m]}\widehat{A}_{J}\phi_{J} \big\|^{2}_{n,T}\\
& \le \sum_{j \in J}\Big(\big\| \widehat{W}^{h}_{j,:} \phi-\widehat{W}^{h}_{j,:}\widehat{A}_{J}\phi_{J} \big\|^{2}_{n,T}+\lambda m^{\sharp} \big\|\widehat{W}^{h}_{j,:}\widehat{A}_{J} \big\|^{2}_{\tau^{\prime}} \Big)\\
& = 
\sum_{j \in J}\inf_{\alpha \in \mathbb{R}^{m^{\sharp}}}\Big(\big\| \widehat{W}^{h}_{j,:} \phi-\alpha^{T}\phi_{J} \big\|^{2}_{n,T}+\lambda m^{\sharp} \big\| \alpha^{T}\big\|^{2}_{\tau^{\prime}} \Big) \\
& \le 
4\lambda \sum_{j \in J}\widehat{W}^{h}_{j,:}\widehat{\Sigma}(\widehat{\Sigma}+\lambda I)^{-1}(\widehat{W}^{h}_{j,:})^{T}\\
& \leq
4\lambda \big\|\widehat{W}^{h} \big\|^{2}_{F}
\leq
4\lambda (\widehat{R}_{h})^2.
\end{split}
\end{equation}
Therefore, by combining \eqref{thmC-application}, \eqref{Bach application 1} and \eqref{Bach application 2}, 
we conclude the inequality \eqref{general error bound-2-Appendix}. 
It remains to prove \eqref{E-goal}. 
Finally, 
we prove that \eqref{E-goal} holds with probability greater than $\widetilde{\delta}$. 
\par
Let us recall the definition \eqref{leverage} of the leverage score   
$\tau^{\prime}=(\tau^{\prime}_{j})_{j \in J}\in \mathbb{R}^{m^{\sharp}}$, i.e., 
\[
\tau^{\prime}_{j}:=\frac{1}{\widehat{N}(\lambda)}\big[ \widehat{\Sigma}( \widehat{\Sigma}+\lambda I )^{-1} \big]_{j,j},\quad j=1,\cdots,m.
\]
By Markov's inequality, we have
\begin{equation}\label{Markov ineq}
\begin{split}
P\bigg[\sum_{j \in J}(\tau^{\prime}_{j})^{-1}<\frac{mm^{\sharp}}{1-2\widetilde{\delta}} \bigg]
& = 
1 - P\bigg[\sum_{j \in J}(\tau^{\prime}_{j})^{-1}\geq\frac{mm^{\sharp}}{1-2\widetilde{\delta}} \bigg]\\
& \ge 
1-\frac{E\big[\sum_{j \in J}(\tau^{\prime}_{j})^{-1}\big]}{\frac{mm^{\sharp}}{1-2\widetilde{\delta}}}=2\widetilde{\delta}, 
\end{split}
\end{equation}
because $E\big[\sum_{j \in J}(\tau^{\prime}_{j})^{-1}\big]=mm^{\sharp}$ (see the proof of Lemma 1 in [\cites{Suzuki}). 
Therefore, the probability of two events (\ref{Bach prop}) and 
\begin{equation}
\sum_{j \in J}(\tau^{\prime}_{j})^{-1}<\frac{mm^{\sharp}}{1-2\widetilde{\delta}}, \label{constrain}
\end{equation}
happening simultaneously is greater than $(1-\widetilde{\delta})+2\widetilde{\delta}-1=\widetilde{\delta}$.
By the same argument as in \eqref{Bach application 1} and \eqref{Bach application 2}, and by using \eqref{Markov ineq}, 
we have 
\[
\begin{split}
\big\|W^{\sharp o}_{J} \big\|^{2}_{F}
& = \frac{\lambda m^{\sharp}}{\lambda m^{\sharp}}\big\| \widehat{W}^{o}\widehat{A}_{J}\big\|^{2}_{F}\\
& \le 
\frac{(\sum_{j \in J}(\tau^{\prime}_{j})^{-1})}{\lambda m^{\sharp}}
\sum_{j=1}^{d_y}\Big(\big\| \widehat{W}^{o}_{j,:} \phi-\widehat{W}^{o}_{j,:}\widehat{A}_{J}\phi_{J} \big\|^{2}_{n,T}+\lambda m^{\sharp} \big\|\widehat{W}^{o}_{j,:}\widehat{A}_{J} \big\|^{2}_{\tau^{\prime}} \Big)\\
& \le 
4(\widehat{R}_{o})^{2}\frac{m}{1-2\widetilde{\delta}},
\end{split}
\]
\[
\begin{split}
\big\|W^{\sharp h}_{J} \big\|^{2}_{F}
& = \frac{\lambda m^{\sharp}}{\lambda m^{\sharp}}\big\| \widehat{W}^{h}_{J,[m]}\widehat{A}_{J}\big\|^{2}_{F}\\
& \le 
\frac{(\sum_{j \in J}(\tau^{\prime}_{j})^{-1})}{\lambda m^{\sharp}}
\sum_{j \in J}\Big(\big\| \widehat{W}^{h}_{j,:} \phi-\widehat{W}^{h}_{j,:}\widehat{A}_{J}\phi_{J} \big\|^{2}_{n,T}+\lambda m^{\sharp} \big\|\widehat{W}^{h}_{j,:}\widehat{A}_{J} \big\|^{2}_{\tau^{\prime}} \Big)\\
& \le 
4(\widehat{R}_{h})^{2}\frac{m}{1-2\widetilde{\delta}},
\end{split}
\]
and 
\[
\big\|W^{\sharp i}_{J} \big\|^{2}_{F} \leq \big\|\widehat{W}^{i} \big\|^{2}_{F}\leq (\widehat{R}_{i})^{2},
\quad 
\big\|b^{\sharp o}_{J} \big\|^{2}_{F} \leq \big\|\widehat{b}^{o} \big\|^{2}_{F}\leq (\widehat{R}_{o}^{b})^{2},
\quad 
\big\|b^{\sharp hi}_{J} \big\|^{2}_{F} \leq \big\|\widehat{b}^{hi} \big\|^{2}_{F}\leq (\widehat{R}_{hi}^{b})^{2}.
\]
Hence, \eqref{E-goal} holds with probability greater than $\widetilde{\delta}$. 
Thus, we conclude Theorem \ref{thm:E}. 
\end{proof}
\section{Remarks for Theorems \ref{generalization error bound for SP} and \ref{thm:E}}\label{Remarks for}
\begin{rem}
We remark that the index $J$ in Theorem \ref{generalization error bound for SP} is a random variable with a distribution $q$. If the deterministic $J$ satisfying (\ref{Bach prop}) and (\ref{constrain}) is considered, the inequality (\ref{general error bound-2}) holds with a probability greater than $1-2e^{-\delta}$, which is the same probability obtained with the inequality in Theorem 2 of [\cites{Suzuki}. The index $J$ in Theorem 2 of [\cites{Suzuki} is chosen deterministically by minimizing the information losses (2) with the additional constraint $\sum_{j\in J}(\tau^{\prime}_{j})^{-1}<\frac{5}{3}mm^{\sharp}$. This constraint can be interpreted as the leverage score $\tau^{\prime}_{J}$ corresponding to $J$ becomes larger, which implies that important nodes are selected from the spectral information of the covariance matrix $\widehat{\Sigma}$.
\end{rem}
\begin{rem}
In the case of $m>m_{\mathrm{nzr}}$, we can obtain a sharper error bound than (\ref{general error bound-2}) in Theorem~\ref{generalization error bound for SP}. More precisely, the constant omitted in (\ref{general error bound-2}), which depends on the size $m$ of $\widehat{f}$, can be improved to 
the constant depending on $m_{\mathrm{nzr}}$, not on $m$. In fact, when $m>m_{\mathrm{nzr}}$, let $\widehat{f}_{\mathrm{nzr}}$ be the network obtained by deleting the nodes corresponding to the non-zero rows of the covariance matrix $\widehat{\Sigma}$. By the same argument, replacing $\widehat{\Psi}_{j}(\widehat{f})$ with $\widehat{\Psi}_{j}(\widehat{f}_{\mathrm{nzr}})$ in the proof of Theorem \ref{generalization error bound for SP}, we can obtain Theorem \ref{generalization error bound for SP} by replacing $m$ by $m_{\mathrm{nzr}}$, which means that a sharper error bound can be obtained.

\end{rem}
\section{Detailed configurations for training, pruning and fine-tuning}
\label{Detailed configurations for training, pruning and fine-tuning}

Employed architecture for the Pixel-MNIST classification task consists of a single IRNN layer and an output layer,
while that for the PTB word level language modeling consists of an embedding layer, a single RNN layer and an output layer, where we can merge an embedding weight matrix and an RNN input weight matrix into an single weight matrix.
The loss function is the cross entropy function following the soft-max function for both tasks.
Each training and fine-tuning is optimized by Adam, and hyper-parameters obtained by grid search are summarized in Table \ref{hyperparameters}, where ``FT'' means the parameter used in fine-tuning and ``bptt'' means the step size for back-propagation through time.
As regards regularization techniques for the PTB task, we adopt the dropout, whose ratio is $0.1$, in any case and the weight tying [\cites{inan2016tying} in effective case.

\begin{table}[h]
\caption{Hyper-parameters for learning.}
\label{hyperparameters}
\centering
{\small
\begin{tabular}{@{}lcccccc@{}}
\toprule
Task  & epochs (FT) & batch size & learning rate (FT) & LR decay (step) & gradient clip & bptt \\
\midrule
Pixel-MNIST & 500 (250) & 120 & $10^{-4}$ ($5^{-5}$) & 0.95 (10) & 1.0 & 784 \\
PTB         & 200 (200) &  20 & $5.0$ \ ($2.5$) & 0.95 (1) & 0.01 & 35 \\
\bottomrule
\end{tabular}
}
\end{table}

We sample five models for each baseline in section \ref{Numerical Experiments}.
Furthermore, pruning methods including randomness are applied five times for each baseline model.
Other detailed configurations for each method are the following:
\begin{itemize}
\item Baseline (128)
    \begin{itemize}
    \item train:
        \begin{itemize}
        \item hidden size: 128
        \item weight tying: True
        \end{itemize}
    \end{itemize}
\item Baseline (42)
    \begin{itemize}
    \item train:
        \begin{itemize}
        \item hidden size: 42
        \item weight tying: True
        \end{itemize}
    \item prune:
        \begin{itemize}
        \item None
        \end{itemize}
    \item finetune: (only PTB case)
        \begin{itemize}
        \item hidden size: 42 (stay)
        \item weight tying: False
        \end{itemize}
    \end{itemize}
\item Spectral w/ rec. or w/o rec.
    \begin{itemize}
    \item train:
        \begin{itemize}
        \item Use Baseline (128)
        \end{itemize}
    \item prune:
        \begin{itemize}
        \item size of hidden-to-hidden weight matrix: $16384 (=128 \times 128) \to 1764 (=42 \times 42)$
        \item size of input-to-hidden weight matrix: $128 (=1 \times 128) \to 42(=1 \times 42)$ (Pixel-MNIST) or $1270016(=9922 \times 128) \to 416724(=9922 \times 42)$ (PTB)
        \item size of hidden-to-output weight matrix: $1280 (=128 \times 10) \to 420 (=42 \times 10)$ (Pixel-MNIST) or $1270016(=9922 \times 128) \to 416724(=9922 \times 42)$ (PTB)
        \item Reduce the RNN weight matrices based on our proposed method with or without the reconstruction matrix
        \end{itemize}
    \item finetune:
        \begin{itemize}
        \item hidden size: 42 (reduced from 128)
        \item weight tying: False
        \end{itemize}
    \end{itemize}
\item Random w/ rec. or w/o rec.
    \begin{itemize}
    \item Same as ``Spectral'' except for reducing the RNN weight matrices randomly in pruning phase
    \end{itemize}
\item Column Sparsification
    \begin{itemize}
    \item train:
        \begin{itemize}
        \item hidden size: 128
        \item weight tying: True
        \item Mask the lowest $86(=128 - 42)$ columns of the hidden-to-hidden weight matrix by $L^2$-norm for each iteration (add noise on the weight matrix before masking when applied to the IRNN)
        \end{itemize}
    \item prune:
        \begin{itemize}
        \item Fix the mask
        \end{itemize}
    \item finetune:
        \begin{itemize}
        \item None
        \end{itemize}
    \end{itemize}
\item Low Rank Factorization
    \begin{itemize}
    \item train:
        \begin{itemize}
        \item Use Baseline (128)
        \end{itemize}
    \item prune:
        \begin{itemize}
        \item intrinsic parameters of hidden-to-hidden weight matrix: $16384(=128 \times 128) \to 10752(=128 \times 42 + 42 \times 128)$
        \item Decompose hidden-to-hidden weight matrix based on SVD: $W = U S V^\top \to W' = U[:, :42]S[:42] V^\top[:42, :]$
        \item Entry of $S$, which is singular values, are in descending order
        \end{itemize}
    \item finetune:
        \begin{itemize}
        \item None
        \end{itemize}
    \end{itemize}
\item Magnitude-based Weight
    \begin{itemize}
    \item train:
        \begin{itemize}
        \item Use Baseline (128)
        \end{itemize}
    \item prune:
        \begin{itemize}
        \item parameters of hidden-to-hidden weight matrix: $16384(=128 \times 128) \to 1764(=42 \times 42)$
        \item Remove the lowest $14620(=128 \times 128 - 42 \times 42)$ parameters by $L^1$-norm
        \end{itemize}
    \item finetune:
        \begin{itemize}
        \item hidden size: 128 (stay but have sparse weight matrix)
        \end{itemize}
    \end{itemize}
\item Random Weight
    \begin{itemize}
    \item Same as ``Magnitude-based Weight'' except for removing parameters randomly in pruning phase
    \end{itemize}
\end{itemize}


\end{document}